\documentclass{article} % For LaTeX2e
\usepackage{iclr2023_conference,times}

% Optional math commands from https://github.com/goodfeli/dlbook_notation.
%%%%% NEW MATH DEFINITIONS %%%%%

\usepackage{amsmath,amsfonts,bm}

% Mark sections of captions for referring to divisions of figures

% Highlight a newly defined term

% Figure reference, lower-case.

% Figure reference, capital. For start of sentence

% Section reference, lower-case.

% Section reference, capital.

% Reference to two sections.

% Reference to three sections.

% Reference to an equation, lower-case.
\def\eqref#1{equation~\ref{#1}}
% Reference to an equation, upper case

% A raw reference to an equation---avoid using if possible

% Reference to a chapter, lower-case.

% Reference to an equation, upper case.

% Reference to a range of chapters

% Reference to an algorithm, lower-case.

% Reference to an algorithm, upper case.

% Reference to a part, lower case

% Reference to a part, upper case

\def\1{\bm{1}}

% Random variables

% rm is already a command, just don't name any random variables m

% Random vectors

% Elements of random vectors

% Random matrices

% Elements of random matrices

% Vectors

% Elements of vectors

% Matrix

% Tensor
\DeclareMathAlphabet{\mathsfit}{\encodingdefault}{\sfdefault}{m}{sl}
\SetMathAlphabet{\mathsfit}{bold}{\encodingdefault}{\sfdefault}{bx}{n}

% Graph

% Sets

% Don't use a set called E, because this would be the same as our symbol
% for expectation.

% Entries of a matrix

% entries of a tensor
% Same font as tensor, without \bm wrapper

% The true underlying data generating distribution

% The empirical distribution defined by the training set

% The model distribution

% Stochastic autoencoder distributions

 % Laplace distribution

\newcommand{\reg}{\lambda}

% Wolfram Mathworld says $L^2$ is for function spaces and $\ell^2$ is for vectors
% But then they seem to use $L^2$ for vectors throughout the site, and so does
% wikipedia.

\newcommand{\ind}{\mathbbm{1}}
\newcommand{\probP}{\text{I\kern-0.15em P}}
 % See usage in notation.tex. Chosen to match Daphne's book.

\DeclareMathOperator*{\argmax}{arg\,max}
\DeclareMathOperator*{\argmin}{arg\,min}

\DeclareMathOperator{\sign}{sign}

\DeclareMathOperator{\proj}{proj}
\def\delequal{\mathrel{\ensurestackMath{\stackon[1pt]{=}{\scriptstyle\Delta}}}}

\usepackage[colorlinks=true,linkcolor=blue,citecolor=green,urlcolor=blue]{hyperref}
\usepackage{url}
\usepackage{hyperref}
\usepackage{url}
\usepackage{natbib}
\usepackage{graphicx}
\usepackage{amsmath}               
\usepackage{amsthm}
\usepackage{mathtools}
\usepackage{bbm}
\usepackage{xcolor}
\usepackage{algorithm}
\usepackage{algorithmic}
\usepackage{scalerel}
\setlength\parindent{0pt}
\usepackage{stackengine}

\usepackage[outdir=./]{epstopdf}
\usepackage{graphicx,float,pgfplots,wrapfig,sidecap,lipsum}

\usepackage{colortbl}
\usepackage{tablefootnote}
\usepackage[font=small,labelfont=bf]{caption}
\usepackage{subcaption}
\usepackage{tikz}
\usetikzlibrary{fit}
\usetikzlibrary{calc,shapes}
\usetikzlibrary{decorations.pathmorphing} % noisy shapes
\usetikzlibrary{fit}					% fitting shapes to coordinates
\usetikzlibrary{backgrounds}	% drawing the background after the foreground
\usetikzlibrary{pgfplots.groupplots}

\usepackage[utf8]{inputenc}
\usepackage{pgfplots}
\DeclareUnicodeCharacter{2212}{−}
\usepgfplotslibrary{groupplots,dateplot}
\usetikzlibrary{patterns,shapes.arrows}
\pgfplotsset{compat=newest}
\usepackage{booktabs} % for professional tables
\usepackage{xcolor}
\usepackage{colortbl}
\usepackage{multirow}
\usepackage{array}
\usepackage{makecell}
\usepackage{caption}
\usepackage{soul}

% \newtheorem{theorem}{Theorem}
% \newtheorem{lemma}[theorem]{Lemma}
% \newtheorem{assumption}[theorem]{Assumption}
% \newtheorem{definition}[theorem]{Definition}
% \newtheorem{corollary}[theorem]{Corollary}
% \newtheorem{proposition}[theorem]{Proposition}

%%%%%%%%%%%%%%%%%%%%%%%%%%%%%%%%
% THEOREMS
%%%%%%%%%%%%%%%%%%%%%%%%%%%%%%%%
% \theoremstyle{plain}
\newtheorem{theorem}{Theorem}[section]

\newtheorem{corollary}[theorem]{Corollary}
\newtheorem{definition}[theorem]{Definition}
\newtheorem{assumption}[theorem]{Assumption}
\theoremstyle{remark}

\AtBeginEnvironment{proof}{}{}{}

\newcommand{\lossreg}{\mathcal{L}_{\mathrm{reg}}}

% colors
\definecolor{sourcecolor}{rgb}{0.5,1,0.5}
\definecolor{ourcolor}{rgb}{1,0,0}
\definecolor{singlecolor}{rgb}{0,0,1}
\definecolor{auxcolor}{rgb}{0.54,0.17,0.89}
\definecolor{linearcolor}{rgb}{0.172549019607843,0.627450980392157,0.172549019607843}
\definecolor{randomcolor}{rgb}{1,0.498039215686275,0.0549019607843137}
% \definecolor{tunecolor}{rgb}{0.79,0.88,0.9}
\definecolor{tunecolor}{rgb}{0.9568627450980393, 0.8156862745098039,0}
\definecolor{aligncolor}{rgb}{0,0.5,0}

\newcommand{\mytitle}{
Is Model Ensemble Necessary?
Model-based RL via a Single Model with Lipschitz Regularized Value Function}

\title{\mytitle}

% 1. A Single Lipschitz Regularized Model-based RL Matches the Performance of a probabilistic ensemble model
% 2. Is Ensemble Necessary? Model-based RL via a Single Lipschitz Regulated Model
% 

% Adversarially Regulated Deterministic Model Outperforms Probabilistic Ensembles
% Replacing Probabilitic model Ensemble with Adversarially Regulated Deterministic 
% Is probabilistic ensemble necessary? model-based RL with single adversarially regulated deterministic model
% Is Probabilistic Ensemble Necessary? Sample Efficient RL via Single Lipschitz Regulated Deterministic Model

% Authors must not appear in the submitted version. They should be hidden
% as long as the \iclrfinalcopy macro remains commented out below.
% Non-anonymous submissions will be rejected without review.
\iclrfinalcopy

\author{
Ruijie Zheng\textsuperscript{\rm 1, \textsection} 
\qquad
Xiyao Wang\textsuperscript{\rm 1, \textsection} 
\qquad
Huazhe Xu\textsuperscript{\rm 2, \rm 3}
\qquad
Furong Huang\textsuperscript{\rm 1}  \\
\textsuperscript{\rm 1} \text{University of Maryland, College Park} \quad \texttt{\{rzheng12, xywang, furongh\}@umd.edu} \\
\textsuperscript{\rm 2} Tsinghua University \quad  \texttt{huazhe\_xu@mail.tsinghua.edu.cn}   \\
\textsuperscript{\rm 3} Shanghai Qi Zhi Institute \\
}

% The \author macro works with any number of authors. There are two commands
% used to separate the names and addresses of multiple authors: \And and \AND.
%
% Using \And between authors leaves it to \LaTeX{} to determine where to break
% the lines. Using \AND forces a linebreak at that point. So, if \LaTeX{}
% puts 3 of 4 authors names on the first line, and the last on the second
% line, try using \AND instead of \And before the third author name.

%\iclrfinalcopy % Uncomment for camera-ready version, but NOT for submission.
\begin{document}

\maketitle
\begingroup\renewcommand\thefootnote{\textsection}
\footnotetext{Equal contribution}
\begin{abstract}
Probabilistic dynamics model ensemble is widely used in existing model-based reinforcement learning methods as it outperforms a single dynamics model in both asymptotic performance and sample efficiency.
In this paper, we provide both practical and theoretical insights on the empirical success of the probabilistic dynamics model ensemble through the lens of Lipschitz continuity.
We find that, for a value function, the stronger the Lipschitz condition is, the smaller the gap between the true dynamics- and learned dynamics-induced Bellman operators is, thus enabling the converged value function to be closer to the optimal value function.
Hence, we hypothesize that the key functionality of the probabilistic dynamics model ensemble is to regularize the Lipschitz condition of the value function using generated samples.
%over a model-uncertain local region. 
To test this hypothesis, we devise two practical robust training mechanisms through computing the adversarial noise and regularizing the value network's spectral norm to directly regularize the Lipschitz condition of the value functions. Empirical results show that combined with our mechanisms, model-based RL algorithms with a single dynamics model outperform those with an ensemble of probabilistic dynamics models. These findings not only support the theoretical insight, but also provide a practical solution for developing computationally efficient model-based RL algorithms.

% We recognize the crucial role that the Lipschitz condition has played in characterizing the difference between the Bellman operator induced by the learned dynamics and the true underlying Bellman operator. Therefore, bounding the local Lipschitz will make the two operators closer to each other when acting on the agent's value function.
% ###
% We recognize that the Lipschitz condition is an important factor characterizing the difference between the Bellman operator induced by the learned dynamics and the true underlying Bellman operator. Therfore, bounding the local Lipschitz will make the two operators closer to each other when acting on the agent's value function.
% As a result, two Bellman operators, one induced by the learned dynamics model and the other \myst{one} induced by the true underlying dynamics, yield similar updates to the agent's value function.%\hz{I don't understand this sentence...}
\end{abstract}

\section{Introduction}\label{s1_introduction}
Model-based reinforcement learning (\emph{MBRL}) improves the sample efficiency of an agent by learning a model of the underlying dynamics in a real environment. 
One of the most fundamental questions in this area is how to learn a model to generate good samples so that it maximally boosts the sample efficiency of policy learning. To address this question, various model architectures are proposed such as Bayesian nonparametric models~\citep{{kocijan2004gaussian}, {nguyen2008local}, {kamthe2018data}}, inverse dynamics model~\citep{{pathak2017curiosity}, {liu2022plan}}, multi-step model~\citep{{asadi2019combating}}, and hypernetwork~\citep{huang2021continual}.

Among these approaches, the most popular and common approach is to use an ensemble of probabilistic dynamics models~\citep{{buckman2018}, {janner2019trust}, {29}, {clavera2020modelaugmented}, {froehlich2022onpolicy}, {li2022gradient}}.
It is first proposed by~\citet{chua2018deep} to capture both the aleatoric uncertainty of the environment and the epistemic uncertainty of the data.
In practice, MBRL methods with an ensemble of probabilistic dynamics models can often achieve higher sample efficiency and asymptotic performance than using only a single dynamics model.

However, while the uncertainty-aware perspective seems reasonable, previous works only directly apply probabilistic dynamics model ensemble in their methods without an in-depth exploration of why this structure works. 
There still lacks enough theoretical evidence to explain the superiority of probabilistic neural network ensemble. In addition, extra computation time and resources are needed to train an ensemble of neural networks.

In this paper, we provide a new perspective on why training a probabilistic dynamics model ensemble can significantly improve the performance of the model-based algorithm. 
We find that the ensemble model-generated state transition samples that are used for training the policies and the critics
%(also known as \emph{simulated samples} or \emph{model-generated samples}) 
are much more ``diverse'' than samples generated from a single deterministic model. 
We hypothesize that it implicitly regularizes the Lipschitz condition of the critic network especially over a local region where the model is uncertain (i.e., \emph{a model-uncertain local region}). 
Therefore, the Bellman operator induced by the learned transition dynamics will yield an update to the agent's value function close to the true underlying Bellman operator's update.
We provide systematic experimental results and theoretical analysis to support our hypothesis.

Based on this insight, we propose two simple yet effective robust training mechanisms to regularize the Lipschitz condition of the value network. 
The first one is \emph{spectral normalization}~\citep{miyato2018} which provides a global Lipschitz constraint for the value network. It directly follows our theoretical insights to explicitly control the Lipschitz constant of the value function. However, based on both of our theoretical and empirical observations, only the local Lipschitz constant around a model-uncertain local region is required for a good performance. So we also propose the second mechanism, \emph{robust regularization}, which stabilizes the local Lipschitz condition of the value network by computing the adversarial noise with fast gradient sign method~(FGSM)~\citep{ian2015}. 
To compare the effectiveness of controlling global versus local Lipschitz, systematic experiments are implemented.
% \fhc{Can we provide theoretical insight?}
%Our methods are orthogonal to most of the state-of-the-art~(SOTA) MBRL algorithms and are ``plug-and-play'' modules that can be combined with those SOTA algorithms. 
Experimental results on five MuJoCo environments verify that the proposed Lipschitz regularization mechanisms with a single deterministic dynamics model improves SOTA performance of the probabilistic ensemble MBRL algorithms with less computational time and GPU resources. 
% and conduct experiments on five MuJoCo environmethatExperimental results demonstrate that compared with the baseline which uses an ensemble of probabilistic dynamics model, our proposed methods achieve even better performance with less computational time and resources.

Our contributions are summarized as follows.
\textbf{(1)} We propose a new insight into why an ensemble of probabilistic dynamics models can significantly improve the performance of the MBRL algorithms, supported by both theoretical analysis and experimental evidence.
\textbf{(2)} We introduce two robust training mechanisms for MBRL algorithms, which directly regularize the Lipschitz condition of the agent's value function.
\textbf{(3)} Experimental results on five MuJoCo tasks demonstrate the effectiveness of our proposed mechanisms and validate our insights, improving SOTA asymptotic performance using less computational time and resources.

\section{Preliminaries and Background}\label{s2_preliminaries}
% \subsection{Reinforcement Learning}\label{ss2.1_rl}

\textbf{Reinforcement learning.} \quad
We consider a Markov Decision Process (MDP) defined by the tuple $(\mathcal{S}, \mathcal{A}, \mathcal{P},\mathcal{P}_0, r, \gamma)$, 
where $\mathcal{S}$ and $\mathcal{A}$ are the state space and action space respectively, $\mathcal{P}(s^{\prime}|s,a)$ is the transition dynamics, $\mathcal{P}_0$ is the initial state distribution, $r(s,a)$ is the reward function and $\gamma$ is the discount factor.
In this paper, we focus on value-based reinforcement learning algorithms. Define the \emph{optimal Bellman operator} $\mathcal{T^*}$ such that $\mathcal{T^*} Q(s,a)=r(s,a)+\gamma \int \mathcal{P}(s'|s,a)\argmax_{a'}Q(s', a')ds'$. The goal of the value-based algorithm is to learn a $Q_\phi$ to approximate the optimal state-action value function $Q^*$, where $Q^*=\mathcal{T^*}Q^*$ is the fixed point of the optimal Bellman operator.

For model-based RL~(MBRL), we denote the approximate model for state transition and reward function as $\hat{\mathcal{P}}_\theta$ and $\hat{r}_\theta$ respectively. Similarly, we define the \emph{model-induced Bellman Operator} $\widehat{\mathcal{T}}^*$ such that $\widehat{\mathcal{T}}^* Q(s,a)=\hat{r}(s,a)+\gamma\int \hat{\mathcal{P}}_\theta(s'|s,a)\argmax_{a'}Q(s', a')ds'$.

%miscellaneous

% \subsection{Probabilistic Dynamics Model Ensemble}\label{ss2.2_prob}

\textbf{Probabilistic dynamics model ensemble.} \quad
% Probabilistic dynamics model ensemble is first proposed by \citet{chua2018deep}, which aims to capture both aleatoric uncertainty and epistemic uncertainty.
% Aleatoric uncertainty refers to the inherent stochasticity of the underlying environment. %where a high aleatoric uncertainty corresponds to the scenario where models across the ensemble provides systematic predictions but all have high errors. 
% Epistemic uncertainty is the subjective uncertainty due to limited data. Models with high epistemic uncertainty corresponds to the high variations of predictions across different models in the ensemble, suggesting that the model is uncertain about the prediction due to limited amount of training data.
Probabilistic dynamics model ensemble consists of $K$ neural networks $\hat{\mathcal{P}}_\theta = \{\hat{\mathcal{P}}_{\theta_1}, \hat{\mathcal{P}}_{\theta_2}, ..., \hat{\mathcal{P}}_{\theta_K}\}$ with the same architecture but randomly initialized with different weights.
Given a state-action pair $(s,a)$, the prediction of each neural network is a Gaussian distribution with diagonal covariances of the next state: $\hat{\mathcal{P}}_{\theta_k}(s^\prime | s,a) = \mathcal{N}(\mu_{\theta_k} (s,a), \Sigma_{\theta_k} (s,a))$. The model is trained using negative log likelihood loss~\citep{janner2019trust}: $\mathcal{L}(\theta_{k}) =  \sum_{t=1}^N [\mu_{\theta_k}(s_t,a_t)-s_{t+1}]^\top \Sigma_{\theta_k}^{-1}(s_t,a_t)[\mu_{\theta_k}(s_t,a_t)-s_{t+1}] + \log\det\Sigma_{\theta_k}(s_t,a_t)$
% \vspace{-0.4em}
% \begin{equation}
%     % \setlength\abovedisplayskip{2pt}
%     % \setlength\belowdisplayskip{2pt}
% \mathcal{L}(\theta_{k}) =  \sum_{t=1}^N [\mu_{\theta_k}(s_t,a_t)-s_{t+1}]^\top \Sigma_{\theta_k}^{-1}(s_t,a_t)[\mu_{\theta_k}(s_t,a_t)-s_{t+1}] + \log\det\Sigma_{\theta_k}(s_t,a_t)
% \label{model_training_loss}
% \vspace{-0.3em}
%\end{equation}
During model rollouts, the probabilistic dynamics model ensemble first randomly selects a network from the ensemble and then samples the next state from the predicted Gaussian distribution.

% \subsection{Model-based Reinforcement Learning}\label{ss2_rl}
% \vspace{-0.6em}
% In model-based reinforcement learning, we learn an approximate model for state transition and reward function from the data collected by the agent, $\hat{\mathcal{P}}_\theta$ and $\hat{r}_\theta$, such that $\hat{\mathcal{P}}_\theta(s'|s, a)\approx P(s'|s,a)$, and $\hat{r}_\theta(s, a)\approx r(s,a)$. Then we use the learned dynamics model to roll-out simulated transitions and train the policy based on these model-generated transitions, together with transitions from interacting with the real environment. Similarly, define the model-induced Bellman Operator $\hat{\mathcal{T}}^*$ such that $\hat{\mathcal{T}}^* Q(s,a)=r(s,a)+\int \hat{\mathcal{P}}_\theta(s'|s,a)\argmax_{a'}Q(s', a')ds'$.
%\vspace{-0.8em}
% \subsection{Local Lipschitz Constant}\label{ss2.3_locallip}
% \vspace{-0.6em}

\textbf{Local Lipschitz constant.} \quad
We now give the definition of local Lipschitz constant, a key concept in the following discussion.
% \textcolor{blue}{
\begin{definition}%[\cite{jordan2020exactly}]
Define the \textbf{local $(\mathcal{X}, \epsilon)$-Lipschitz constant} of a scalar valued function $f: \mathbb{R}^N\rightarrow \mathbb{R}$ over a set $\mathcal{X}$ as:
\begin{equation}
\displaystyle L^{(p)}(f, \mathcal{X}, \epsilon)= \sup_{x\in \mathcal{X}}\quad\sup_{y_1, y_2\in B^{(p)}(x,\epsilon)}\frac{|f(y_1)-f(y_2)|}{\|y_1-y_2\|_p} \quad (y_1 \neq y_2).
\label{local_lipschitz}
\end{equation}
Throughout the paper, we consider $p=2$ unless explicitly stated. If $L(f, \mathcal{X}, \epsilon)=C$ is finite, we say that $f$ is $(\mathcal{X},\epsilon)$-locally Lipschitz with constant $C$. 
%\fhc{Since in the introduction section, you mentioned global Lipschitz, could you explain it here?}
\end{definition}
In particular, we can view the \textbf{global Lipschitz constant} as $L(f, \mathbb{R}^N, \epsilon)$, which upper bounds the local Lipschitz constant $L(f, \mathcal{X}, \epsilon)$ with $\mathcal{X}\subset \mathbb{R}^N$.
% }
% Let $f: \mathbb{R}^N\rightarrow \mathbb{R}$ be a scalar valued function. $f$ is Lipschitz continuous with respect to $\|\cdot\|_p$ norm $(1\leq p\leq \infty)$ if $\displaystyle L^{(p)}(f)=\sup_{x, y\in \mathbb{R}^N}\frac{|f(x)-f(y)|}{\|x-y\|_p}$ is finite. Similarly, we can define the local Lipschitz constant over an open set $U\subset \mathbb{R}^N$. The local $p$-Lipschitz constant of $f$ is then defined as $\displaystyle L^{(p)}(f; U)= \sup_{x, y\in U}\frac{|f(x)-f(y)|}{\|x-y\|_p}$, which could be potentially infinite. 

\section{Local Lipschitz Condition of Value Functions in MBRL}\label{s3_lip_mbrl}

\subsection{Key Insight: Implicit Regularization on Value Network by Probabilistic Ensemble Model}
\label{s3s1_insight}
In this section, we provide insight into why a value-based reinforcement learning algorithm trained from simulated samples generated by a probabilistic ensemble model performs well in practice. We first conduct an experiment on the MuJoCo Humanoid environment with MBPO \citep{janner2019trust}, the SOTA model-based algorithm using Soft Actor-Critic (SAC) \citep{haarnoja2018soft} as the backbone algorithm for policy and value optimization.
We run MBPO with four different types of trained environment models: (1) a single deterministic dynamics model, (2) a single probabilistic dynamics model, (3) an ensemble of seven deterministic dynamics models, and (4) an ensemble of seven probabilistic dynamics models. As we can see from Figure \ref{sfig:mbpo_a}, in the Humanoid environment, the agent trained from an ensemble of the probabilistic dynamics model indeed achieves much better performance.
A similar observation is found across all the other MuJoCo environments.

\begin{figure}[!htbp]
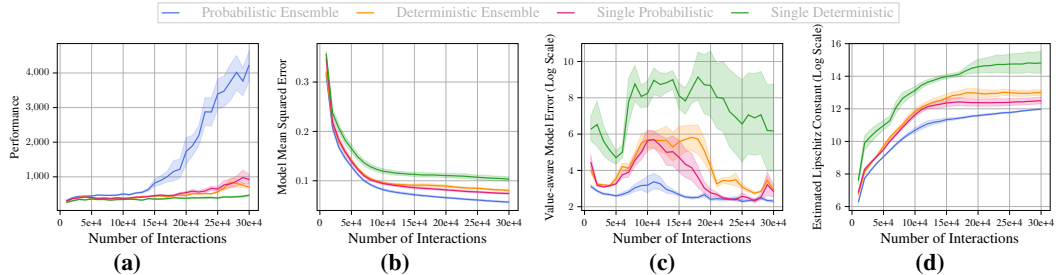

\vspace{-0.6em}
\centering
 \begin{subfigure}[t]{0.23\textwidth}
  \centering
  \input{humanoid_performance.tex}
  \vspace{-1.5em}
  \centering
    \caption{}
    \label{sfig:mbpo_a}
 \end{subfigure}
 \hfill
 \begin{subfigure}[t]{0.23\textwidth}
  \centering
  \input{humanoid_mse} 
    \vspace{-1.5em}
  \centering
    \caption{}
    \label{sfig:mbpo_b}
 \end{subfigure}
 \hfill
 \begin{subfigure}[t]{0.23\textwidth}
  \centering
  \input{motivation_humanoid_vaml} 
  \vspace{-1.5em}
  \centering
    \caption{}
    \label{sfig:mbpo_c}
 \end{subfigure}
 \hfill
 \begin{subfigure}[t]{0.23\textwidth}
  \centering
  \input{humanoid_lip} 
  \vspace{-1.5em}
  \centering
    \caption{}
    \label{sfig:mbpo_d}
 \end{subfigure}
 \vspace{-0.5em}
 \caption{
\small{\textbf{(a)} Performance of MBPO algorithm trained with single deterministic, single probabilistic, deterministic ensemble, and probabilistic ensemble model respectively on Humanoid.     \textbf{(b)} Model mean squared error.  \textbf{(c)} Value-aware model error in log scale.  \textbf{(d)} Upper bound of Lipschitz constant of value network. Results are averaged over 8 random seeds.}
 }
\label{fig:mbpo_exp}
\vspace{-1em}
\end{figure}

\textbf{Value-aware model error.} Why does MBPO with an ensemble of probabilistic dynamics models perform much better? 
We try to answer this question by considering the effects of the learned transition models on the agent's value functions. 
In MBPO algorithm, it fits an ensemble of $M$ Gaussian environment models $\hat{\mathcal{P}}_\theta^{(i)}(\cdot|s,a)\equiv \mathcal{N}\big(\mu_\theta^{(i)}(s,a), \Sigma_\theta^{(i)}(s,a)\big), i=1,..., M$. %using the negative log-likelihood loss in Equation~(\ref{model_training_loss}).  
With a deep neural network as a powerful function approximator for the mean and variance of the Gaussian distribution, the mean squared error  $\mathbb{E}_{(s,a)\sim \rho, s'\sim\mathcal{P}(\cdot|s,a), \hat{s}'\sim \hat{\mathcal{P}}_\theta(\cdot|s,a)}[\|s'-\hat{s}'\|^2]$ is often small. 
However, as argued in many previous MBRL works \citep{grimm2020value, grimm2021}, the mean squared error is often not a good measurement for the quality of the learned transition model. 
Instead, the \emph{value-aware model error} \citep{farahmand17itervaml} as defined below plays a crucial role here, which connects directly to the suboptimality of the MBRL algorithm.
\vspace{-0.5em}
\begin{align*}
    \setlength\abovedisplayskip{2pt}
    \setlength\belowdisplayskip{0pt}
\mathcal{L}_{\textit{vame}}(\hat{\mathcal{P}}_\theta;Q,\rho)&=\mathbb{E}_{(s,a)\sim \rho}[\big(\mathcal{T}^*Q(s,a)-\widehat{\mathcal{T}}^*Q(s,a)\big)^2]\\
&=\gamma^2 \mathbb{E}_{(s,a)\sim \rho}\big[\Big(\int \mathcal{P}(s'|s,a)V(s')ds'-\int \hat{\mathcal{P}}_\theta(\hat{s}'|s,a)V(\hat{s}')d\hat{s}'\Big)^2\big]\\
&\leq \gamma^2 \mathbb{E}_{(s,a)\sim \rho, s'\sim \mathcal{P}(\cdot|s, a), \hat{s}'\sim \hat{\mathcal{P}}_\theta(\cdot|s, a)}[\big(V(\hat{s}')-V(s')\big)^2],\: V(s)=\max_{a}Q(s,a)
\end{align*}
\vspace{-2em}

This value-aware model error measures the difference between the simulated and true Bellman operator acting on a given $Q$ function. In other words, even though the learned transition model $\hat{\mathcal{P}}_\theta$ is approximately accurate, the model prediction error can still be amplified by the value function so that the two Bellman operators yield completely different updates on the value function.  

\textbf{Value function Lipschitz regulates value-aware model error.} When the value function's local-Lipschitz constant is under control, the value-aware model error can be made small. As we see from Figure~\ref{sfig:mbpo_b}, \ref{sfig:mbpo_c}, and \ref{sfig:mbpo_d}, although the probabilistic ensemble model and single deterministic model achieve similar mean squared errors, value functions trained from an ensemble of probabilistic dynamics model has a significantly smaller Lipschitz constant and value-aware model error. (Note that the value-aware model error plotted in Figure~\ref{sfig:mbpo_d} is in log scale.) 

To see why this is the case, we view the learned Gaussian transition models $\hat{\mathcal{P}}_\theta^{(i)}$ as $f^{(i)}_\theta + g^{(i)}_\theta$, where $f_\theta^{(i)}$ is a deterministic model and $g^{(i)}_\theta(\cdot|s,a)\equiv \mathcal{N}(0,\Sigma^{(i)}_\theta(s,a))$ is a noise distribution with zero mean.
When training the value network of the MBRL agent, the target value is computed as $\displaystyle r(s,a)+\gamma \mathbb{E}_{i\sim \text{Cat}(M, \frac{1}{M}), \epsilon_i\sim g_\theta^{(i)}(\cdot|s, a)}[V\big(f_\theta^{(i)}(s,a)+\epsilon_i\big)]$. 
Here we view data generated from the probabilistic ensemble model as an implicit augmentation. 
The augmentation comes from two sources: (i) variation of prediction across different models in the ensemble and (ii) the noise added by each noise distribution $g_\theta^{(i)}$. 
By augmenting the transition with a diverse set of noise and then training the value functions with such augmented samples, it implicitly regularizes the local Lipschitz condition of the value network over the local region around the state where the model prediction is uncertain.

In the next section, we provide some theoretical insights into how the local Lipschitz constant can play a role in the suboptimality of the MBRL algorithms. Later in the algorithm and experiment section, we will provide two practical mechanisms to regularize the Lipschitz condition of the value network and demonstrates the effectiveness of such mechanisms, which further validates our claim.

\subsection{Error Bound of Model-based Value Iteration with locally Lipschitz value functions}
\label{s3_theory}
% \rj{to do: add explanations of the deterministic environment assumption.
% my thoughts: deterministic environment $\Rightarrow$ finite sample analysis is possible/easier, but in practice, as long as the learned model is accurate (small errors), the insights that we provide here should still apply. We just don't have the finite sample guarantee in this case.}
In this section, we formally analyze how the Lipschitz constant of the value function affects the learning dynamics of the model-based (approximate) value iteration algorithm, the prototype for most of the value-based MBRL algorithms. 
For simplicity, following previous works \citep{farahmand2017vaml, grimm2020value}, we assume that we have the true reward function $r(s,a)$, but extending the results to reward function approximation should be straightforward. 

% We consider the following value iteration algorithm. Under a data-collecting\hz{what do you mean by "data-collecting"} state-action distribution $\rho\in \Delta(\mathcal{S}\times\mathcal{A})$ so that\hz{I don't think we should use ``so that''} at $k\:$th-iteration of the algorithm, we obtain a dataset $\mathcal{D}_k\delequal\{(s_i, a_i, s'_i)\}_{i=1}^{N}$, where $(s_i, a_i)$\hz{what does 's mean?} are sampled i.i.d. from $\rho$, and $s'_i\sim \mathcal{P}(\cdot|s_i, a_i)$ is the next state under the environment transition. 
We consider the following value iteration algorithm. At $k\:$th-iteration of the algorithm, we obtain a dataset $\mathcal{D}_k\delequal\{(s_i, a_i, s'_i)\}_{i=1}^{N}$, where $(s_i, a_i)$ is sampled i.i.d. from $\rho\in \Delta(\mathcal{S}\times\mathcal{A})$, the empirical state-action distribution, and $s'_i\sim \mathcal{P}(\cdot|s_i, a_i)$ is the next state under the environment transition. Based on this dataset, we first approximate the transition kernel $\hat{\mathcal{P}}$ to minimize the mean $L_2$ difference between the predicted and true next states.
\vspace{-0.5em}
\begin{equation}
\label{eq:model}
\hat{\mathcal{P}}_k\leftarrow \argmin_{\hat{\mathcal{P}}\in \mathcal{M}}
\frac{1}{N}\sum_{i=1}^{N}\int \hat{\mathcal{P}}(\hat{s}'|s_i,a_i)\big\|\hat{s}'-s'_i\big\| d\hat{s}'
\vspace{-0.3em}
\end{equation}
Then at $k\:$th-iteration, we update the value function by solving the following regression problem: 
\begin{equation}
\label{eq:reg_error}
\hat{Q}_k \leftarrow \argmin_{\hat{Q}\in \mathcal{F}}\mathcal{L}_{\textit{reg}}(\hat{Q}; \hat{Q}_{k-1}, \hat{\mathcal{P}}_k) \coloneqq \argmin_{\hat{Q}\in \mathcal{F}}
\mathbb{E}_{(s,a)\sim \rho}\Big[\big(\hat{Q}(s,a)-\hat{\mathcal{T}}^*\hat{Q}_{k-1}(s,a)\big)^2\Big]
\vspace{-0.5em}
\end{equation}
Empirically, we update the value function such that
\vspace{-0.5em}
\begin{equation}\label{eq:regression}
    \hat{Q}_{k} \leftarrow \argmin_{\hat{Q}\in \mathcal{F}} \frac{1}{N}\sum_{i=1}^{N}\Big|\hat{Q}(s_i, a_i)-\Big(r_i+\gamma \int \hat{\mathcal{P}}_k(\hat{s}'|s_i,a_i)\hat{V}_{k-1}(\hat{s}')d\hat{s}' \Big)\Big|^2
\vspace{-0.5em}
\end{equation}
where $\hat{V}_{k-1}(\hat{s}')=\max_{a'}\hat{Q}_{k-1}(\hat{s}', a')$.

To simplify the analysis, we assume that we are given a fixed state-action distribution $\rho$ such that for every iteration, we can sample i.i.d. from this data distribution. However, in practice, we may use a different data collecting policy at different iterations. As argued in \citep{farahmand17itervaml}, a similar result can be shown in this case by considering the mixing behavior of the stochastic process.

Now we list the assumptions we make. Some assumptions are made only to simplify the finite sample analysis, while others characterize the crucial aspects of model and value learning.

First, we make the deterministic assumption of the environment. This is only for the purpose of the finite sample analysis. When the environment transition is stochastic, we will not have the finite sample guarantee, but our insights still hold. We will provide more discussion on this later when we present our finite sample guarantee. 
\begin{assumption}
\label{a0_deterministic}
The environment transition is deterministic.
\end{assumption}
\vspace{-0.5em}
Next, to apply the concentration inequality in our analysis, we have to make the following technical assumption of the state space and reward function. 
\begin{assumption}
\label{a1_boundedness}
\textbf{(Boundedness of State Space and Reward Function)}
There exists constants $D$, $R_{\max}$ such that for all $s\in \mathcal{S}, a\in \mathcal{A}$, $\|s\|_2 \leq D$ and $r(s,a)<R_{\max}$. 
\end{assumption}
\vspace{-0.5em}
In addition, we make the following mild approximate realizability assumption on the model class of approximated transition kernels so that in the model space, at least one transition model close to the true underlying transition kernel should exist.
\begin{assumption}
\label{a2_realizability}
\big(\textbf{($\epsilon, \rho$)-Approximate Realizability}\big)
\begin{equation}
\label{eq:mse}
\inf_{\hat{\mathcal{P}}\in\mathcal{M}}\mathcal{L}_2(\hat{\mathcal{P}})\coloneqq\inf_{\hat{\mathcal{P}}\in\mathcal{M}} \int \hat{\mathcal{P}}(d\hat{s}'|s,a)\big\|\mathcal{P}(s,a)-\hat{s}'\big\|_2 d\rho(s,a)\leq \epsilon
\end{equation}
\end{assumption}
\vspace{-1.0em}
We also make a critical assumption of the local Lipschitz condition of the value function class. In particular, for the state-action value function $Q:\mathcal{S}\times\mathcal{A}\rightarrow \mathbb{R}$, we define that $Q$ is $(\epsilon, p)$-locally Lipschitz with constant $L$ if for every $a\in \mathcal{A}$, the function $Q_a: s\xmapsto[]{} Q(s,a)$ is $(\epsilon, p)$-locally Lipschitz with constant less than or equal to L. 
% \textcolor{blue}{
\begin{assumption} 
\label{a3_lip}
\textbf{(Local Lipschitz condition of value functions)}
There exists a finite $L$, such that for every $Q\in \mathcal{F}$, it is $\big(\mathcal{X}, 2\epsilon\big)$-locally Lipschitz with a constant less than or equal to $L$, where $\mathcal{X}$ is the support of the distribution $\rho$.
\vspace{-0.8em}
\end{assumption}
% }

Note here we only need to assume that value functions are all $(\mathcal{X}, (1+\beta)\epsilon)$-locally Lipschitz with $\beta>0$. We set $\beta=1$ for simplicity.
Finally, same as the previous work~\citep{farahmand17itervaml}, to provide a finite sample guarantee of model learning, we make the following assumption on the complexity of the model space of the approximated transition kernels %\fhc{We should discuss under each of these assumptions that how these assumptions are reasonable and mild.}

\begin{assumption} 
\label{a4_complexity}
\textbf{(Complexity of Model Space)}
Let $R>0$, $J:\mathcal{M}_0\rightarrow [0,\infty)$ be a pseudo-norm, where $\mathcal{M}_0$ is a space of transition kernels. Let $\mathcal{M}=\mathcal{M}_R=\{\mathcal{P}:J(\mathcal{P})\leq R\}$. There exists constants $c>0$ and $0<\alpha<1$ such that for any $\epsilon, R>0$ and all sequence of state-action pairs $z_{1:n} \delequal z_1,..., z_n \in \mathcal{S}\times \mathcal{A}$, the following metric entropy condition is satisfied:
\vspace{-0.3em}
\begin{equation}
\log \mathcal{N}(\epsilon, \mathcal{M}, L_2(z_{1:n}))\leq c \Big(\frac{R}{\epsilon}\Big)^{2\alpha}
\end{equation}
\vspace{-0.35em}
where $ \mathcal{N}(\epsilon, \mathcal{M}, L_2(z_{1:n}))$ is the covering number of $\mathcal{M}$ with respect to the empirical norm $L_2(z_{1:n})$ such that $\displaystyle \|\mathcal{P}\|_{2,z_{1:n}}^2=\frac{1}{N}\sum_{i=1}^{N}\|\mathcal{P}(\cdot|z_i)\|_2^2$
\vspace{-1.0em}
\end{assumption}

Under these assumptions, we can now present our main theorem, which relates the local Lipschitz constant to the suboptimality of the approximate model-based value iteration algorithm. In particular, we provide a finite sample analysis of model learning in Theorem~\ref{t1_finite_sample} and value-aware model error in Theorem~\ref{t2_vaml}. Then we apply the error accumulation results from \citep{farahmand17itervaml}, connecting the local Lipschitz constant with the suboptimality of the algorithm.
\begin{theorem}
\label{thm:main}
 Suppose $\hat{Q}_0$ is initialized such that  $\displaystyle \hat{Q}_0(s,a)\leq \frac{R_{\max}}{1-\gamma}$ for $\forall\: (s,a)$. Under the assumptions of \ref{a0_deterministic}, \ref{a1_boundedness}, \ref{a2_realizability}, \ref{a3_lip}, and \ref{a4_complexity}, after $K$ iterations of the model-based approximate value iteration algorithm, there exists a constant $\kappa(\alpha)$ which depends solely on $\alpha\in (0,1)$ such that, 
\begin{align}
\label{eq:suboptimal}
\mathbb{E}_{(s,a)\sim \mathcal{P}_0}\Big[\Big|Q^*(s,a)-\hat{Q}_K(s,a)\Big|\Big] \leq \frac{2\gamma}{(1-\gamma)^2} \Big[&C(\rho, \mathcal{P}_0)\Big(\max_{0\leq k\leq K}\delta_k+\gamma^2\big(4\epsilon^2 L^2 \xi \nonumber \\
&+\frac{(1-\xi)R_{\max}^2}{(1-\gamma)^2}\big)\Big)+2\gamma^K R_{\max}\Big]
\end{align}
where  $\delta_k^2=\mathcal{L}_{\textit{reg}}(\hat{Q}_k; \hat{Q}_{k-1}, \hat{\mathcal{P}}_k)$ is the regression error defined in Equation~(\ref{eq:reg_error}), 
$\xi=\displaystyle 1-\exp(-\frac{\epsilon N^{\frac{1}{1+\alpha}}}{\kappa(\alpha)D^2R^{\frac{2\alpha}{1+\alpha}}})$, and $C(\rho, \mathcal{P}_0)$ is the concentrability constant defined in Definition~\ref{df:concentrability}.
\end{theorem}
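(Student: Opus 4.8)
The plan is to reduce the global suboptimality $\mathbb{E}_{(s,a)\sim\mathcal{P}_0}[|Q^*-\hat{Q}_K|]$ to a worst-case per-iteration Bellman residual and then control that residual in two pieces. First I would invoke the error-propagation machinery of \citet{farahmand17itervaml}: writing the per-iteration \emph{true} Bellman residual as $\varepsilon_k \coloneqq \|\hat{Q}_k-\mathcal{T}^*\hat{Q}_{k-1}\|_{2,\rho}$, their analysis accumulates these residuals across the $K$ iterations of approximate value iteration into a bound of the form $\frac{2\gamma}{(1-\gamma)^2}\big[C(\rho,\mathcal{P}_0)\max_{0\le k\le K}\varepsilon_k + 2\gamma^K R_{\max}\big]$. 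Here the factor $\frac{\gamma}{(1-\gamma)^2}$ is the usual horizon amplification, the concentrability constant $C(\rho,\mathcal{P}_0)$ (Definition~\ref{df:concentrability}) absorbs the change of measure from the sampling distribution $\rho$ to the evaluation distribution $\mathcal{P}_0$, and $2\gamma^K R_{\max}$ captures the geometric decay of the initialization error under the bounded initialization $\hat{Q}_0\le R_{\max}/(1-\gamma)$. This step is essentially a citation, so the real work lies in bounding $\max_k \varepsilon_k$.

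Next I would split each per-iteration residual by inserting the model-induced operator:
\[
\varepsilon_k \;\le\; \underbrace{\|\hat{Q}_k-\widehat{\mathcal{T}}^*\hat{Q}_{k-1}\|_{2,\rho}}_{\text{regression error }\delta_k} \;+\; \underbrace{\|\widehat{\mathcal{T}}^*\hat{Q}_{k-1}-\mathcal{T}^*\hat{Q}_{k-1}\|_{2,\rho}}_{\text{value-aware model error}}.
\]
The first term is exactly $\delta_k=\sqrt{\mathcal{L}_{\textit{reg}}(\hat{Q}_k;\hat{Q}_{k-1},\hat{\mathcal{P}}_k)}$ from Equation~(\ref{eq:reg_error}). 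For the second term, which is Theorem~\ref{t2_vaml}, I would reuse the chain displayed in Section~\ref{s3s1_insight}: $\mathcal{L}_{\textit{vame}}\le \gamma^2\,\mathbb{E}_{(s,a)\sim\rho,\,\hat{s}'\sim\hat{\mathcal{P}}_k}[(V(\hat{s}')-V(s'))^2]$ with $V(s)=\max_a \hat{Q}_{k-1}(s,a)$. The core idea is to condition on the event $E=\{\|\hat{s}'-s'\|_2\le 2\epsilon\}$ that the predicted next state lands inside the ball on which the local Lipschitz bound is valid. On $E$, Assumption~\ref{a3_lip} gives $|V(\hat{s}')-V(s')|\le L\|\hat{s}'-s'\|_2\le 2\epsilon L$; on $E^c$, the reward bound of Assumption~\ref{a1_boundedness} together with the initialization $\hat{Q}_0\le R_{\max}/(1-\gamma)$ keeps $V$ in an interval of length $R_{\max}/(1-\gamma)$, so $|V(\hat{s}')-V(s')|\le R_{\max}/(1-\gamma)$. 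Setting $\xi=\Pr(E)$ and splitting the expectation over $E$ and $E^c$ produces the convex combination $4\epsilon^2 L^2\xi + (1-\xi)R_{\max}^2/(1-\gamma)^2$, which is precisely the quantity multiplying $\gamma^2$ in the theorem.

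Finally I would establish the lower bound $\Pr(E)\ge\xi$ with the stated exponential form, which is the content of the finite-sample model-learning result (Theorem~\ref{t1_finite_sample}) and the technically hardest step. Under the deterministic-environment assumption (Assumption~\ref{a0_deterministic}) the target is $s'=\mathcal{P}(s,a)$, so $E$ is the event that the learned model's $L_2$ prediction error stays within $2\epsilon$. The realizability assumption (Assumption~\ref{a2_realizability}) guarantees a model in $\mathcal{M}$ with population error at most $\epsilon$, and the empirical minimizer of Equation~(\ref{eq:model}) approximates it; the gap between empirical and population model error is controlled by a uniform-deviation (chaining) argument over $\mathcal{M}$ driven by the metric-entropy condition $\log\mathcal{N}(\epsilon,\mathcal{M},L_2(z_{1:n}))\le c(R/\epsilon)^{2\alpha}$ of Assumption~\ref{a4_complexity}. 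It is exactly this covering-number scaling that, after optimizing the entropy integral, yields the nonparametric concentration rate $N^{1/(1+\alpha)}$ in the exponent of $\xi$, with $D$ and $R$ entering through the diameter of the state space and the complexity radius. I expect the main obstacle to be making this concentration step rigorous: converting the $L_1$-type model loss into a high-probability containment event for individual predictions, carrying the constant $\kappa(\alpha)$ through the chaining bound, and ensuring the guarantee holds simultaneously across all $K$ iterations (e.g.\ via a union bound folded into $\kappa(\alpha)$). Once $\xi$ is in hand, substituting the per-iteration bound $\delta_k+\gamma^2\big(4\epsilon^2L^2\xi+(1-\xi)R_{\max}^2/(1-\gamma)^2\big)$ into the accumulation bound from the first step yields the claimed inequality.
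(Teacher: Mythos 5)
Your proposal is correct and follows essentially the same route as the paper: the appendix likewise (i) cites Farahmand's error-propagation theorem to accumulate per-iteration errors with the concentrability constant and the $2\gamma^K R_{\max}$ initialization term, (ii) bounds the value-aware model error by splitting on the event $\{\|\hat{s}'-s'\|\le 2\epsilon\}$ and applying the local Lipschitz bound on it and the $R_{\max}/(1-\gamma)$ range bound off it, and (iii) derives the exponential form of $\xi$ from a finite-sample model-learning bound driven by realizability and the metric-entropy condition (the paper implements your "chaining" step via Bartlett-style local Rademacher complexity, which is the same machinery yielding the $N^{1/(1+\alpha)}$ rate).
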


\textbf{Remarks.}\\
\textbf{(1)} As the number of samples $N\to \infty$, $\xi\to 1$. Consequently, the value-aware model error $\displaystyle \mathbb{E}\big[|\mathcal{T}^*Q_k(s,a)-\widehat{\mathcal{T}}^*Q_k(s,a)|^2\big]$ will be governed by the term $4\epsilon^2 L^2 \xi$ which is controlled by the local Lipschitz constant $L$. 
In addition, if the number of iterations $k\rightarrow \infty$, the left hand side of Equation~(\ref{eq:suboptimal}) will be bounded by $\displaystyle \frac{2\gamma C(\rho, \mathcal{P}_0)}{(1-\gamma)^2}\Big(\max_{0\leq k\leq K}\delta_k+4\epsilon^2 L^2 \xi\Big)$. 
From here, we can see the role that the local Lipschitz constant has played in controlling the suboptimality of the algorithm.\\
\textbf{(2)} Although we assume that the environment transition is deterministic, our insights should still hold when it is stochastic. 
In particular, if the learned transition model has a low prediction error $\mathcal{L}_2(\hat{\mathcal{P}})$ (Eq. \ref{eq:mse}) (which is often the case with a neural network as the function approximator), and the local Lipschitz constant of the value function is bounded, then we can still have a small value-aware model error and get a similar error propagation result as Theorem \ref{thm:main}.

\textbf{Tradeoff between regression error and value-aware model error.} \quad 
This theorem also reveals an essential trade-off between the regression error and value-aware model error through \emph{the Lipschitz condition of the value function class}, i.e., the constant $L$. 
 As $L$ gets smaller, the model-induced Bellman operator gets closer to the actual underlying Bellman operator. 
However, with a smaller $L$, we also impose a stronger condition on the value function class. Therefore, the value function space will shrink, and the regression error $\delta_k$ is expected to get larger. 
% \end{itemize}

 To further visualize this trade-off, we conduct an experiment on the Inverted-Pendulum environment, where we run the analyzed model-based approximate value iteration algorithms for 1000 iterations.
 We plot (Figure~\ref{sfig:test_exp_a}) the best evaluation performance for 20 episodes at every iteration, (Figure~\ref{sfig:test_exp_b}) the maximum regression error across every iteration, as well as (Figure~\ref{sfig:test_exp_c}) the maximum value-aware model error. 
 To upper bound the (local) Lipschitz constant, we constrain the spectral norm of the weight matrix for each layer of the value network.
\begin{figure}[!htbp]
\vspace{-0.5em}
\centering
 \begin{subfigure}[t]{0.3\textwidth}
  \centering
  % This file was created with tikzplotlib v0.10.1.
\begin{tikzpicture}[scale=0.4]

\definecolor{darkgray176}{RGB}{176,176,176}
\definecolor{tomato2556060}{RGB}{255,60,60}

\begin{axis}[
tick align=outside,
tick pos=left,
x grid style={darkgray176},
xlabel style={font=\Large},
title style={font=\Large},
ylabel style={font=\large},
xlabel={Upper Bound of Network Lipschitz Constant},
xmajorgrids,
xmin=-268.75, xmax=8393.75,
xtick style={color=black},
y grid style={darkgray176},
ylabel={Performance},
ymajorgrids,
ymin=-8.2625, ymax=1048.0125,
ytick style={color=black}
]
\addplot [thick, black, dashed]
table {%
1000 -8.26250000000002
1000 1048.0125
};
\addplot [thick, black, dashed]
table {%
4000 -8.26250000000002
4000 1048.0125
};
\addplot [very thick, tomato2556060, mark=square*, mark size=3, mark options={solid}]
table {%
125 39.75
300 525.25
500 759.5
1000 1000
2000 1000
3000 1000
4000 1000
5000 762.5
6000 762.75
7000 762
8000 521
};
\end{axis}

\end{tikzpicture}
  \centering
   \vspace{-0.5em}
    \caption{best evaluation performance}
    \label{sfig:test_exp_a}
 \end{subfigure}
 \hfill
 \begin{subfigure}[t]{0.3\textwidth}
  \centering
  % This file was created with tikzplotlib v0.10.1.
\begin{tikzpicture}[scale=0.4]

\definecolor{darkgray176}{RGB}{176,176,176}
\definecolor{green012713}{RGB}{0,127,13}

\begin{axis}[
tick align=outside,
tick pos=left,
x grid style={darkgray176},
xlabel style={font=\Large},
title style={font=\Large},
ylabel style={font=\large},
xlabel={Upper Bound of Network Lipschitz Constant},
xmajorgrids,
xmin=-268.75, xmax=8393.75,
xtick style={color=black},
y grid style={darkgray176},
ylabel={Regression Error (Log Scale)},
ymajorgrids,
ymin=3.78297510575, ymax=15.63255095925,
ytick style={color=black}
]
\addplot [thick, black, dashed]
table {%
1000 3.78297510575
1000 15.63255095925
};
\addplot [thick, black, dashed]
table {%
4000 3.78297510575
4000 15.63255095925
};
\addplot [very thick, green012713, mark=diamond*, mark size=3, mark options={solid}]
table {%
125 15.093933875
300 10.61746532
500 10.742280635
1000 4.804889995
2000 4.732167875
3000 5.32001526
4000 4.9326272
5000 4.32159219
6000 5.25291406
7000 4.78572444
8000 5.027342495
};
\end{axis}

\end{tikzpicture} 
  \centering
  \vspace{-0.5em}
    \caption{max regression error}
    \label{sfig:test_exp_b}
 \end{subfigure}
 \hfill
 \begin{subfigure}[t]{0.3\textwidth}
  \centering
  % This file was created with tikzplotlib v0.10.1.
\begin{tikzpicture}[scale=0.4]

\definecolor{darkgray176}{RGB}{176,176,176}
\definecolor{steelblue76114176}{RGB}{76,114,176}

\begin{axis}[
tick align=outside,
tick pos=left,
x grid style={darkgray176},
xlabel style={font=\Large},
title style={font=\Large},
ylabel style={font=\large},
xlabel={Upper Bound of Network Lipschitz Constant},
xmajorgrids,
xmin=-268.75, xmax=8393.75,
xtick style={color=black},
y grid style={darkgray176},
ylabel={Value-aware Model Error (Log Scale)},
ymajorgrids,
ymin=3.16848845075, ymax=8.96248873425,
ytick style={color=black}
]
\addplot [thick, black, dashed]
table {%
1000 3.16848845075
1000 8.96248873425
};
\addplot [thick, black, dashed]
table {%
4000 3.16848845075
4000 8.96248873425
};
\addplot [very thick, steelblue76114176, mark=*, mark size=3, mark options={solid}]
table {%
125 3.4318521
300 4.38023631
500 4.56800905
1000 6.001915275
2000 6.167813145
3000 6.283317305
4000 6.64332927
5000 6.77647159
6000 7.281957665
7000 7.359245405
8000 8.699125085
};
\end{axis}

\end{tikzpicture} 
  \centering
  \vspace{-0.5em}
    \caption{max value-aware model error}
    \label{sfig:test_exp_c}
 \end{subfigure}
 \vspace{-0.5em}
\caption{Model-based value iteration on Inverted-Pendulum across value networks with different Lipschitz constraints. All the results are the median over 5 random seeds. 
}
\label{fig:demo_exp}
\vspace{-1.5em}
\end{figure}
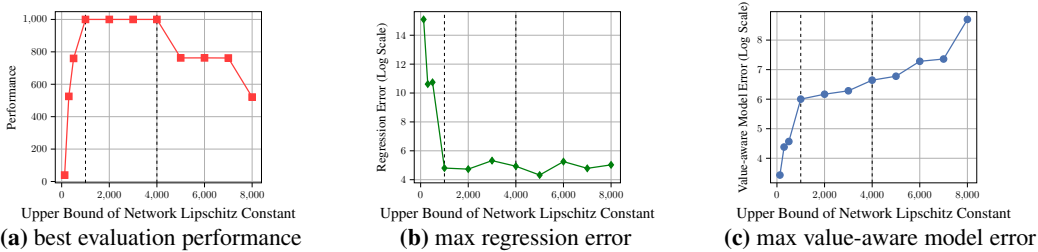

% \begin{figure}[H]
% \centering
%  \includegraphics[height=0.2\textheight,]{iclr2023/plot/test_exp.pdf} 
% \caption{Model-based Q learning on the Inverted-Pendulum environment across value networks with different Lipschitz conditions bounded by the spectral norm. (a) The best evaluation performance (averaged over 20 episodes) after 1000 iterations (b) The maximum regression error across 1000 iterations (c)The maximum value-aware model-error across 1000 iterations. All the results are averaged over 4 random seeds. 
% }
% \label{fig:test_exp}
% % \vspace{-1.5em}
% \end{figure}

As we see from the results in Figure~\ref{sfig:test_exp_b}, indeed, the regression error dramatically drops as the Lipschitz constant grows from 100 to 1000 and then levels off, which indicates that perhaps the Lipschitz constant of 1000 is rich enough for the value function class to be Bellman-complete for this environment. Meanwhile, the value-aware model error also increases with a bigger Lipschitz constant. However, we can achieve a good balance between these two errors when the Lipschitz constants are between 1000 (left dashed line) and 4000 (right dashed line), where the algorithm is observed to perform the best.
\vspace{-0.5em}
\section{Methods}\label{s4_method}
\vspace{-0.5em}
In this section, we present two different approaches to regularize the (local) Lipschitz constant of the value function.
\vspace{-0.5em}

\subsection{Spectral Normalization}
\label{s4s1_spectral}
%\vspace{-0.5em}
First, to explicitly control the upper bound of the Lipschitz constant, we adapt the technique of Spectral Normalization, which is originally proposed to stabilize the training of Generative Adversarial Network (GAN) \citep{miyato2018}. By controlling the spectral norm of the weight matrix at every layer of the value network, its Lipschitz constant is upper bounded. In particular, during each forward pass, we approximate the spectral norm of the weight matrix $\|W\|_2$ with one step of power iteration. Then, we perform a projection step so that its spectral norm will be clipped to $\beta$ if bigger than $\beta$, and unchanged otherwise. See Appendix~\ref{app:implementation} for more computational details.
% $$\pi(W; \lambda)=\frac{1}{\max(1, \frac{\|W\|_2}{\lambda})}W$$

\subsection{Robust Regularization}
\label{s4s2_robust_reg}
Spectral Normalization directly bounds the global Lipschitz constant of the value function. However, as we argued in Section \ref{s3_theory}, we only require the local Lipschitz conditions around the model-uncertain local region. Such a strong regularization is not necessary and can even negatively impact the expressive power of value function. Now we present an alternative approach to quickly regularize the local Lipschitz constant of the value function based on a robust training mechanism. 

% Therefore, bounding the global Lipschitz constant can be much stronger than what the algorithm needs to achieve a small value-aware model error.\hz{hmm, strange sentence.} Besides, such a strong regularization can even negatively impact the expressive power of value functions. In this section, we present an alternative approach to quickly regularize the local Lipschitz constant of the value function based on a robust training mechanism. 

To regularize the local Lipschitz condition of the value network, we minimize the following loss:
\vspace{-0.6em}
\begin{equation}
    \lossreg^{(\alpha)}(Q_\phi; \epsilon, \pi_\psi, \{s_i, a_i, s'_i\}_{i=1}^{D})=\sum_{i=1}^{D}\max_{\|\tilde{s}_i-s_i\|_\alpha\leq \epsilon}\Big(Q_\phi\big(\tilde{s}_i, \pi_\psi(s'_i)\big)-Q_\phi(s_i,a_i)\Big)^2
\vspace{-0.5em}
\end{equation}
This robust loss is to guarantee that the variation of the value function locally is small.
Then we combine it with the original loss of the value function $\mathcal{L}_{\textit{org}}(Q_\phi; \pi_\psi, \{s_i, a_i, s'_i\}_{i=1}^{D})$ by minimizing 
$
    \mathcal{L}(Q_\phi; \pi_\psi, \{s_i, a_i, s'_i\}_{i=1}^{D})=\mathcal{L}_{\textit{org}}(Q_\phi; \pi_\psi, \{s_i, a_i, s'_i\}_{i=1}^{D})+\reg\lossreg^{(\alpha)}(Q_\phi; \epsilon, \pi_\psi, \{s_i, a_i, s'_i\}_{i=1}^{D}).
$

Here, $\lambda$ and $\epsilon$ are two hyperparameters. A larger $\epsilon$ is required for the dynamics model which has larger prediction errors. In terms of $\lambda$, a bigger $\lambda$ makes the value network varies smoother over the local region and gives better convergence guarantees, but it also hurts the expressive power of the value network. In practice, we find that using $\epsilon=0.1$ is often enough for good performance. So we fix $\epsilon$ and search for the best $\lambda$. For a detailed discussion on the choice of $\lambda\:$s, see Appendix~\ref{a3:robust_coeff}.

To solve the perturbation $\tilde{s}$ in the constrained optimization within the robust loss, we use the fast gradient sign method (FGSM) \citep{ian2015}. Here, let $(s, a)$ be the state action pair. Then we compute $\tilde{s}=\proj\Big(s_0+\epsilon \sign\Big(\nabla_{\tilde{s}}\big|_{\tilde{s}=s_0}\big(Q_\phi\big(\tilde{s}, \pi_\psi(\tilde{s})\big)-Q_\phi(s,a)\big)^2\Big), B^{(\alpha)}(s_0, \epsilon)\Big)$, where $s_0\sim \mathcal{N}(s, \epsilon^2I)$. Here the purpose of random initialization of $s_0$ is because the gradient $ \nabla_{s'}\big(Q_\phi\big(\tilde{s}, \pi_\psi(\tilde{s})\big)-Q_\phi(s,a)\big)^2\big)$ vanishes at $s$. In our experiments, we focus on the $l_\infty$ norm, but our proposed robust loss should be applicable to any $l_p$ norm.

In addition, we can apply more advanced constrained optimization solvers such as Projected Gradient Decent (PGD). But here, using FGSM together with a single deterministic environment model is mainly for efficiency purposes. In practice, we find that using PGD does not give much performance gain over FGSM method, and we refer the readers to Appendix \ref{a3:pgd_vs_fgsm} for the experimental results.
% \begin{align*}
% \mathcal{L}(Q_\phi; \pi_\psi, \{s_i, a_i, s'_i\}_{i=1}^{D})&:=\! \frac{1}{N} \sum\nolimits_{i=1}^D(\underline{y}_i \!-Q_\phi(s_i,a_i))^{2} \\
%     \text{where }\underline{y}_t \! &= r_i + \gamma \min\nolimits_{\hat{a}\in\advaction(s_{i+1},\pi)} \Q_\phi(s_{i+1},\). 
% \end{align*}

\section{Experiment}
\label{s5_experiment}
% In this section, we aim to answer the following questions: (1) How well do our proposed two training mechanisms perform when trained with only a single deterministic environment model? (2) How effective are these two Lipschitz regularization techniques in controlling the size of the value-aware model error?

\subsection{Empirical Evaluations of Proposed Mechanisms}
We evaluate our two proposed training mechanisms on five MuJoCo tasks, including Walker, Humanoid, Ant, Hopper, and HalfCheetah. 
We compare the two training mechanisms with MBPO \citep{janner2019trust} using an ensemble of probabilistic transition models. 
We do not compare our methods with MBPO using an ensemble of deterministic transition models mainly because it is ensemble-based and across all five tasks, it is outperformed by probabilistic ensemble models.
In addition, since our regularization mechanisms are only trained on top of a single deterministic transition model, we compare it with both MBPO using a single deterministic transition model and a single probabilistic model. 
We implement our methods and the baseline methods based on a PyTorch implementation of MBPO~\citep{xingyu2022}.
% \footnote{\href{https://github.com/Xingyu-Lin/mbpo_pytorch}{https://github.com/Xingyu-Lin/mbpo\_pytorch}}.
More implementation details are provided in Appendix~\ref{app:implementation}.

\textbf{Improved asymptotic performance.} \quad
Figure~\ref{fig:main_experimental_results} presents the learning curves for all methods. These results show that using just a single deterministic model, MBPO with our two Lipschitz regularization mechanisms achieves a comparable and even better performance across all five tasks than MBPO with a probabilistic ensemble model. In particular, the proposed robust regularization technique shows a larger advantage on three more sophisticated tasks: Humanoid, Ant, and Walker. For example, on Humanoid, it achieves the same final performance as that of a probabilistic ensemble with only about 60\% of the environment interaction steps. In contrast, spectral normalization shows little improvement over a single deterministic model on these three tasks, showing the limitation of constraining the global Lipschitz constant. We provide more discussion on this in Section~\ref{s5s3:rr_sn}.

\textbf{Time Efficiency.} \quad
Compared with MBPO using an ensemble of probabilistic models, our proposed training mechanisms, especially robust regularization, is more time efficient. In Table~\ref{tab:time_compare} we record the wall clock time of the algorithms trained for 200,000 environment interaction steps on Walker and Ant, averaged over 8 random seeds. We see that robust regularization is much faster than spectral normalization. While spectral normalization takes slightly less the amount of time than probabilistic ensemble, robust regularization only takes about 70\% of the computational time. 

\begin{wraptable}{r}{0.45\textwidth}
\vspace{0em}
\caption{Comparison of computational time.}
\vspace{-10pt}
\resizebox{0.45\textwidth}{!}{
    \begin{tabular}{|c|c|c|  }
    \toprule
    \multirow{2}{*}{Algorithm} & Walker & Ant \\
    \cline{2-3}
    & Time (h) & Time(h) \\
    \hline
    Single Deterministic & 29.6  & 34.0        \\
    Probabilistic Ensemble & 60.0    & 62.3      \\
    \textbf{Robust Regularization}  & 45.4 & 48.3 \\
    \textbf{Spectral Normalization} & 54.2 & 57.1               \\    
    \bottomrule 
    \end{tabular}}
    \vspace{-10pt}
    % Comparison of computational time across proposed methods and baselines on Walker2d and Ant
    \label{tab:time_compare}
\end{wraptable}

The significance of the experimental results is twofold. First, it further validates our insights and shows the importance of the local Lipschitz condition of value functions in MBRL. Second, it demonstrates that having an ensemble of transition models is not necessary. We can save the computational time and cost of training an ensemble of transition models with a simple Lipschitz regularization mechanism. In practice, we can even use a single probabilistic model combined with our proposed mechanisms to get the best performance. See Appendix~\ref{a3:robust_probabilistic} for the extra experimental results.

\begin{figure}[!t]
\vspace{0em}
%\centering
\centering
\begin{subfigure}[t]{0.3\textwidth}
\centering
  % This file was created with tikzplotlib v0.9.12.
\begin{tikzpicture}[scale=0.45]

\definecolor{color0}{rgb}{0.297,0.445,0.9}
\definecolor{color1}{rgb}{0.8901960784313725, 0.10980392156862745, 0.4745098039215686}
\definecolor{color2}{rgb}{0.172549019607843,0.627450980392157,0.172549019607843}
\definecolor{color3}{rgb}{0.549019607843137,0.337254901960784,0.294117647058824}
\definecolor{color4}{rgb}{0.83921568627451,0.152941176470588,0.156862745098039}

\begin{axis}[
legend columns=6,
legend cell align={left},
legend style={
  fill opacity=0.3,
  draw opacity=1,
  text opacity=1,
  at={(-0.2,1.3)},
  anchor=north west,
  draw=white!80!black,
  font=\Large,
  column sep=0.15cm
},
tick align=outside,
tick pos=left,
title={Walker2d-v2},
x grid style={white!69.0196078431373!black},
xlabel={Number of Interactions},
xlabel style={font=\Large},
title style={font=\Large},
ylabel style={font=\Large},
xmajorgrids,
xmin=-1, xmax=19.95,
xtick style={color=black},
xtick={-1,4,9,14,19},
xticklabels={0,5e+4,10e+4,15e+4,20e+4},
y grid style={white!69.0196078431373!black},
ylabel={Performance},
ymajorgrids,
ymin=-106.005891666667, ymax=4860.67139166667,
ytick style={color=black}
]
\path [draw=color0, fill=color0, opacity=0.2]
(axis cs:0,330.731666666667)
--(axis cs:0,308.397166666667)
--(axis cs:1,486.303666666667)
--(axis cs:2,559.365333333333)
--(axis cs:3,869.950833333334)
--(axis cs:4,1417.00816666667)
--(axis cs:5,2074.6435)
--(axis cs:6,2540.46966666667)
--(axis cs:7,2813.93883333333)
--(axis cs:8,3216.2075)
--(axis cs:9,3380.97516666667)
--(axis cs:10,3465.05683333333)
--(axis cs:11,3594.284)
--(axis cs:12,3474.73683333333)
--(axis cs:13,3891.03033333333)
--(axis cs:14,3779.43166666667)
--(axis cs:15,4095.85266666667)
--(axis cs:16,3719.06916666667)
--(axis cs:17,4056.51116666667)
--(axis cs:18,4121.1405)
--(axis cs:19,4191.56883333333)
--(axis cs:19,4576.98266666667)
--(axis cs:19,4576.98266666667)
--(axis cs:18,4419.6395)
--(axis cs:17,4371.54066666667)
--(axis cs:16,4105.49383333333)
--(axis cs:15,4355.056)
--(axis cs:14,4030.0095)
--(axis cs:13,4198.49833333333)
--(axis cs:12,3951.56383333333)
--(axis cs:11,4009.7415)
--(axis cs:10,3903.72766666667)
--(axis cs:9,3804.73483333333)
--(axis cs:8,3690.71133333333)
--(axis cs:7,3316.92766666667)
--(axis cs:6,3060.23316666667)
--(axis cs:5,2680.73733333333)
--(axis cs:4,2051.1085)
--(axis cs:3,1382.28516666667)
--(axis cs:2,746.745833333333)
--(axis cs:1,569.185833333333)
--(axis cs:0,330.731666666667)
--cycle;

\path [draw=color1, fill=color1, opacity=0.2]
(axis cs:0,241.412333333333)
--(axis cs:0,189.57)
--(axis cs:1,354.478)
--(axis cs:2,483.7285)
--(axis cs:3,651.0335)
--(axis cs:4,1166.4245)
--(axis cs:5,1607.40316666667)
--(axis cs:6,1908.1245)
--(axis cs:7,2311.77116666667)
--(axis cs:8,2528.18016666667)
--(axis cs:9,2694.73216666667)
--(axis cs:10,3101.60883333333)
--(axis cs:11,2707.627)
--(axis cs:12,3094.2335)
--(axis cs:13,3460.712)
--(axis cs:14,3307.45116666667)
--(axis cs:15,3041.49816666667)
--(axis cs:16,3288.781)
--(axis cs:17,3214.40466666667)
--(axis cs:18,2928.31233333333)
--(axis cs:19,3642.15283333333)
--(axis cs:19,3927.93316666667)
--(axis cs:19,3927.93316666667)
--(axis cs:18,3512.3565)
--(axis cs:17,3538.029)
--(axis cs:16,3612.94816666667)
--(axis cs:15,3346.25866666667)
--(axis cs:14,3459.35366666667)
--(axis cs:13,3677.53433333333)
--(axis cs:12,3384.3795)
--(axis cs:11,3117.98166666667)
--(axis cs:10,3319.25566666667)
--(axis cs:9,3098.59283333333)
--(axis cs:8,3076.81933333333)
--(axis cs:7,2613.01416666667)
--(axis cs:6,2281.68466666667)
--(axis cs:5,1958.40483333333)
--(axis cs:4,1494.4815)
--(axis cs:3,787.8885)
--(axis cs:2,523.250833333333)
--(axis cs:1,404.457166666667)
--(axis cs:0,241.412333333333)
--cycle;

\path [draw=color2, fill=color2, opacity=0.2]
(axis cs:0,187.925333333333)
--(axis cs:0,119.752166666667)
--(axis cs:1,287.28)
--(axis cs:2,334.262333333333)
--(axis cs:3,486.314666666667)
--(axis cs:4,581.077)
--(axis cs:5,788.916166666667)
--(axis cs:6,988.836833333333)
--(axis cs:7,1039.48633333333)
--(axis cs:8,1274.37633333333)
--(axis cs:9,1492.05483333333)
--(axis cs:10,1929.03233333333)
--(axis cs:11,1827.306)
--(axis cs:12,2131.71133333333)
--(axis cs:13,2220.894)
--(axis cs:14,2066.04733333333)
--(axis cs:15,2693.695)
--(axis cs:16,2744.9825)
--(axis cs:17,2469.379)
--(axis cs:18,2758.548)
--(axis cs:19,2836.58766666667)
--(axis cs:19,3269.14066666667)
--(axis cs:19,3269.14066666667)
--(axis cs:18,3294.58566666667)
--(axis cs:17,2956.41033333333)
--(axis cs:16,3215.5095)
--(axis cs:15,3256.498)
--(axis cs:14,2656.88516666667)
--(axis cs:13,2968.30783333333)
--(axis cs:12,2871.79383333333)
--(axis cs:11,2501.10366666667)
--(axis cs:10,2578.0955)
--(axis cs:9,2100.968)
--(axis cs:8,1934.03533333333)
--(axis cs:7,1724.25416666667)
--(axis cs:6,1542.376)
--(axis cs:5,1208.52866666667)
--(axis cs:4,820.970666666667)
--(axis cs:3,715.749833333333)
--(axis cs:2,494.763)
--(axis cs:1,341.403833333333)
--(axis cs:0,187.925333333333)
--cycle;

\path [draw=color3, fill=color3, opacity=0.2]
(axis cs:0,271.265666666667)
--(axis cs:0,225.4315)
--(axis cs:1,334.479833333333)
--(axis cs:2,475.13)
--(axis cs:3,696.401666666667)
--(axis cs:4,1333.50683333333)
--(axis cs:5,1847.6495)
--(axis cs:6,2289.1945)
--(axis cs:7,2747.941)
--(axis cs:8,2985.584)
--(axis cs:9,3282.51433333333)
--(axis cs:10,3166.25466666667)
--(axis cs:11,3757.23883333333)
--(axis cs:12,3728.80433333333)
--(axis cs:13,3952.60566666667)
--(axis cs:14,3662.75916666667)
--(axis cs:15,3497.5005)
--(axis cs:16,3702.91066666667)
--(axis cs:17,3889.30316666667)
--(axis cs:18,4193.2635)
--(axis cs:19,3685.36666666667)
--(axis cs:19,4163.034)
--(axis cs:19,4163.034)
--(axis cs:18,4373.47883333333)
--(axis cs:17,4148.79166666667)
--(axis cs:16,4141.826)
--(axis cs:15,3697.5315)
--(axis cs:14,3929.757)
--(axis cs:13,4148.63566666667)
--(axis cs:12,4106.4195)
--(axis cs:11,4101.66866666667)
--(axis cs:10,3620.82983333333)
--(axis cs:9,3642.9905)
--(axis cs:8,3375.83583333333)
--(axis cs:7,3086.337)
--(axis cs:6,2808.633)
--(axis cs:5,2344.02166666667)
--(axis cs:4,1665.98483333333)
--(axis cs:3,888.095833333333)
--(axis cs:2,581.018)
--(axis cs:1,357.460666666667)
--(axis cs:0,271.265666666667)
--cycle;

\path [draw=color4, fill=color4, opacity=0.2]
(axis cs:0,401.433333333333)
--(axis cs:0,362.566666666667)
--(axis cs:1,370.5)
--(axis cs:2,528.6)
--(axis cs:3,582.02)
--(axis cs:4,1049.949)
--(axis cs:5,2027.39866666667)
--(axis cs:6,2649.84)
--(axis cs:7,3000.123)
--(axis cs:8,3812.80566666667)
--(axis cs:9,3935.91333333333)
--(axis cs:10,4042.559)
--(axis cs:11,3828.69016666667)
--(axis cs:12,3871.53333333333)
--(axis cs:13,4232.325)
--(axis cs:14,4317.63716666667)
--(axis cs:15,4485.06483333333)
--(axis cs:16,4304.98)
--(axis cs:17,4481.0695)
--(axis cs:18,4583.18)
--(axis cs:19,4324.94666666667)
--(axis cs:19,4566.64333333333)
--(axis cs:19,4566.64333333333)
--(axis cs:18,4634.91333333333)
--(axis cs:17,4634.72666666667)
--(axis cs:16,4510.88666666667)
--(axis cs:15,4592.56666666667)
--(axis cs:14,4498.62333333333)
--(axis cs:13,4320.16)
--(axis cs:12,4184.51333333333)
--(axis cs:11,4172.03666666667)
--(axis cs:10,4093.58)
--(axis cs:9,4158.05)
--(axis cs:8,3950.49633333333)
--(axis cs:7,3249.7955)
--(axis cs:6,2813.19516666667)
--(axis cs:5,2409.81333333333)
--(axis cs:4,1326.063)
--(axis cs:3,663.036833333333)
--(axis cs:2,633.446666666666)
--(axis cs:1,405.25)
--(axis cs:0,401.433333333333)
--cycle;

\addplot [very thick, color0]
table {%
0 319.106723333333
1 525.371996666667
2 648.493706666667
3 1124.08495666667
4 1730.5093
5 2378.88952333333
6 2800.61892666667
7 3068.19365666667
8 3457.12203
9 3601.32567
10 3692.91555333333
11 3805.98439333333
12 3710.32027666667
13 4045.97511333333
14 3904.60232333333
15 4234.63694333333
16 3921.47660333333
17 4215.47429333333
18 4268.73579333333
19 4385.62110333333
};
\addlegendentry{Probabilistic Ensemble}
\addplot [very thick, color1]
table {%
0 216.607216666667
1 379.058283333333
2 502.89193
3 716.954396666667
4 1327.66286
5 1787.98860333333
6 2096.64118333333
7 2466.8798
8 2818.69895333333
9 2889.76509333333
10 3211.36817666667
11 2917.86926666667
12 3237.52561333333
13 3570.93092666667
14 3378.66685
15 3191.64272
16 3447.89665
17 3376.67198333333
18 3235.99636
19 3782.31749333333
};
\addlegendentry{Single Probabilistic}
\addplot [very thick, color2]
table {%
0 152.506256666667
1 313.327946666667
2 410.496106666667
3 593.007603333333
4 698.0559
5 992.773033333333
6 1263.30976
7 1374.29993666667
8 1600.36568333333
9 1780.68887333333
10 2252.57982
11 2165.22296
12 2515.08885
13 2589.71303666667
14 2357.09003
15 2977.65136333333
16 2987.60988666667
17 2711.48258
18 3035.76863666667
19 3054.13960666667
};
\addlegendentry{Single Deterministic}
\addplot [very thick, color3]
table {%
0 246.843206666667
1 345.370946666667
2 527.59894
3 793.118496666667
4 1493.94405
5 2091.98061666667
6 2548.28407666667
7 2921.46928666667
8 3184.1218
9 3469.44535
10 3398.95080666667
11 3923.89686
12 3912.76365666667
13 4043.03951666667
14 3794.58625
15 3602.37595333333
16 3924.02480333333
17 4020.98356333333
18 4280.34473333333
19 3935.26397666667
};
\addlegendentry{Spectral Normalization (Ours)}
\addplot [very thick, color4]
table {%
0 381.738566666667
1 388.28634
2 580.889883333333
3 622.892016666667
4 1184.60893333333
5 2241.85064
6 2733.49242333333
7 3124.19287333333
8 3881.17836
9 4046.73972333333
10 4068.97074
11 4001.53115666667
12 4037.46456666667
13 4276.32876333333
14 4410.5474
15 4538.18766
16 4414.42896
17 4557.37663
18 4609.99136
19 4449.11221666667
};
\addlegendentry{Robust Regularization (Ours)}
\end{axis}

\end{tikzpicture}
 \end{subfigure}
 \hfill
 \begin{subfigure}[t]{0.3\textwidth}
  % This file was created with tikzplotlib v0.9.12.
\begin{tikzpicture}[scale=0.45]

\definecolor{color0}{rgb}{0.297,0.445,0.9}
\definecolor{color1}{rgb}{0.8901960784313725, 0.10980392156862745, 0.4745098039215686}
\definecolor{color2}{rgb}
{0.172549019607843,0.627450980392157,0.172549019607843}
\definecolor{color3}{rgb}{0.83921568627451,0.152941176470588,0.156862745098039}
\definecolor{color4}{rgb}{0.549019607843137,0.337254901960784,0.294117647058824}

\begin{axis}[
tick align=outside,
tick pos=left,
title={Ant-v2},
x grid style={white!69.0196078431373!black},
xlabel={Number of Interactions},
xlabel style={font=\Large},
title style={font=\Large},
ylabel style={font=\Large},
xmajorgrids,
xmin=-1, xmax=19.95,
xtick style={color=black},
xtick={-1,4,9,14,19},
xticklabels={0,5e+4,10e+4,15e+4,20e+4},
y grid style={white!69.0196078431373!black},
ylabel={Performance},
ymajorgrids,
ymin=383.549816666667, ymax=5068.66651666667,
ytick style={color=black}
]
\path [draw=color0, fill=color0, opacity=0.2]
(axis cs:0,721.180666666667)
--(axis cs:0,697.316166666667)
--(axis cs:1,852.138833333333)
--(axis cs:2,854.710833333333)
--(axis cs:3,1043.8965)
--(axis cs:4,1227.195)
--(axis cs:5,1508.34983333333)
--(axis cs:6,1798.42266666667)
--(axis cs:7,2115.8805)
--(axis cs:8,2358.8335)
--(axis cs:9,2552.0325)
--(axis cs:10,2839.64566666667)
--(axis cs:11,3200.2675)
--(axis cs:12,3327.494)
--(axis cs:13,3449.6575)
--(axis cs:14,3484.59883333333)
--(axis cs:15,3642.41283333333)
--(axis cs:16,3699.05883333333)
--(axis cs:17,3973.25516666667)
--(axis cs:18,4081.3175)
--(axis cs:19,4025.1165)
--(axis cs:19,4556.52666666667)
--(axis cs:19,4556.52666666667)
--(axis cs:18,4585.82133333333)
--(axis cs:17,4481.2185)
--(axis cs:16,4245.287)
--(axis cs:15,4085.33866666667)
--(axis cs:14,3906.40516666667)
--(axis cs:13,3855.93583333333)
--(axis cs:12,3735.25916666667)
--(axis cs:11,3560.84116666667)
--(axis cs:10,3180.41133333333)
--(axis cs:9,2907.40516666667)
--(axis cs:8,2693.8725)
--(axis cs:7,2340.83783333333)
--(axis cs:6,1956.94766666667)
--(axis cs:5,1673.55266666667)
--(axis cs:4,1382.14883333333)
--(axis cs:3,1148.58966666667)
--(axis cs:2,872.889166666667)
--(axis cs:1,866.2635)
--(axis cs:0,721.180666666667)
--cycle;

\path [draw=color1, fill=color1, opacity=0.2]
(axis cs:0,654.556833333333)
--(axis cs:0,631.1115)
--(axis cs:1,769.31)
--(axis cs:2,816.995833333333)
--(axis cs:3,820.642833333333)
--(axis cs:4,961.676666666667)
--(axis cs:5,1243.629)
--(axis cs:6,1349.071)
--(axis cs:7,1523.25)
--(axis cs:8,1810.91283333333)
--(axis cs:9,1992.748)
--(axis cs:10,2187.247)
--(axis cs:11,2194.76233333333)
--(axis cs:12,2429.7595)
--(axis cs:13,2522.22466666667)
--(axis cs:14,2667.8655)
--(axis cs:15,2801.75466666667)
--(axis cs:16,2928.106)
--(axis cs:17,2976.98666666667)
--(axis cs:18,2959.07516666667)
--(axis cs:19,3022.46483333333)
--(axis cs:19,3220.32083333333)
--(axis cs:19,3220.32083333333)
--(axis cs:18,3183.32583333333)
--(axis cs:17,3131.18983333333)
--(axis cs:16,3151.58583333333)
--(axis cs:15,3070.24333333333)
--(axis cs:14,2935.74333333333)
--(axis cs:13,2757.23683333333)
--(axis cs:12,2576.23716666667)
--(axis cs:11,2451.83583333333)
--(axis cs:10,2388.0895)
--(axis cs:9,2179.5395)
--(axis cs:8,1984.81333333333)
--(axis cs:7,1722.2625)
--(axis cs:6,1549.498)
--(axis cs:5,1352.00333333333)
--(axis cs:4,1008.44)
--(axis cs:3,838.1)
--(axis cs:2,838.114166666667)
--(axis cs:1,784.507166666667)
--(axis cs:0,654.556833333333)
--cycle;

\path [draw=color2, fill=color2, opacity=0.2]
(axis cs:0,711.310666666667)
--(axis cs:0,683.776166666667)
--(axis cs:1,732.202166666667)
--(axis cs:2,705.79)
--(axis cs:3,725.386666666667)
--(axis cs:4,763.410833333333)
--(axis cs:5,789.4715)
--(axis cs:6,864.3465)
--(axis cs:7,863.286833333334)
--(axis cs:8,884.483)
--(axis cs:9,983.9595)
--(axis cs:10,931.999)
--(axis cs:11,886.339)
--(axis cs:12,1165.09233333333)
--(axis cs:13,1287.25916666667)
--(axis cs:14,1354.54883333333)
--(axis cs:15,1165.4785)
--(axis cs:16,1236.311)
--(axis cs:17,1446.55716666667)
--(axis cs:18,1474.002)
--(axis cs:19,1495.56833333333)
--(axis cs:19,1877.309)
--(axis cs:19,1877.309)
--(axis cs:18,1900.86166666667)
--(axis cs:17,1679.2385)
--(axis cs:16,1479.91166666667)
--(axis cs:15,1426.16583333333)
--(axis cs:14,1747.94333333333)
--(axis cs:13,1666.40783333333)
--(axis cs:12,1387.23866666667)
--(axis cs:11,1102.98716666667)
--(axis cs:10,1232.83216666667)
--(axis cs:9,1294.54916666667)
--(axis cs:8,1181.4475)
--(axis cs:7,1060.53566666667)
--(axis cs:6,977.730166666667)
--(axis cs:5,890.869)
--(axis cs:4,841.358)
--(axis cs:3,795.6435)
--(axis cs:2,757.346166666667)
--(axis cs:1,805.970833333333)
--(axis cs:0,711.310666666667)
--cycle;

\path [draw=color3, fill=color3, opacity=0.2]
(axis cs:0,746.653333333333)
--(axis cs:0,703.862333333333)
--(axis cs:1,746.194666666667)
--(axis cs:2,908.9825)
--(axis cs:3,1004.17433333333)
--(axis cs:4,1165.01616666667)
--(axis cs:5,1564.29433333333)
--(axis cs:6,1573.227)
--(axis cs:7,1961.59816666667)
--(axis cs:8,2467.4045)
--(axis cs:9,2842.451)
--(axis cs:10,3099.1265)
--(axis cs:11,3204.62533333333)
--(axis cs:12,3549.19083333333)
--(axis cs:13,3663.578)
--(axis cs:14,3716.75)
--(axis cs:15,4089.2925)
--(axis cs:16,3987.81583333333)
--(axis cs:17,4165.70016666667)
--(axis cs:18,4287.68333333333)
--(axis cs:19,4411.58966666667)
--(axis cs:19,4770.87083333333)
--(axis cs:19,4770.87083333333)
--(axis cs:18,4855.70666666667)
--(axis cs:17,4637.07183333333)
--(axis cs:16,4456.00366666667)
--(axis cs:15,4631.79)
--(axis cs:14,4268.28833333333)
--(axis cs:13,4338.663)
--(axis cs:12,3952.8525)
--(axis cs:11,3766.03666666667)
--(axis cs:10,3691.82483333333)
--(axis cs:9,3378.2)
--(axis cs:8,2882.80916666667)
--(axis cs:7,2487.13266666667)
--(axis cs:6,1897.4585)
--(axis cs:5,1788.95716666667)
--(axis cs:4,1297.66116666667)
--(axis cs:3,1095.93666666667)
--(axis cs:2,976.22)
--(axis cs:1,804.36)
--(axis cs:0,746.653333333333)
--cycle;

\path [draw=color4, fill=color4, opacity=0.2]
(axis cs:0,622.130166666667)
--(axis cs:0,596.509666666667)
--(axis cs:1,730.51)
--(axis cs:2,675.946833333333)
--(axis cs:3,858.486666666667)
--(axis cs:4,665.5265)
--(axis cs:5,744.515)
--(axis cs:6,701.79)
--(axis cs:7,789.703666666667)
--(axis cs:8,902.766166666667)
--(axis cs:9,1103.64783333333)
--(axis cs:10,942.215666666667)
--(axis cs:11,1044.49466666667)
--(axis cs:12,1209.40433333333)
--(axis cs:13,1421.60116666667)
--(axis cs:14,1427.32916666667)
--(axis cs:15,1569.01866666667)
--(axis cs:16,1552.56683333333)
--(axis cs:17,1661.3035)
--(axis cs:18,1832.63566666667)
--(axis cs:19,1911.2905)
--(axis cs:19,2187.6415)
--(axis cs:19,2187.6415)
--(axis cs:18,2299.84666666667)
--(axis cs:17,2035.19683333333)
--(axis cs:16,2003.90133333333)
--(axis cs:15,1802.58616666667)
--(axis cs:14,1562.7555)
--(axis cs:13,1483.38216666667)
--(axis cs:12,1387.26683333333)
--(axis cs:11,1270.511)
--(axis cs:10,1144.29766666667)
--(axis cs:9,1267.1305)
--(axis cs:8,997.073)
--(axis cs:7,884.495)
--(axis cs:6,796.915)
--(axis cs:5,817.352333333333)
--(axis cs:4,745.4695)
--(axis cs:3,903.872166666667)
--(axis cs:2,736.617833333333)
--(axis cs:1,791.183333333333)
--(axis cs:0,622.130166666667)
--cycle;

\addplot [very thick, color0]
table {%
0 709.108443333333
1 859.22842
2 864.319716666667
3 1096.30969
4 1305.00027666667
5 1588.47200666667
6 1879.03232666667
7 2223.62673666667
8 2526.14884666667
9 2725.64920666667
10 3013.07480666667
11 3380.37587333333
12 3529.963
13 3657.05184333333
14 3696.09795333333
15 3869.94844
16 3974.03432666667
17 4220.01981333333
18 4332.86899666667
19 4290.73553666667
};

\addplot [very thick, color1]
table {%
0 643.025336666667
1 776.664716666667
2 827.81383
3 829.717116666667
4 983.438033333333
5 1297.55545
6 1448.85550666667
7 1622.56993
8 1900.04054333333
9 2083.30494
10 2285.7486
11 2327.27956
12 2504.76244333333
13 2643.77668666667
14 2804.90628333333
15 2940.14061
16 3042.42854666667
17 3056.64343666667
18 3069.20528333333
19 3121.97628333333
};

\addplot [very thick, color2]
table {%
0 697.986246666667
1 770.888986666667
2 732.39411
3 760.26516
4 802.658316666667
5 839.966643333333
6 921.28002
7 954.675503333333
8 1022.36403
9 1128.64292333333
10 1074.89426
11 995.151396666667
12 1276.80778666667
13 1471.80970333333
14 1546.86696333333
15 1291.02033666667
16 1354.26341
17 1560.79248666667
18 1687.24866666667
19 1684.56139
};

\addplot [very thick, color3]
table {%
0 724.908613333333
1 774.771593333333
2 941.266093333333
3 1050.34499666667
4 1231.46013333333
5 1676.91772
6 1737.56132333333
7 2226.83158
8 2668.13829
9 3098.73078
10 3403.42881333333
11 3486.41505
12 3745.44998
13 3992.33724
14 3991.21147666667
15 4345.44935333333
16 4228.77318
17 4388.80469666667
18 4570.04735333333
19 4592.65983666667
};

\addplot [very thick, color4]
table {%
0 609.0872
1 760.54247
2 706.524103333333
3 882.598753333334
4 705.545776666667
5 782.692926666667
6 749.449183333333
7 835.919356666667
8 948.050016666667
9 1185.01391666667
10 1040.66494
11 1162.36566
12 1296.96582666667
13 1454.11262333333
14 1494.70259333333
15 1680.40125
16 1784.75271333333
17 1841.93343
18 2062.49300333333
19 2047.56016
};

\end{axis}

\end{tikzpicture}
 \end{subfigure}
 \hfill
  \begin{subfigure}[t]{0.3\textwidth}
  \input{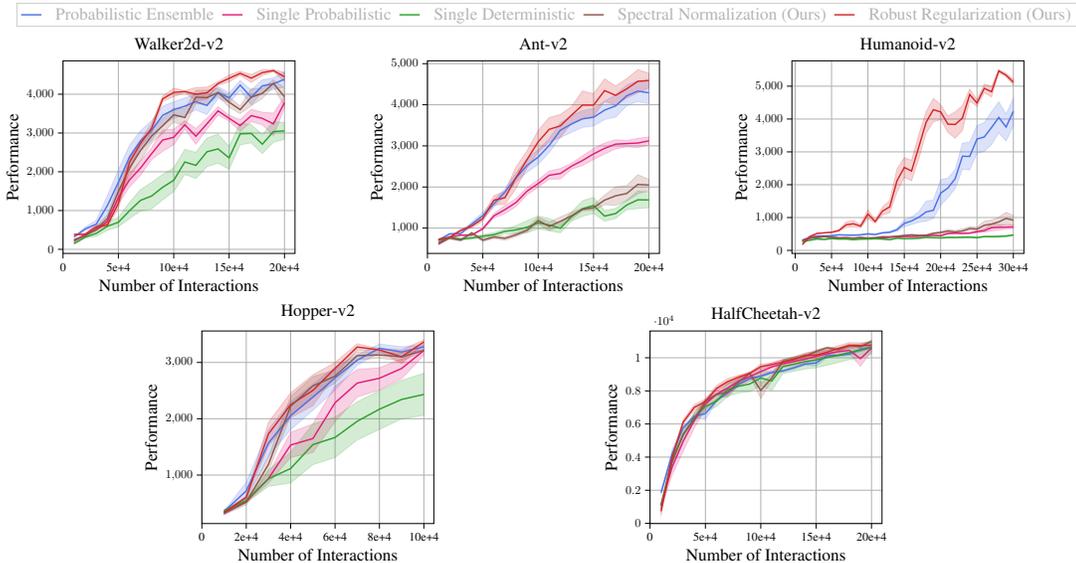} 
 \end{subfigure}
 \vspace{-1em}

 \hspace{5em}
 \begin{subfigure}[t]{0.43\textwidth}
  % This file was created with tikzplotlib v0.9.12.
\begin{tikzpicture}[scale=0.45]

\definecolor{color0}{rgb}{0.297,0.445,0.9}
\definecolor{color1}{rgb}{0.8901960784313725, 0.10980392156862745, 0.4745098039215686}
\definecolor{color2}{rgb}
{0.172549019607843,0.627450980392157,0.172549019607843}
\definecolor{color3}{rgb}{0.83921568627451,0.152941176470588,0.156862745098039}
\definecolor{color4}{rgb}{0.549019607843137,0.337254901960784,0.294117647058824}

\begin{axis}[
tick align=outside,
tick pos=left,
title={Hopper-v2},
x grid style={white!69.0196078431373!black},
xlabel={Number of Interactions},
xmajorgrids,
xmin=-1, xmax=9.45,
xtick style={color=black},
xtick={-1,1,3,5,7,9},
xlabel style={font=\Large},
title style={font=\Large},
ylabel style={font=\Large},
xticklabels={0,2e+4,4e+4,6e+4,8e+4,10e+4},
y grid style={white!69.0196078431373!black},
ylabel={Performance},
ymajorgrids,
ymin=138.890991666667, ymax=3558.758175,
ytick style={color=black}
]
\path [draw=color0, fill=color0, opacity=0.2]
(axis cs:0,361.2245)
--(axis cs:0,341.345666666667)
--(axis cs:1,591.8985)
--(axis cs:2,1333.666)
--(axis cs:3,1804.89066666667)
--(axis cs:4,2150.93366666667)
--(axis cs:5,2505.43333333333)
--(axis cs:6,2950.789)
--(axis cs:7,3173.35066666667)
--(axis cs:8,3092.18333333333)
--(axis cs:9,3235.731)
--(axis cs:9,3316.542)
--(axis cs:9,3316.542)
--(axis cs:8,3278.1485)
--(axis cs:7,3327.32333333333)
--(axis cs:6,3131.83333333333)
--(axis cs:5,2919.62666666667)
--(axis cs:4,2610.16433333333)
--(axis cs:3,2301.035)
--(axis cs:2,1807.7495)
--(axis cs:1,872.288166666667)
--(axis cs:0,361.2245)
--cycle;

\path [draw=color1, fill=color1, opacity=0.2]
(axis cs:0,351.961666666667)
--(axis cs:0,299.215)
--(axis cs:1,504.210333333333)
--(axis cs:2,853.7025)
--(axis cs:3,1312.87766666667)
--(axis cs:4,1398.55366666667)
--(axis cs:5,1948.07316666667)
--(axis cs:6,2386.02066666667)
--(axis cs:7,2511.37033333333)
--(axis cs:8,2711.684)
--(axis cs:9,3126.67566666667)
--(axis cs:9,3299.2105)
--(axis cs:9,3299.2105)
--(axis cs:8,3064.65716666667)
--(axis cs:7,2905.61483333333)
--(axis cs:6,2877.1955)
--(axis cs:5,2577.724)
--(axis cs:4,1908.74)
--(axis cs:3,1758.88333333333)
--(axis cs:2,1022.512)
--(axis cs:1,558.3105)
--(axis cs:0,351.961666666667)
--cycle;

\path [draw=color2, fill=color2, opacity=0.2]
(axis cs:0,392.677666666667)
--(axis cs:0,348.082333333333)
--(axis cs:1,508.998833333333)
--(axis cs:2,798.4085)
--(axis cs:3,860.3365)
--(axis cs:4,1184.4575)
--(axis cs:5,1311.2815)
--(axis cs:6,1623.95533333333)
--(axis cs:7,1811.37616666667)
--(axis cs:8,1997.025)
--(axis cs:9,2061.29716666667)
--(axis cs:9,2809.01183333333)
--(axis cs:9,2809.01183333333)
--(axis cs:8,2681.15833333333)
--(axis cs:7,2498.80716666667)
--(axis cs:6,2300.95433333333)
--(axis cs:5,2030.57533333333)
--(axis cs:4,1906.25016666667)
--(axis cs:3,1408.2165)
--(axis cs:2,1090.75683333333)
--(axis cs:1,587.314666666667)
--(axis cs:0,392.677666666667)
--cycle;

\path [draw=color3, fill=color3, opacity=0.2]
(axis cs:0,351.8625)
--(axis cs:0,319.6365)
--(axis cs:1,584.3245)
--(axis cs:2,1512.36)
--(axis cs:3,2019.02833333333)
--(axis cs:4,2232.271)
--(axis cs:5,2777.05666666667)
--(axis cs:6,3220.327)
--(axis cs:7,3199.69333333333)
--(axis cs:8,2978.45133333333)
--(axis cs:9,3326.56)
--(axis cs:9,3403.30966666667)
--(axis cs:9,3403.30966666667)
--(axis cs:8,3222.082)
--(axis cs:7,3245.702)
--(axis cs:6,3331.92)
--(axis cs:5,2998.75766666667)
--(axis cs:4,2780.14766666667)
--(axis cs:3,2461.07683333333)
--(axis cs:2,1937.76666666667)
--(axis cs:1,629.582166666667)
--(axis cs:0,351.8625)
--cycle;

\path [draw=color4, fill=color4, opacity=0.2]
(axis cs:0,345.420833333333)
--(axis cs:0,294.3395)
--(axis cs:1,472.195666666667)
--(axis cs:2,1065.4915)
--(axis cs:3,2050.08833333333)
--(axis cs:4,2398.35733333333)
--(axis cs:5,2633.76466666667)
--(axis cs:6,3046.59133333333)
--(axis cs:7,3076.30083333333)
--(axis cs:8,3026.6025)
--(axis cs:9,3201.12633333333)
--(axis cs:9,3225.688)
--(axis cs:9,3225.688)
--(axis cs:8,3168.97666666667)
--(axis cs:7,3187.719)
--(axis cs:6,3191.70116666667)
--(axis cs:5,2865.58216666667)
--(axis cs:4,2748.309)
--(axis cs:3,2398.179)
--(axis cs:2,1325.00866666667)
--(axis cs:1,574.843833333333)
--(axis cs:0,345.420833333333)
--cycle;

\addplot [very thick, color0]
table {%
0 351.82133
1 719.85446
2 1569.02585
3 2055.89706666667
4 2382.00984
5 2725.03385
6 3039.96683
7 3252.02233666667
8 3184.87284666667
9 3278.74332
};
\addplot [very thick, color1]
table {%
0 325.241773333333
1 531.4198
2 937.503456666667
3 1533.29424
4 1648.42443
5 2284.09290666667
6 2633.80503
7 2719.68228666667
8 2892.96552333333
9 3211.82945333333
};
\addplot [very thick, color2]
table {%
0 369.128926666667
1 545.786243333333
2 933.92177
3 1116.43118
4 1540.99721333333
5 1667.69106666667
6 1955.39243333333
7 2169.96652666667
8 2343.3937
9 2432.06039333333
};
\addplot [very thick, color3]
table {%
0 335.61804
1 606.77568
2 1731.35589666667
3 2250.00703333333
4 2502.42581333333
5 2893.55772
6 3273.97019
7 3222.23983333333
8 3100.41070666667
9 3364.52017666667
};
\addplot [very thick, color4]
table {%
0 319.065476666667
1 522.38357
2 1189.87637
3 2223.56247333333
4 2583.50964333333
5 2755.75577
6 3120.50168666667
7 3134.96012
8 3098.78845333333
9 3213.57925666667
};
\end{axis}

\end{tikzpicture} 
 \end{subfigure}
 \hfill
 \begin{subfigure}[t]{0.43\textwidth}
  % This file was created with tikzplotlib v0.9.12.
\begin{tikzpicture}[scale=0.45]

\definecolor{color0}{rgb}{0.297,0.445,0.9}
\definecolor{color1}{rgb}{0.8901960784313725, 0.10980392156862745, 0.4745098039215686}
\definecolor{color2}{rgb}
{0.172549019607843,0.627450980392157,0.172549019607843}
\definecolor{color3}{rgb}{0.549019607843137,0.337254901960784,0.294117647058824}
\definecolor{color4}{rgb}{0.83921568627451,0.152941176470588,0.156862745098039}

\begin{axis}[
legend cell align={left},
legend style={
  fill opacity=0.3,
  draw opacity=1,
  text opacity=1,
  at={(0.97,0.03)},
  anchor=south east,
  draw=white!80!black
},
tick align=outside,
tick pos=left,
title={HalfCheetah-v2},
xlabel style={font=\Large},
title style={font=\Large},
ylabel style={font=\Large},
x grid style={white!69.0196078431373!black},
xlabel={Number of Interactions},
xmajorgrids,
xmin=-1, xmax=19.95,
xtick style={color=black},
xtick={-1,4,9,14,19},
xticklabels={0,5e+4,10e+4,15e+4,20e+4},
y grid style={white!69.0196078431373!black},
ylabel={Performance},
ymajorgrids,
ymin=-13.5507333333335, ymax=11617.9094,
ytick style={color=black}
]
\path [draw=color0, fill=color0, opacity=0.2]
(axis cs:0,1919.98269999588)
--(axis cs:0,1798.42675986687)
--(axis cs:1,3900.17538870853)
--(axis cs:2,5652.19731727149)
--(axis cs:3,6390.47039055218)
--(axis cs:4,6264.10234010428)
--(axis cs:5,7335.39735297416)
--(axis cs:6,7839.33434388348)
--(axis cs:7,8323.28436147108)
--(axis cs:8,8694.27085164444)
--(axis cs:9,8810.1042396506)
--(axis cs:10,9031.66658733355)
--(axis cs:11,9140.70976878258)
--(axis cs:12,9309.08714680512)
--(axis cs:13,9548.00436915718)
--(axis cs:14,9500.64397547634)
--(axis cs:15,10031.3363609921)
--(axis cs:16,10072.6559840643)
--(axis cs:17,10140.956195336)
--(axis cs:18,10382.0073075551)
--(axis cs:19,10523.1902278343)
--(axis cs:19,10780.7424724275)
--(axis cs:19,10780.7424724275)
--(axis cs:18,10644.0326612621)
--(axis cs:17,10284.7431851175)
--(axis cs:16,10204.9408390716)
--(axis cs:15,10226.5474313821)
--(axis cs:14,9857.42796208621)
--(axis cs:13,9693.17492439232)
--(axis cs:12,9487.4327029082)
--(axis cs:11,9268.82951154724)
--(axis cs:10,9185.53572589939)
--(axis cs:9,8947.6424362269)
--(axis cs:8,8794.73909096735)
--(axis cs:7,8463.62738675102)
--(axis cs:6,7925.49906170016)
--(axis cs:5,7492.1837809855)
--(axis cs:4,6943.53492029539)
--(axis cs:3,6546.0706446718)
--(axis cs:2,5890.52871549394)
--(axis cs:1,4462.35852788242)
--(axis cs:0,1919.98269999588)
--cycle;

\path [draw=color1, fill=color1, opacity=0.2]
(axis cs:0,1330.97766666667)
--(axis cs:0,875.634666666667)
--(axis cs:1,2910.4105)
--(axis cs:2,4362.31683333333)
--(axis cs:3,5760.18916666667)
--(axis cs:4,6877.63033333333)
--(axis cs:5,7465.67833333333)
--(axis cs:6,7753.33416666667)
--(axis cs:7,8277.51233333333)
--(axis cs:8,8583.48133333333)
--(axis cs:9,8893.03683333334)
--(axis cs:10,9182.23883333333)
--(axis cs:11,9387.19466666667)
--(axis cs:12,9541.0185)
--(axis cs:13,9686.25366666666)
--(axis cs:14,9889.284)
--(axis cs:15,10024.9653333333)
--(axis cs:16,10152.924)
--(axis cs:17,10244.4043333333)
--(axis cs:18,9491.03383333333)
--(axis cs:19,10366.8406666667)
--(axis cs:19,10785.8581666667)
--(axis cs:19,10785.8581666667)
--(axis cs:18,10385.9573333333)
--(axis cs:17,10653.658)
--(axis cs:16,10568.5155)
--(axis cs:15,10471.421)
--(axis cs:14,10289.0318333333)
--(axis cs:13,10127.4986666667)
--(axis cs:12,10004.4346666667)
--(axis cs:11,9887.83083333333)
--(axis cs:10,9700.13566666667)
--(axis cs:9,9431.92683333333)
--(axis cs:8,9133.67933333333)
--(axis cs:7,8905.4385)
--(axis cs:6,8546.2085)
--(axis cs:5,8139.05533333333)
--(axis cs:4,7515.2145)
--(axis cs:3,6704.23283333333)
--(axis cs:2,5474.24666666667)
--(axis cs:1,3881.31166666667)
--(axis cs:0,1330.97766666667)
--cycle;

\path [draw=color2, fill=color2, opacity=0.2]
(axis cs:0,1250.37716666667)
--(axis cs:0,855.862166666667)
--(axis cs:1,3582.53683333333)
--(axis cs:2,5028.8395)
--(axis cs:3,6118.5155)
--(axis cs:4,6760.39766666667)
--(axis cs:5,7048.7675)
--(axis cs:6,7667.84866666667)
--(axis cs:7,7875.6855)
--(axis cs:8,7943.5)
--(axis cs:9,8315.17983333334)
--(axis cs:10,8021.40283333333)
--(axis cs:11,9045.02833333333)
--(axis cs:12,9183.51583333333)
--(axis cs:13,9317.5695)
--(axis cs:14,9393.98316666667)
--(axis cs:15,9548.32983333334)
--(axis cs:16,9719.1445)
--(axis cs:17,9889.75683333333)
--(axis cs:18,10005.2598333333)
--(axis cs:19,10197.8023333333)
--(axis cs:19,10998.9496666667)
--(axis cs:19,10998.9496666667)
--(axis cs:18,10849.0318333333)
--(axis cs:17,10751.3891666667)
--(axis cs:16,10546.4716666667)
--(axis cs:15,10462.0951666667)
--(axis cs:14,10334.0571666667)
--(axis cs:13,10211.2078333333)
--(axis cs:12,10065.5198333333)
--(axis cs:11,9862.7765)
--(axis cs:10,9122.98166666667)
--(axis cs:9,9257.78716666667)
--(axis cs:8,8852.96)
--(axis cs:7,8688.21266666667)
--(axis cs:6,8466.0165)
--(axis cs:5,7684.8495)
--(axis cs:4,7365.60483333333)
--(axis cs:3,6669.20066666667)
--(axis cs:2,5628.008)
--(axis cs:1,4167.1515)
--(axis cs:0,1250.37716666667)
--cycle;

\path [draw=color3, fill=color3, opacity=0.2]
(axis cs:0,1074.2595)
--(axis cs:0,1025.55)
--(axis cs:1,3675.88833333333)
--(axis cs:2,5363.4045)
--(axis cs:3,6133.31666666667)
--(axis cs:4,7282.93333333333)
--(axis cs:5,7732.8225)
--(axis cs:6,7597.36333333333)
--(axis cs:7,8367.65316666667)
--(axis cs:8,8960.62)
--(axis cs:9,7536.5)
--(axis cs:10,8613.48)
--(axis cs:11,9678.22)
--(axis cs:12,9816.76)
--(axis cs:13,9955.72333333333)
--(axis cs:14,10300.52)
--(axis cs:15,10550.12)
--(axis cs:16,10405.55)
--(axis cs:17,10633.006)
--(axis cs:18,10594.0566666667)
--(axis cs:19,10877.4733333333)
--(axis cs:19,11089.2066666667)
--(axis cs:19,11089.2066666667)
--(axis cs:18,10753.8333333333)
--(axis cs:17,10837.2333333333)
--(axis cs:16,10588.44)
--(axis cs:15,10664.28)
--(axis cs:14,10479.9266666667)
--(axis cs:13,10227.6766666667)
--(axis cs:12,10066.06)
--(axis cs:11,9883.68)
--(axis cs:10,8960.32)
--(axis cs:9,8488.12)
--(axis cs:8,9165.13883333333)
--(axis cs:7,8697.39)
--(axis cs:6,8213.6)
--(axis cs:5,7848.75)
--(axis cs:4,7470.8)
--(axis cs:3,6442.81333333333)
--(axis cs:2,5425.40666666667)
--(axis cs:1,3744.79)
--(axis cs:0,1074.2595)
--cycle;

\path [draw=color4, fill=color4, opacity=0.2]
(axis cs:0,936.358)
--(axis cs:0,515.152)
--(axis cs:1,3755.13666666667)
--(axis cs:2,5987.81683333333)
--(axis cs:3,6830.8205)
--(axis cs:4,7209.558)
--(axis cs:5,7976.52666666667)
--(axis cs:6,8381.52716666667)
--(axis cs:7,8697.12383333333)
--(axis cs:8,8989.2155)
--(axis cs:9,9311.83666666666)
--(axis cs:10,9475.75766666667)
--(axis cs:11,9588.5705)
--(axis cs:12,9754.69966666667)
--(axis cs:13,10004.9756666667)
--(axis cs:14,9926.89)
--(axis cs:15,10172.74)
--(axis cs:16,10386.326)
--(axis cs:17,10593.757)
--(axis cs:18,10572.094)
--(axis cs:19,10584.2968333333)
--(axis cs:19,10940.6866666667)
--(axis cs:19,10940.6866666667)
--(axis cs:18,10908.1166666667)
--(axis cs:17,10918.41)
--(axis cs:16,10728.4133333333)
--(axis cs:15,10522.09)
--(axis cs:14,10378.4248333333)
--(axis cs:13,10293.98)
--(axis cs:12,10026.4866666667)
--(axis cs:11,9899.42316666667)
--(axis cs:10,9690.459)
--(axis cs:9,9621.82116666666)
--(axis cs:8,9113.91566666667)
--(axis cs:7,8949.27)
--(axis cs:6,8739.568)
--(axis cs:5,8323.60333333333)
--(axis cs:4,7504.56866666667)
--(axis cs:3,7174.59666666667)
--(axis cs:2,6248.8395)
--(axis cs:1,4218.61333333333)
--(axis cs:0,936.358)
--cycle;

\addplot [very thick, color0]
table {%
0 1862.04167915857
1 4194.44281368422
2 5779.9277072087
3 6465.3924259729
4 6625.91390587804
5 7411.11774352419
6 7884.77590002357
7 8394.7150793534
8 8748.10200715901
9 8881.73064381354
10 9110.12243784273
11 9205.1642039255
12 9398.46983254388
13 9620.15020857221
14 9689.59018402517
15 10131.0859628861
16 10141.336005873
17 10215.6979153396
18 10514.9551287601
19 10648.9799747951
};
\addplot [very thick, color1]
table {%
0 1110.94322333333
1 3402.08271
2 4957.35197
3 6233.09985333333
4 7186.17637666667
5 7805.25347666667
6 8165.62652666667
7 8582.43207666667
8 8863.18753333333
9 9166.28245333333
10 9447.63307
11 9633.99456333333
12 9773.18093
13 9909.54794666667
14 10089.8394766667
15 10255.41584
16 10362.75679
17 10451.8734066667
18 9946.12245666667
19 10579.299
};
\addplot [very thick, color2]
table {%
0 1063.69790333333
1 3866.12242
2 5329.59514666667
3 6391.5668
4 7063.69025666667
5 7374.57062333333
6 8071.62813
7 8277.93757333333
8 8413.75229
9 8776.66609666667
10 8598.17258666667
11 9471.97196
12 9614.49986333333
13 9769.79539
14 9864.62841
15 10015.0522266667
16 10147.8400833333
17 10327.0765633333
18 10455.6062233333
19 10611.8754133333
};
\addplot [very thick, color3]
table {%
0 1051.30391
1 3710.86604333333
2 5394.43547
3 6290.99234666667
4 7378.56349
5 7791.1472
6 7912.85286333333
7 8532.93004
8 9055.11401333334
9 8028.74261
10 8785.44754
11 9774.49565666667
12 9930.27951333333
13 10089.9007033333
14 10390.36528
15 10607.16818
16 10493.84295
17 10734.11328
18 10673.0679666667
19 10982.1103466667
};
\addplot [very thick, color4]
table {%
0 730.014163333333
1 3991.90040666667
2 6122.69236
3 7005.55848333333
4 7355.35607
5 8148.37062
6 8560.24336
7 8818.95273333333
8 9054.24293333333
9 9466.87802
10 9587.20858666667
11 9741.73666333333
12 9894.45505
13 10144.3429666667
14 10157.3937133333
15 10343.9585266667
16 10560.8314866667
17 10749.63176
18 10735.3689766667
19 10763.87363
};
\end{axis}

\end{tikzpicture} 
 \end{subfigure}
 %\quad
 \vspace{-1.5em}
 \caption{
  Performance of the proposed value function training mechanisms against baselines. Results are averaged over 8 random seeds and shaded regions correspond to the 95\% confidence interval among seeds.
 }
\label{fig:main_experimental_results}
\vspace{-1.5em}
\end{figure}
% To further demonstrate the generalization of our methods, we replace the policy part of MBPO from Soft Actor-Critic (SAC)~\citep{haarnoja2018soft} to Twin Delayed Deep Deterministic policy gradient (TD3)~\citep{fujimoto2018addressing} denoted as MBPO-TD3.
% As with the original MBPO, we also compare with two versions of MBPO-TD3, MBPO-TD3 with  probabilistic dynamics model ensemble (MBPO-TD3-prob-en) and MBPO-TD3 with single deterministic dynamics model (MBPO-TD3-de-sig).
% We implement our methods and the baseline methods using based on the PyTorch version MBPO~\footnote{\href{https://github.com/Xingyu-Lin/mbpo_pytorch}{https://github.com/Xingyu-Lin/mbpo\_pytorch}}.
% The implementation details are in Appendix~\ref{}.

% We evaluate our methods and these baseline methods on five MuJoCo continues control environments including Hopper, Walker2d, Ant, Humanoid, and HalfCheetah.
% The maximum horizon in these environments is 1000.
% The experiment results of original MBPO are shown in figure~\ref{}.
% Due to space limitation, we put the experiment results of MBPO-TD3 in Appendix~\ref{}.

% \textbf{Todo: Analysis of results}

\subsection{Visualizing the Value-aware Model Error}
\label{s5s2:vaml}
Given the excellent performance of robust regularization, we now verify its effectiveness in controlling the value-aware model error. On Walker, we compare it with a variant: computing the perturbation with uniform random noise instead of adversarially choosing the perturbation. 

% \vspace{-0.5em}
% \begin{wrapfigure}{r}{0.3\textwidth}
%  \vspace{-2.0em}
% \centering
%  \begin{subfigure}[t]{0.25\textwidth}
%   \centering
%   \input{plot/additional_plot/walker2d_vaml}
%    \vspace{-1.5em}
%    \centering
%   \subcaption{}
%   \centering
%     \label{sfig:vaml}
%  \end{subfigure}
%   \vspace{-1.0em}
%   \hspace{-1.5em}
%  \begin{subfigure}[t]{0.25\textwidth}
%   \centering
% \input{plot/additional_plot/walker2d_random_noise} 
%    \vspace{-1.5em}
%    \hspace{2.0em}
%     \subcaption{}
%     \vspace{-1.5em}
%   \centering
%     \label{sfig:perf_ablate}
%  \end{subfigure}
%  \hfill
% \begin{subfigure}[t]{0.25\textwidth}
%   \centering
%   \input{plot/additional_plot/walker2d_spec_norm_vaml}
%    \vspace{-1.5em}
%    \centering
%   \subcaption{}
%   \centering
%     \label{sfig:spec_norm_vaml}
%  \end{subfigure}
%  \hfill
%  \begin{subfigure}[t]{0.25\textwidth}
%   \centering
%   \input{plot/additional_plot/walker2d_spec_norm_performance} 
%    \vspace{-1.5em}
%    \hspace{2.0em}
%     \subcaption{}
%     \vspace{-1.5em}
%   \centering
%     \label{sfig:perf_ablate}
%  \end{subfigure}
%  \vspace{1.5em}
%  \centering
% \caption{}
% \label{fig:fig_vaml}
% \end{wrapfigure}

As we can see from the first figure in Figure~\ref{fig:vaml}, robust regularization has a much smaller value-aware model error than all other methods. Adding uniformly random noise can somewhat reduce the value-aware model error compared with a single deterministic model (without any noise added). However, it is still much less effective than robust regularization, which computes the error adversarially.
In the second figure of Figure~\ref{fig:vaml}, we see that with uniform random noise, it achieves a slightly better performance than a single deterministic model but is still far worse than robust regularization. The results further verify that by adversarially choosing the noise, robust regularization is extremely effective at controlling the value-aware model error, resulting in great empirical performance. In Appendix~\ref{a3:add_vaml}, we provide additional experimental results of value-aware model error in the rest of the four environments.

\subsection{Robust Regularization vs. Spectral Normalization}
\label{s5s3:rr_sn}
In Table~\ref{tab:time_compare}, we already see that robust regularization achieves a much better time efficiency than spectral normalization. Now we analyze their performance difference through the lens of value-aware model error. To reduce the value-aware model error, we only need to control the local Lipschitz constant of the value network over the model-uncertain region, and thus controlling the global Lipschitz constant of the value function with spectral normalization is not necessary. To see this, in the last two figures of Figure~\ref{fig:vaml}
% figure~\ref{sfig:vaml_c} and \ref{sfig:vaml_d} 
, we visualize the value-aware model error and performance of the algorithm on Walker with varying spectral radius $\beta$, defined in Section~\ref{s4s1_spectral}. 

\begin{figure}[!t]
\vspace{0em}
\centering
 \begin{subfigure}[t]{0.2\textwidth}
  \centering
  % This file was created with tikzplotlib v0.9.12.
\begin{tikzpicture}[scale=0.4]

\definecolor{color0}{rgb}{0.297,0.445,0.9}
\definecolor{color1}{rgb}{0.172549019607843,0.627450980392157,0.172549019607843}
\definecolor{color2}{rgb}{0.83921568627451,0.152941176470588,0.156862745098039}
\definecolor{color3}{rgb}{1,0.55,0}

\begin{axis}[
legend cell align={left},
legend style={fill opacity=0.3, draw opacity=1, text opacity=1, draw=white!80!black},
tick align=outside,
tick pos=left,
x grid style={white!69.0196078431373!black},
xlabel={Number of Interactions},
xlabel style={font=\Large},
title style={font=\Large},
ylabel style={font=\large},
xmajorgrids,
xmin=-1, xmax=19.95,
xtick style={color=black},
xtick={-1,4,9,14,19},
xticklabels={0,5e+4,10e+4,15e+4,20e+4},
y grid style={white!69.0196078431373!black},
ylabel={Value-aware Model Error (Log Scale)},
ymajorgrids,
ymin=1.59734381533071, ymax=7.54167245609963,
ytick style={color=black}
]
\path [draw=color0, fill=color0, opacity=0.2]
(axis cs:0,2.81748673645189)
--(axis cs:0,2.58981808152078)
--(axis cs:1,2.77088853146468)
--(axis cs:2,2.34789008023446)
--(axis cs:3,2.24418521910821)
--(axis cs:4,2.10581077450115)
--(axis cs:5,2.13030071402482)
--(axis cs:6,2.25703354868507)
--(axis cs:7,2.35203667152322)
--(axis cs:8,2.31863169711325)
--(axis cs:9,2.29166799506554)
--(axis cs:10,2.2959267513767)
--(axis cs:11,2.31037023692807)
--(axis cs:12,2.30330285414503)
--(axis cs:13,2.31784970542001)
--(axis cs:14,2.31832140452386)
--(axis cs:15,2.31961850056719)
--(axis cs:16,2.25968878286545)
--(axis cs:17,2.22943049753514)
--(axis cs:18,2.25900549158108)
--(axis cs:19,2.2597043401196)
--(axis cs:19,2.3474993144912)
--(axis cs:19,2.3474993144912)
--(axis cs:18,2.36510112173151)
--(axis cs:17,2.31314546432508)
--(axis cs:16,2.34106927205517)
--(axis cs:15,2.39774953928551)
--(axis cs:14,2.40059113529525)
--(axis cs:13,2.40844476414588)
--(axis cs:12,2.39977289545573)
--(axis cs:11,2.4186511293242)
--(axis cs:10,2.41186390903306)
--(axis cs:9,2.39741062405083)
--(axis cs:8,2.41857320348624)
--(axis cs:7,2.42941391495885)
--(axis cs:6,2.36192064013225)
--(axis cs:5,2.29883414732637)
--(axis cs:4,2.25871438967362)
--(axis cs:3,2.3577590374381)
--(axis cs:2,2.40548503457925)
--(axis cs:1,2.87267847131674)
--(axis cs:0,2.81748673645189)
--cycle;

\path [draw=color1, fill=color1, opacity=0.2]
(axis cs:0,6.62740706917695)
--(axis cs:0,4.67009459377407)
--(axis cs:1,4.82070434974252)
--(axis cs:2,3.93653225602816)
--(axis cs:3,3.14864795213264)
--(axis cs:4,3.06248521524954)
--(axis cs:5,3.06282350132321)
--(axis cs:6,3.15010435080495)
--(axis cs:7,3.04644429383389)
--(axis cs:8,2.9836974092087)
--(axis cs:9,2.80670904742246)
--(axis cs:10,2.70335512069474)
--(axis cs:11,2.76499193263796)
--(axis cs:12,2.65173285282749)
--(axis cs:13,2.69438455382327)
--(axis cs:14,2.75449231766022)
--(axis cs:15,2.62936637911357)
--(axis cs:16,2.69708101737461)
--(axis cs:17,2.89044553088297)
--(axis cs:18,2.79547002698656)
--(axis cs:19,2.84613328454671)
--(axis cs:19,2.98717300886176)
--(axis cs:19,2.98717300886176)
--(axis cs:18,3.0049014966016)
--(axis cs:17,3.52050982217255)
--(axis cs:16,2.87774400981042)
--(axis cs:15,2.83240602097136)
--(axis cs:14,2.98816450216799)
--(axis cs:13,2.90185999327417)
--(axis cs:12,2.80918414809664)
--(axis cs:11,2.99317208723361)
--(axis cs:10,2.96558863413369)
--(axis cs:9,3.17161911544955)
--(axis cs:8,3.4314528995204)
--(axis cs:7,3.8211248584114)
--(axis cs:6,4.11810268986344)
--(axis cs:5,3.99439315644537)
--(axis cs:4,4.13718411321813)
--(axis cs:3,4.45941972498511)
--(axis cs:2,5.83263805770241)
--(axis cs:1,7.27147569970105)
--(axis cs:0,6.62740706917695)
--cycle;

\path [draw=color2, fill=color2, opacity=0.2]
(axis cs:0,2.63245497941473)
--(axis cs:0,2.2769104661099)
--(axis cs:1,2.17123956237894)
--(axis cs:2,1.897437005002)
--(axis cs:3,1.8675405717293)
--(axis cs:4,1.94887623188559)
--(axis cs:5,1.97865354480255)
--(axis cs:6,2.030474232787)
--(axis cs:7,2.03452797279905)
--(axis cs:8,2.05099877686553)
--(axis cs:9,2.0427985076241)
--(axis cs:10,2.06093000140447)
--(axis cs:11,2.08196632010185)
--(axis cs:12,2.15730917614581)
--(axis cs:13,2.10785712793548)
--(axis cs:14,2.11809528695482)
--(axis cs:15,2.08581143158639)
--(axis cs:16,2.17044177065505)
--(axis cs:17,2.11578261097413)
--(axis cs:18,2.11291862098602)
--(axis cs:19,2.12666487721785)
--(axis cs:19,2.25472379769854)
--(axis cs:19,2.25472379769854)
--(axis cs:18,2.24853386419106)
--(axis cs:17,2.2503547779356)
--(axis cs:16,2.36090107900631)
--(axis cs:15,2.21613538432232)
--(axis cs:14,2.24778294386766)
--(axis cs:13,2.22480533021696)
--(axis cs:12,2.40471115486486)
--(axis cs:11,2.19429496461129)
--(axis cs:10,2.17213985978511)
--(axis cs:9,2.17354547357388)
--(axis cs:8,2.17914435356784)
--(axis cs:7,2.16162191509619)
--(axis cs:6,2.16462439816407)
--(axis cs:5,2.14633820392795)
--(axis cs:4,2.13943756153359)
--(axis cs:3,2.09343936650287)
--(axis cs:2,2.12248350110831)
--(axis cs:1,2.42450820349714)
--(axis cs:0,2.63245497941473)
--cycle;

\path [draw=color3, fill=color3, opacity=0.2]
(axis cs:0,3.86039833617922)
--(axis cs:0,3.41131782282755)
--(axis cs:1,3.126153336523)
--(axis cs:2,2.70385210918574)
--(axis cs:3,2.3367298529138)
--(axis cs:4,2.46671666698347)
--(axis cs:5,2.44614383308253)
--(axis cs:6,2.38801685656035)
--(axis cs:7,2.48756448172892)
--(axis cs:8,2.63637735756818)
--(axis cs:9,2.53567967907599)
--(axis cs:10,2.62691658983168)
--(axis cs:11,2.65533872318829)
--(axis cs:12,2.68664013375988)
--(axis cs:13,2.69613870614507)
--(axis cs:14,2.79987263929762)
--(axis cs:15,2.78621722759013)
--(axis cs:16,2.75340960957976)
--(axis cs:17,2.75189806937345)
--(axis cs:18,2.76595757333708)
--(axis cs:19,2.78849595778054)
--(axis cs:19,2.95864096972912)
--(axis cs:19,2.95864096972912)
--(axis cs:18,2.95104233515067)
--(axis cs:17,2.91787231300943)
--(axis cs:16,2.91112327195706)
--(axis cs:15,2.96338011269626)
--(axis cs:14,3.1658866644223)
--(axis cs:13,2.86867391337958)
--(axis cs:12,2.87265784051179)
--(axis cs:11,2.85722881456646)
--(axis cs:10,2.95213643173826)
--(axis cs:9,2.73746209698127)
--(axis cs:8,2.9181274076699)
--(axis cs:7,2.66145828553599)
--(axis cs:6,2.5423799539001)
--(axis cs:5,2.81907463094768)
--(axis cs:4,3.0138981856918)
--(axis cs:3,2.50494425255066)
--(axis cs:2,2.91182263094816)
--(axis cs:1,3.44209616802677)
--(axis cs:0,3.86039833617922)
--cycle;

\addplot [very thick, color0]
table {%
0 2.69937296829589
1 2.81989949983648
2 2.37676332877011
3 2.30068755180654
4 2.18137444693788
5 2.21486811083225
6 2.30823031137546
7 2.39087521145738
8 2.37068102063292
9 2.34392330626982
10 2.35270353904833
11 2.36362024950274
12 2.35224236952605
13 2.36268006962767
14 2.36157318999626
15 2.35976781839652
16 2.29970023692606
17 2.27110261970543
18 2.30855899823872
19 2.30334713576136
};
\addlegendentry{Probabilistic Ensemble}
\addplot [very thick, color1]
table {%
0 5.64738344260263
1 5.99348261534713
2 4.84287802695371
3 3.77645855322148
4 3.58216487843923
5 3.50367790673845
6 3.60299893370114
7 3.40299752572342
8 3.18910837560332
9 2.98271859760287
10 2.83323673519043
11 2.87977595025641
12 2.73404710942605
13 2.79828444355696
14 2.87025725201173
15 2.73056094093531
16 2.78754540424996
17 3.17789198307262
18 2.89815132353579
19 2.91580274245655
};
\addlegendentry{Single Deterministic}
\addplot [very thick, color2]
table {%
0 2.45228823026828
1 2.29746853845857
2 2.0095871042937
3 1.98193236792124
4 2.04634174540013
5 2.06043681961027
6 2.09498602490532
7 2.09848154135988
8 2.11402257410674
9 2.1059516962265
10 2.11687446912136
11 2.13595751774374
12 2.27574270500665
13 2.16635255603869
14 2.17973321043937
15 2.15272167971889
16 2.26424157210768
17 2.18461517570948
18 2.18054925044898
19 2.1916252014697
};
\addlegendentry{Robust Regularization}
\addplot [very thick, color3]
table {%
0 3.63933513531246
1 3.27640734786716
2 2.80835300698857
3 2.42172906091492
4 2.73169420331845
5 2.61976296751214
6 2.4641927189362
7 2.57253812967987
8 2.77616927529497
9 2.63830683158037
10 2.78914351236511
11 2.75578486789123
12 2.77677205137963
13 2.78190705572197
14 2.96933051544347
15 2.87328992057615
16 2.83200201839509
17 2.8339237896532
18 2.85762845223168
19 2.87222165343688
};
\addlegendentry{Uniform Random Noise}
\end{axis}

\end{tikzpicture}
  \vspace{-1.5em}
  \centering
    % \caption{}
    \label{sfig:vaml_a}
 \end{subfigure}
 \hfill
 \begin{subfigure}[t]{0.23\textwidth}
  \centering
  % This file was created with tikzplotlib v0.9.12.
\begin{tikzpicture}[scale=0.4]

\definecolor{color0}{rgb}{0.297,0.445,0.9}
\definecolor{color1}{rgb}{0.172549019607843,0.627450980392157,0.172549019607843}
\definecolor{color2}{rgb}{0.83921568627451,0.152941176470588,0.156862745098039}
\definecolor{color3}{rgb}{1,0.55,0}

\begin{axis}[
legend cell align={left},
legend style={
  fill opacity=0.3,
  draw opacity=1,
  text opacity=1,
  at={(0.97,0.03)},
  anchor=south east,
  draw=white!80!black
},
xlabel style={font=\Large},
title style={font=\Large},
ylabel style={font=\large},
tick align=outside,
tick pos=left,
x grid style={white!69.0196078431373!black},
xlabel={Number of Interactions},
xmajorgrids,
xmin=-1, xmax=19.95,
xtick style={color=black},
xtick={-1,4,9,14,19},
xticklabels={0,5e+4,10e+4,15e+4,20e+4},
y grid style={white!69.0196078431373!black},
ylabel={Performance},
ymajorgrids,
ymin=-105.934908333333, ymax=4869.176075,
ytick style={color=black}
]
\path [draw=color0, fill=color0, opacity=0.2]
(axis cs:0,330.313666666667)
--(axis cs:0,308.519)
--(axis cs:1,486.113)
--(axis cs:2,567.194666666667)
--(axis cs:3,891.686833333334)
--(axis cs:4,1452.66033333333)
--(axis cs:5,2078.04266666667)
--(axis cs:6,2526.144)
--(axis cs:7,2812.56333333333)
--(axis cs:8,3204.74383333333)
--(axis cs:9,3358.9015)
--(axis cs:10,3459.67283333333)
--(axis cs:11,3571.882)
--(axis cs:12,3466.83966666667)
--(axis cs:13,3902.66833333333)
--(axis cs:14,3784.20366666667)
--(axis cs:15,4101.16483333333)
--(axis cs:16,3725.154)
--(axis cs:17,4050.60466666667)
--(axis cs:18,4128.38666666667)
--(axis cs:19,4192.92866666667)
--(axis cs:19,4563.3125)
--(axis cs:19,4563.3125)
--(axis cs:18,4415.99566666667)
--(axis cs:17,4368.012)
--(axis cs:16,4132.71183333333)
--(axis cs:15,4359.17133333333)
--(axis cs:14,4032.947)
--(axis cs:13,4186.754)
--(axis cs:12,3947.298)
--(axis cs:11,4007.29016666667)
--(axis cs:10,3896.56383333333)
--(axis cs:9,3807.42133333333)
--(axis cs:8,3697.33433333333)
--(axis cs:7,3299.13266666667)
--(axis cs:6,3088.88783333333)
--(axis cs:5,2689.27483333333)
--(axis cs:4,2055.912)
--(axis cs:3,1413.24283333333)
--(axis cs:2,744.135833333333)
--(axis cs:1,566.860666666667)
--(axis cs:0,330.313666666667)
--cycle;

\path [draw=color1, fill=color1, opacity=0.2]
(axis cs:0,185.275166666667)
--(axis cs:0,120.2065)
--(axis cs:1,287.229166666667)
--(axis cs:2,330.790166666667)
--(axis cs:3,487.070666666667)
--(axis cs:4,571.429)
--(axis cs:5,801.521166666667)
--(axis cs:6,993.362)
--(axis cs:7,1033.68666666667)
--(axis cs:8,1273.1565)
--(axis cs:9,1509.95516666667)
--(axis cs:10,1912.76316666667)
--(axis cs:11,1836.12366666667)
--(axis cs:12,2156.17383333333)
--(axis cs:13,2222.8195)
--(axis cs:14,2093.95566666667)
--(axis cs:15,2702.2125)
--(axis cs:16,2714.09666666667)
--(axis cs:17,2491.45083333333)
--(axis cs:18,2768.67733333333)
--(axis cs:19,2829.48316666667)
--(axis cs:19,3274.4185)
--(axis cs:19,3274.4185)
--(axis cs:18,3301.39433333333)
--(axis cs:17,2966.42466666667)
--(axis cs:16,3231.59866666667)
--(axis cs:15,3261.28366666667)
--(axis cs:14,2641.23)
--(axis cs:13,2982.62916666667)
--(axis cs:12,2858.07966666667)
--(axis cs:11,2515.751)
--(axis cs:10,2594.2855)
--(axis cs:9,2073.487)
--(axis cs:8,1958.94916666667)
--(axis cs:7,1725.57766666667)
--(axis cs:6,1542.38616666667)
--(axis cs:5,1202.8945)
--(axis cs:4,820.054333333333)
--(axis cs:3,715.321833333333)
--(axis cs:2,494.7215)
--(axis cs:1,342.9115)
--(axis cs:0,185.275166666667)
--cycle;

\path [draw=color2, fill=color2, opacity=0.2]
(axis cs:0,401.616666666667)
--(axis cs:0,362.566666666667)
--(axis cs:1,369.346666666667)
--(axis cs:2,522.977666666667)
--(axis cs:3,586.78)
--(axis cs:4,1038.70666666667)
--(axis cs:5,2069.26666666667)
--(axis cs:6,2649.84)
--(axis cs:7,3006.76333333333)
--(axis cs:8,3816.15333333333)
--(axis cs:9,3928.47733333333)
--(axis cs:10,4042.03333333333)
--(axis cs:11,3804.09333333333)
--(axis cs:12,3893.95333333333)
--(axis cs:13,4231.86)
--(axis cs:14,4322.77666666667)
--(axis cs:15,4484.96666666667)
--(axis cs:16,4314.79333333333)
--(axis cs:17,4478.46333333333)
--(axis cs:18,4584.64)
--(axis cs:19,4332.02333333333)
--(axis cs:19,4566.64333333333)
--(axis cs:19,4566.64333333333)
--(axis cs:18,4635.88666666667)
--(axis cs:17,4643.03466666667)
--(axis cs:16,4520.7)
--(axis cs:15,4592.56666666667)
--(axis cs:14,4498.62333333333)
--(axis cs:13,4320.16)
--(axis cs:12,4182.19666666667)
--(axis cs:11,4182.37333333333)
--(axis cs:10,4093.02666666667)
--(axis cs:9,4158.05)
--(axis cs:8,3958.44766666667)
--(axis cs:7,3249.63666666667)
--(axis cs:6,2813.09333333333)
--(axis cs:5,2409.81333333333)
--(axis cs:4,1319.64666666667)
--(axis cs:3,666.783333333333)
--(axis cs:2,644.743333333333)
--(axis cs:1,404.08)
--(axis cs:0,401.616666666667)
--cycle;

\path [draw=color3, fill=color3, opacity=0.2]
(axis cs:0,241.072166666667)
--(axis cs:0,176.050333333333)
--(axis cs:1,282.941666666667)
--(axis cs:2,379.841166666667)
--(axis cs:3,462.947333333333)
--(axis cs:4,710.946666666667)
--(axis cs:5,1112.81133333333)
--(axis cs:6,1552.8455)
--(axis cs:7,1701.83333333333)
--(axis cs:8,2188.193)
--(axis cs:9,2623.19666666667)
--(axis cs:10,2597.0515)
--(axis cs:11,2381.32683333333)
--(axis cs:12,3040.78966666667)
--(axis cs:13,2991.73666666667)
--(axis cs:14,2764.0345)
--(axis cs:15,3186.44366666667)
--(axis cs:16,3606.86433333333)
--(axis cs:17,3484.92783333333)
--(axis cs:18,3591.418)
--(axis cs:19,3366.87933333333)
--(axis cs:19,3656.6645)
--(axis cs:19,3656.6645)
--(axis cs:18,3837.78066666667)
--(axis cs:17,3813.113)
--(axis cs:16,3888.41916666667)
--(axis cs:15,3469.54083333333)
--(axis cs:14,3226.97966666667)
--(axis cs:13,3297.39116666667)
--(axis cs:12,3219.94566666667)
--(axis cs:11,2739.00833333333)
--(axis cs:10,2927.14016666667)
--(axis cs:9,2964.1985)
--(axis cs:8,2599.36433333333)
--(axis cs:7,2199.554)
--(axis cs:6,2085.67433333333)
--(axis cs:5,1564.90266666667)
--(axis cs:4,977.358833333333)
--(axis cs:3,537.050666666667)
--(axis cs:2,403.330333333333)
--(axis cs:1,358.379666666667)
--(axis cs:0,241.072166666667)
--cycle;

\addplot [very thick, color0]
table {%
0 319.289136666667
1 524.559003333333
2 651.149666666667
3 1128.72768666667
4 1747.73023
5 2387.74693666667
6 2802.5787
7 3057.55263333333
8 3456.76774
9 3596.18605666667
10 3687.45848333333
11 3801.29329333333
12 3714.95464
13 4051.92573333333
14 3903.80539
15 4237.13866666667
16 3924.60339666667
17 4212.91326666667
18 4271.95926333333
19 4386.84106333333
};
\addlegendentry{Probabilistic Ensemble}
\addplot [very thick, color1]
table {%
0 152.545256666667
1 315.138986666667
2 409.816366666667
3 593.064793333333
4 696.480306666667
5 990.236846666667
6 1266.18504666667
7 1371.52563
8 1613.94372333333
9 1788.66574666667
10 2251.58896666667
11 2159.58657666667
12 2510.45103333333
13 2594.08677333333
14 2355.01722333333
15 2986.36354
16 2986.52669666667
17 2729.29981
18 3036.53782666667
19 3048.08345
};
\addlegendentry{Single Deterministic}
\addplot [very thick, color2]
table {%
0 382.803366666667
1 387.30689
2 579.818793333333
3 624.84374
4 1186.6648
5 2238.76073333333
6 2732.19804666667
7 3128.96746333333
8 3885.23556666667
9 4041.05514666667
10 4068.48513333333
11 4005.56404333333
12 4043.22711666667
13 4277.36201666667
14 4412.45053
15 4538.59144
16 4418.70198666667
17 4557.72385333333
18 4610.04280666667
19 4446.50850333333
};
\addlegendentry{Robust Regularization}
\addplot [very thick, color3]
table {%
0 207.975973333333
1 320.457633333333
2 390.66864
3 498.737726666667
4 842.141313333333
5 1339.88405333333
6 1818.26933
7 1946.54680666667
8 2394.46314666667
9 2787.16840333333
10 2761.15919333333
11 2568.43921
12 3129.51337333333
13 3147.67252
14 3000.12524666667
15 3328.24074666667
16 3745.70707
17 3651.20899333333
18 3712.80809333333
19 3511.62203666667
};
\addlegendentry{Uniform Random Noise}
\end{axis}

\end{tikzpicture}
    \vspace{-1.5em}
  \centering
    % \caption{}
    \label{sfig:vaml_b}
 \end{subfigure}
 \hfill
 \begin{subfigure}[t]{0.22\textwidth}
  \centering
  \input{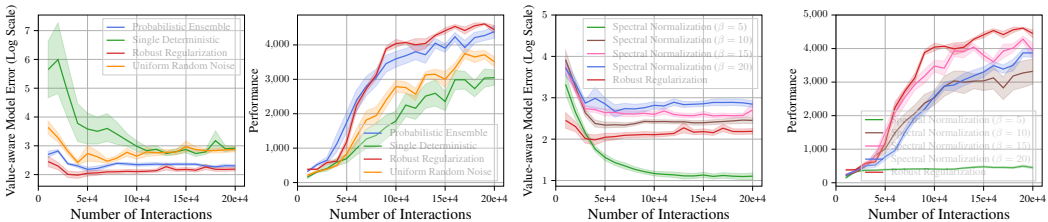} 
  \vspace{-1.5em}
  \centering
    % \caption{}
    \label{sfig:vaml_c}
 \end{subfigure}
 \hfill
 \begin{subfigure}[t]{0.25\textwidth}
  \centering
  % This file was created with tikzplotlib v0.10.1.
\begin{tikzpicture}[scale=0.4]

\definecolor{crimson2143839}{RGB}{214,38,39}
\definecolor{darkgray176}{RGB}{176,176,176}
\definecolor{forestgreen4316043}{RGB}{43,160,43}
\definecolor{hotpink}{RGB}{255,105,180}
\definecolor{lightgray204}{RGB}{204,204,204}
\definecolor{royalblue75113229}{RGB}{75,113,229}
\definecolor{sienna1398575}{RGB}{139,85,75}

\begin{axis}[
legend cell align={left},
legend style={
  fill opacity=0.2,
  draw opacity=1,
  text opacity=1,
  at={(0.97,0.03)},
  anchor=south east,
  draw=lightgray204
},
xlabel style={font=\Large},
title style={font=\Large},
ylabel style={font=\large},
tick align=outside,
tick pos=left,
x grid style={darkgray176},
xlabel={Number of Interactions},
xmajorgrids,
xmin=-1, xmax=19.95,
xtick style={color=black},
xtick={-1,4,9,14,19},
xticklabels={0,5e+4,10e+4,15e+4,20e+4},
y grid style={darkgray176},
ylabel={Performance},
ymajorgrids,
ymin=-102.555991666667, ymax=5000,
ytick style={color=black}
]
\path [draw=forestgreen4316043, fill=forestgreen4316043, opacity=0.2]
(axis cs:0,176.769333333333)
--(axis cs:0,123.290166666667)
--(axis cs:1,315.919333333333)
--(axis cs:2,348.6655)
--(axis cs:3,358.769)
--(axis cs:4,372.639166666667)
--(axis cs:5,382.777666666667)
--(axis cs:6,392.835833333333)
--(axis cs:7,399.491)
--(axis cs:8,403.269833333333)
--(axis cs:9,402.145666666667)
--(axis cs:10,388.912333333333)
--(axis cs:11,393.537666666667)
--(axis cs:12,418.396)
--(axis cs:13,437.575666666667)
--(axis cs:14,448.140833333333)
--(axis cs:15,435.5725)
--(axis cs:16,433.483333333333)
--(axis cs:17,423.612666666667)
--(axis cs:18,461.257333333333)
--(axis cs:19,417.181166666667)
--(axis cs:19,486.458)
--(axis cs:19,486.458)
--(axis cs:18,536.757833333333)
--(axis cs:17,474.674833333333)
--(axis cs:16,480.974833333333)
--(axis cs:15,474.410166666667)
--(axis cs:14,506.481833333333)
--(axis cs:13,486.325833333333)
--(axis cs:12,460.458666666667)
--(axis cs:11,419.9075)
--(axis cs:10,414.4165)
--(axis cs:9,416.220833333333)
--(axis cs:8,421.731333333333)
--(axis cs:7,424.170166666667)
--(axis cs:6,420.191333333333)
--(axis cs:5,411.751333333333)
--(axis cs:4,412.068166666667)
--(axis cs:3,390.309)
--(axis cs:2,393.885833333333)
--(axis cs:1,363.943666666667)
--(axis cs:0,176.769333333333)
--cycle;

\path [draw=sienna1398575, fill=sienna1398575, opacity=0.2]
(axis cs:0,234.432833333333)
--(axis cs:0,198.812666666667)
--(axis cs:1,301.178666666667)
--(axis cs:2,451.425333333333)
--(axis cs:3,698.9135)
--(axis cs:4,892.466833333333)
--(axis cs:5,1256.65916666667)
--(axis cs:6,1581.14966666667)
--(axis cs:7,1803.91566666667)
--(axis cs:8,2124.80166666667)
--(axis cs:9,2314.48866666667)
--(axis cs:10,2576.19166666667)
--(axis cs:11,2729.21216666667)
--(axis cs:12,2655.679)
--(axis cs:13,2804.50833333333)
--(axis cs:14,2748.78583333333)
--(axis cs:15,2792.5725)
--(axis cs:16,2476.89366666667)
--(axis cs:17,2761.22416666667)
--(axis cs:18,2842.36583333333)
--(axis cs:19,2937.40483333333)
--(axis cs:19,3689.68816666667)
--(axis cs:19,3689.68816666667)
--(axis cs:18,3630.75566666667)
--(axis cs:17,3555.687)
--(axis cs:16,3153.1475)
--(axis cs:15,3449.14933333333)
--(axis cs:14,3289.31433333333)
--(axis cs:13,3242.70566666667)
--(axis cs:12,3326.6415)
--(axis cs:11,3329.464)
--(axis cs:10,3174.26133333333)
--(axis cs:9,2754.3735)
--(axis cs:8,2635.53416666667)
--(axis cs:7,2312.3405)
--(axis cs:6,2058.64333333333)
--(axis cs:5,1757.41066666667)
--(axis cs:4,1128.27233333333)
--(axis cs:3,931.6865)
--(axis cs:2,539.085333333333)
--(axis cs:1,325.4535)
--(axis cs:0,234.432833333333)
--cycle;

\path [draw=hotpink, fill=hotpink, opacity=0.2]
(axis cs:0,268.7475)
--(axis cs:0,225.6125)
--(axis cs:1,333.845333333333)
--(axis cs:2,477.740833333333)
--(axis cs:3,692.351333333334)
--(axis cs:4,1327.28333333333)
--(axis cs:5,1835.27066666667)
--(axis cs:6,2310.25883333333)
--(axis cs:7,2752.3365)
--(axis cs:8,2964.9835)
--(axis cs:9,3305.51383333333)
--(axis cs:10,3194.86183333333)
--(axis cs:11,3760.83083333333)
--(axis cs:12,3726.1)
--(axis cs:13,3954.12316666667)
--(axis cs:14,3662.68083333333)
--(axis cs:15,3502.47433333333)
--(axis cs:16,3714.24183333333)
--(axis cs:17,3893.11116666667)
--(axis cs:18,4194.535)
--(axis cs:19,3680.493)
--(axis cs:19,4161.37283333333)
--(axis cs:19,4161.37283333333)
--(axis cs:18,4373.11083333333)
--(axis cs:17,4141.79566666667)
--(axis cs:16,4128.16016666667)
--(axis cs:15,3703.96666666667)
--(axis cs:14,3927.74283333333)
--(axis cs:13,4144.17333333333)
--(axis cs:12,4113.62283333333)
--(axis cs:11,4106.75416666667)
--(axis cs:10,3630.65283333333)
--(axis cs:9,3648.32516666667)
--(axis cs:8,3377.53766666667)
--(axis cs:7,3089.16466666667)
--(axis cs:6,2799.35633333333)
--(axis cs:5,2323.90133333333)
--(axis cs:4,1647.57166666667)
--(axis cs:3,889.64)
--(axis cs:2,580.177)
--(axis cs:1,356.657)
--(axis cs:0,268.7475)
--cycle;

\path [draw=royalblue75113229, fill=royalblue75113229, opacity=0.2]
(axis cs:0,251.1475)
--(axis cs:0,200.658)
--(axis cs:1,324.819666666667)
--(axis cs:2,421.115833333333)
--(axis cs:3,458.034666666667)
--(axis cs:4,671.5715)
--(axis cs:5,878.876666666667)
--(axis cs:6,1372.52933333333)
--(axis cs:7,1803.18083333333)
--(axis cs:8,2024.95816666667)
--(axis cs:9,2407.21633333333)
--(axis cs:10,2758.76216666667)
--(axis cs:11,2745.05783333333)
--(axis cs:12,2891.222)
--(axis cs:13,3012.09116666667)
--(axis cs:14,3149.942)
--(axis cs:15,3370.84283333333)
--(axis cs:16,3265.19133333333)
--(axis cs:17,3331.68983333333)
--(axis cs:18,3753.66466666667)
--(axis cs:19,3738.98633333333)
--(axis cs:19,4007.23433333333)
--(axis cs:19,4007.23433333333)
--(axis cs:18,3986.94316666667)
--(axis cs:17,3676.893)
--(axis cs:16,3506.70933333333)
--(axis cs:15,3620.95966666667)
--(axis cs:14,3454.38333333333)
--(axis cs:13,3359.29766666667)
--(axis cs:12,3257.4005)
--(axis cs:11,3065.26516666667)
--(axis cs:10,2992.478)
--(axis cs:9,2710.35)
--(axis cs:8,2368.42933333333)
--(axis cs:7,2089.7555)
--(axis cs:6,1632.04133333333)
--(axis cs:5,1048.91016666667)
--(axis cs:4,885.511333333333)
--(axis cs:3,595.859833333333)
--(axis cs:2,596.428166666667)
--(axis cs:1,403.835)
--(axis cs:0,251.1475)
--cycle;

\path [draw=crimson2143839, fill=crimson2143839, opacity=0.2]
(axis cs:0,401.433333333333)
--(axis cs:0,362.383333333333)
--(axis cs:1,370.483333333333)
--(axis cs:2,527.486666666667)
--(axis cs:3,585.193333333333)
--(axis cs:4,1053.15333333333)
--(axis cs:5,2069.12)
--(axis cs:6,2643.16)
--(axis cs:7,2988.15783333333)
--(axis cs:8,3811.06466666667)
--(axis cs:9,3929.28333333333)
--(axis cs:10,4042.03333333333)
--(axis cs:11,3814.43)
--(axis cs:12,3893.95333333333)
--(axis cs:13,4231.86)
--(axis cs:14,4322.77666666667)
--(axis cs:15,4484.86333333333)
--(axis cs:16,4313.36666666667)
--(axis cs:17,4472.97666666667)
--(axis cs:18,4584.15333333333)
--(axis cs:19,4324.94666666667)
--(axis cs:19,4557.43333333333)
--(axis cs:19,4557.43333333333)
--(axis cs:18,4635.4)
--(axis cs:17,4640.21333333333)
--(axis cs:16,4517.84666666667)
--(axis cs:15,4592.36)
--(axis cs:14,4493.791)
--(axis cs:13,4320.67666666667)
--(axis cs:12,4184.62916666667)
--(axis cs:11,4167.90666666667)
--(axis cs:10,4095.1)
--(axis cs:9,4161.69333333333)
--(axis cs:8,3956.92866666667)
--(axis cs:7,3249.7955)
--(axis cs:6,2813.09333333333)
--(axis cs:5,2409.82066666667)
--(axis cs:4,1319.64666666667)
--(axis cs:3,666.783333333333)
--(axis cs:2,633.446666666667)
--(axis cs:1,405.233333333333)
--(axis cs:0,401.433333333333)
--cycle;

\addplot [very thick, forestgreen4316043]
table {%
0 149.665846666667
1 339.61471
2 370.278433333333
3 374.270453333333
4 392.456226666667
5 396.95199
6 406.237676666667
7 411.903646666667
8 412.5313
9 409.126363333333
10 401.729983333333
11 406.599043333333
12 437.983803333333
13 460.892216666667
14 476.73593
15 455.74318
16 457.11532
17 448.616843333333
18 495.78679
19 451.505826666667
};
\addlegendentry{Spectral Normalization ($\beta=5$)}
\addplot [very thick, sienna1398575]
table {%
0 217.575276666667
1 313.227436666667
2 492.725746666667
3 812.28302
4 1009.09388333333
5 1493.85498
6 1828.43158666667
7 2066.25551666667
8 2373.88800666667
9 2530.40953666667
10 2870.50161
11 3036.80485666667
12 2997.36538666667
13 3028.88714333333
14 3035.73192
15 3126.48660333333
16 2828.84317
17 3182.72496666667
18 3272.53894333333
19 3322.00387666667
};
\addlegendentry{Spectral Normalization ($\beta=10$)}
\addplot [very thick, hotpink]
table {%
0 246.049606666667
1 345.15639
2 528.24637
3 792.462223333333
4 1489.37335333333
5 2083.90902
6 2550.47054666667
7 2922.84349666667
8 3180.34121
9 3478.94380333333
10 3416.92292
11 3931.92752666667
12 3912.89976
13 4045.77148333333
14 3796.07533666667
15 3605.57263
16 3924.73296
17 4016.05071666667
18 4281.33382666667
19 3932.75948333333
};
\addlegendentry{Spectral Normalization ($\beta=15$)}
\addplot [very thick, royalblue75113229]
table {%
0 225.201663333333
1 364.52536
2 506.68646
3 528.795723333333
4 775.49912
5 961.39358
6 1500.77848
7 1942.25597333333
8 2193.11813333333
9 2567.76776
10 2876.61252
11 2914.39263333333
12 3066.8817
13 3188.15902
14 3302.37533
15 3492.98378666667
16 3388.72852666667
17 3501.87037
18 3870.12989666667
19 3870.59570666667
};
\addlegendentry{Spectral Normalization ($\beta=20$)}
\addplot [very thick, crimson2143839]
table {%
0 382.168116666667
1 388.309206666667
2 579.615186666667
3 625.400793333333
4 1186.66403333333
5 2239.77152
6 2729.29272666667
7 3124.35828
8 3881.18942666667
9 4043.88450333333
10 4068.62372666667
11 3992.46487333333
12 4039.29506
13 4278.03178333333
14 4408.7318
15 4537.66727
16 4415.68692
17 4558.91764
18 4609.54126
19 4447.90295333333
};
\addlegendentry{Robust Regularization}
\end{axis}

\end{tikzpicture} 
  \vspace{-1.5em}
  \centering
    % \caption{}
    \label{sfig:vaml_d}
 \end{subfigure}
 \vspace{-0.6em}
 \caption{(Left Two) Robust Regularization with a comparison of uniform random noise on Walker. (Right Two) Spectral Normalization with different spectral radius $\beta$ on Walker.
 }
\label{fig:vaml}
\vspace{-1.5em}
\end{figure}
% \vspace{-1.5em}
As we see from the plot, to reduce the value-aware model error so that it is about the same as robust regularization, we need to choose the spectral radius $\beta$ as small as 10. However, under this constraint, it achieves a significantly worse performance than robust regularization, indicating that this constraint is perhaps too strong. To get the best performance, spectral normalization has to use a larger spectral radius to trade value-aware model error for expressive power and thus has worse empirical performance. In addition, we observe from Figure \ref{fig:main_experimental_results} that the performance discrepancy between the two methods is even greater in more complicated environments such as Ant and Humanoid, which further shows the limitation of constraining the global Lipschitz constant.

%\textbf{Time Efficiency Comparison.} \quad
%Empirically, robust regularization is also much faster than spectral normalization. In Table~\ref{tab:time_compare}, we see that spectral normalization takes about the same amount of time as MBPO with probabilistic ensemble models, whereas robust regularization only takes about 70\% of the computational time. 
% observe in Figure\ref{fig:all}, across five environments, robust regularization achieves a much stronger performance than spectral normalization. In Figure\ref{vaml_c}, we provide the performance

%\include{time_comparison.tex}

% 1. VAML loss

\section{Related Work}\label{6_relatedwork}
\textbf{Dynamics ensemble in model-based reinforcement learning.} \quad
Dynamics model ensemble is first introduced in MBRL by \citet{kurutach2018model} to avoid the learned policy exploit the insufficient data regions. Then \citet{chua2018deep} proposed a probabilistic dynamics ensemble to capture the aleatoric uncertainty and epistemic uncertainty, and achieves significant performance improvement compared to the deterministic dynamics ensemble.
Probabilistic dynamics ensemble is now widely used in MBRL methods (e.g. \cite{{29}, {clavera2020modelaugmented}, {d2020learn}, {lai2021effective}, {froehlich2022onpolicy}, {li2022gradient}}).
However, despite its popularity, there is still no clear explanation why the use of probabilistic dynamics ensemble brings such a large improvement to the policy. In this paper, we fill this gap and propose a novel theoretical explanation on why probabilistic dynamics ensemble works well.

\textbf{Lipschitz continuity in reinforcement learning.} \quad 
In the realm of model-free RL, a few recent works also apply the technique of spectral normalization to regularize the Lipschitz condition of the RL agent's value function, which results in more stable optimization dynamics \citep{gogianu21a, bjorck2021}. \citet{ball2021} investigate the effects of added Gaussian noise of exploration on the smoothness of value functions. In addition, \citet{shen20, zhang2021} propose to use a similar robust regularizer as our robust regularization method, aiming at an adversarially robust policy. In our work, we focus on  MBRL, a fundamentally different setting. 
As explained in Section~\ref{s3s1_insight} and Appendix~\ref{app:mf}, if no restrictions are imposed on the value function class, then the value-aware model error could explode, whereas the model error vanishes in the model-free setting.  In MBRL, \citet{osband2014} connects Lipschitz constant of the value function to the regret bound of posterior sampling for RL. \citet{asadi18a} proposed to learn a generalized Lipschitz transition model with respect to the Wasserstein metric, resulting in a bounded multi-step model prediction error and Lipschitz optimal value function induced from the learned dynamics. In contrast, our paper focuses on the local Lipschitz condition of the value function instead of the underlying transition model, studying its relation with the suboptimality of MBRL algorithm.
\section{Conclusion and Discussion}\label{7_conclusion}
In this paper, we provide insight into why the probabilistic ensemble model can achieve great empirical performance. 
We demonstrate the importance of the local Lipschitz condition of value functions in MBRL algorithms and justify our hypothesis with both theoretical and empirical results. 
Based on our insight, we propose two training mechanisms that directly regularize the Lipschitz condition of the value function. 
Empirical studies demonstrate the effectiveness of the proposed mechanisms.
One limitation is that if we use a single model instead of an ensemble, then we cannot use many existing methods of uncertainty estimation based on model ensemble, and we need to redesign the uncertainty estimation mechanism based on a single model. We leave this as future work.
%One consequence of our paper is that if the model ensemble is not used, then many existing methods of estimating model uncertainty based on ensemble will be unusable, and we need to redesign the uncertainty estimation method based on single model. We leave this as future work.
\section*{Acknowledgement}

This work is supported by National Science Foundation NSF-IIS-FAI program, DOD-ONR-Office of Naval Research, DOD Air Force Office of Scientific Research, DOD-DARPA-Defense Advanced Research Projects Agency Guaranteeing AI Robustness against Deception (GARD), Adobe, Capital One and JP Morgan faculty fellowships.

% \newpage
\bibliography{iclr2023_conference}
\bibliographystyle{iclr2023_conference}

\clearpage
\newpage
\appendix
{\begin{center} \bf \LARGE
    Supplementary Material% \\    \mytitle
    \end{center}
}
\section{Additional Definitions and notation}
We start with the definition of the Concentrability constant
\begin{definition}
\label{df:concentrability}
(\textbf{Concentrability Constant)} \citep{farahmand17itervaml}\newline
Given $\rho, \nu\in \Delta(\mathcal{S})$, an integer $k>0$, and an arbitrary sequence of policies $({\pi}_i)_{i=1}^{k}$, the distribution $\rho \mathcal{P}^{\pi_1} \mathcal{P}^{\pi_2}... \mathcal{P}^{\pi_k}$ denotes the future state distribution obtained when the state in the first step is distributed according to $\rho$ and the agent follows the sequence of policies $\pi_1,..., \pi_k$. Define:
$$c_{\rho, \nu}(k)=\sup_{\pi_1,...,\pi_k}\Big\|\frac{d\rho \mathcal{P}^{\pi_1}\mathcal{P}^{\pi_2}... \mathcal{P}^{\pi_k}}{d\nu}\Big\|_{2,\nu}$$
Here, $\|f\|_{2,\nu}^2=\int f(s)^2d\nu$. The derivative $\displaystyle \frac{d\rho \mathcal{P}^{\pi_1}\mathcal{P}^{\pi_2}... \mathcal{P}^{\pi_k}}{d\nu}$ is the Radon-Nikydom Derivative of two probability measures, which is well-defined up to a set of measure zero by $\nu$ if $\rho \mathcal{P}^{\pi_1}\mathcal{P}^{\pi_2}... \mathcal{P}^{\pi_k}$ is absolutely continuous with respect to $\nu$. In case it's not absolutely continuous, we set it to be $\infty$. Then, for a constant $0\leq\gamma<1$, define the discounted weighted average concentrability coefficient as
$$
C(\rho, \nu)=(1-\gamma)^2\sum_{k=1}^\infty\gamma^{k-1}c_{\rho, \nu}(k)
$$
\end{definition}

Throughout the proof, we use $\|f\|_{p, \mu}$ denote the $L_p(\mu)$-norm $1\leq p < \infty$ of a measurable function $f:\mathcal{S}\rightarrow \mathbb{R}$ such that $$
\|f\|_{p, \mu}^p=\int_{\mathcal{S}}|f(x)|^pd\mu(x)
$$
In addition, we define the empirical norm. Given a collection points $\{s_1, ..., s_n\}$ in $\mathcal{S}$, define the empirical norm $L_p(s_1,..., s_n)$ such that $$\|f\|_{L_p(s_1,..., s_n)}^p=\frac{1}{N}\sum_{i=1}^{N}|f(s_i)|^p$$

Finally, we define Rademacher complexity as in \citep{bartlett2005}. Same as \citet{farahmand17itervaml}, we will use a local variant of Rademacher complexity to derive the rate of estimation error.
\begin{definition} 
(Rademacher Complexity) Let $\sigma_1,..., \sigma_n$ be $n$ independent Rademacher random variables, i.e. $\mathbb{P}\{\sigma_i=-1\}=\mathbb{P}\{\sigma_i=1\}=\frac{1}{2}$. Given a collection of measurable functions $\mathcal{F}$ from $\mathcal{X}$ to $\mathbb{R}$ and a probability distribution $\mu$ over $\mathcal{X}$, we sample n points $x_1,..., x_n$ i.i.d. from $\mu$. Define $$R_n\mathcal{F}=\sup_{f\in \mathcal{F}}\frac{1}{N}\sum_{i=1}^{n}\sigma_i f(x_i)$$
Then we define the Rademacher complexity of $\mathcal{F}$ as $\mathbb{E}[R_n\mathcal{\mathcal{F}}]$
\end{definition}
Besides, same as \citep{bartlett2005}, we define the sub-root function as non-negative and non-decreasing function $\psi: [0;\infty) \rightarrow [0, \infty)$ such that $r\mapsto \frac{\psi(r)}{\sqrt{r}}$ is non-increasing for $r>0$

\section{Proof of the theorem}
\label{app:proof}

We begin by citing the following theorem.
\begin{theorem} \citep{bartlett2005}
\label{thm:local_rad}
Let $\mathcal{F}$ be a class of functions with values in range $[a,b]$ and assume that there are some functional $T:\mathcal{F}\rightarrow \mathbb{R}^+$ and some constant $B$ such that for every $f\in \mathcal{F}$, 
\begin{equation}
Var[f]\leq T(f)\leq B\mathbb{E}(f)
\end{equation}
Let $\psi$ be a sub-root function and let $r^*=r^*(\mathcal{F})$ be the fixed point of $\psi$. Assume that for any $r\leq r^*$, $\psi$ satisfies 
\begin{equation}
\label{eq:subroot}
\psi(r)>B\mathbb{E}[R_n\{f\in \mathcal{\mathbb{F}}:T(f)\leq r\}]
\end{equation}
Then, with $c_1=704$ and $c_2=26$, for any $K>1$ and every $x>0$, with probability at least $1-e^{-x}$, for any $f\in \mathcal{F}$, we have 
\begin{equation}
\mathbb{E}[f]\leq \frac{K}{K-1}\mathbb{E}_n[f]+\frac{c_1K}{B}r^*+\frac{x(11(b-a)+c_2BK)}{n}
\end{equation}
Also with a probability at least $1-e^{-x}$, for any $f\in \mathcal{F}$, we have 
\begin{equation}
\mathbb{E}_n[f]\leq \frac{K}{K-1}\mathbb{E}[f]+\frac{c_1K}{B}r^*+\frac{x(11(b-a)+c_2BK)}{n}
\end{equation}
We will then use $r^\star(\mathcal{F})$ to denote the fixed point of a sub-root function $\psi$ that satisfies \ref{eq:subroot}
\end{theorem}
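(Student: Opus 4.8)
The plan is to reproduce the Bartlett--Bousquet--Mendelson argument, which rests on three ingredients: Talagrand's concentration inequality for the supremum of an empirical process, a symmetrization step turning the expected supremum into a localized Rademacher complexity, and a peeling argument calibrated to the sub-root fixed point $r^\star$. I would prove the first displayed inequality (bounding $\mathbb{E}[f]$ by $\mathbb{E}_n[f]$); applying the same reasoning to the class $\{-f\}$ yields the second.

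First I would localize. For a fixed radius $r\ge r^\star$, set $\mathcal{F}_r=\{f\in\mathcal{F}:T(f)\le r\}$ and consider $Z_r=\sup_{f\in\mathcal{F}_r}\big(\mathbb{E}[f]-\mathbb{E}_n[f]\big)$. Since $\mathrm{Var}[f]\le T(f)\le r$ and the functions take values in $[a,b]$, Talagrand's inequality gives, with probability at least $1-e^{-x}$, a bound of the shape $Z_r\le \mathbb{E}[Z_r]+\sqrt{2xr/n}+C(b-a)x/n$ with explicit absolute constants. Standard symmetrization bounds $\mathbb{E}[Z_r]\le 2\,\mathbb{E}[R_n\mathcal{F}_r]$, and the hypothesis $\psi(r)>B\,\mathbb{E}[R_n\mathcal{F}_r]$ then controls the localized complexity by $\psi(r)/B$.

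Next comes the sub-root calibration. Because $\psi$ is sub-root, $r\mapsto\psi(r)/\sqrt r$ is non-increasing, so for every $r\ge r^\star$ we get $\psi(r)\le\sqrt{r^\star r}$ using $\psi(r^\star)=r^\star$. An application of $\sqrt{r^\star r}\le\tfrac12(\lambda r^\star+r/\lambda)$ then splits the complexity term into a piece proportional to $r^\star$ and a piece proportional to $r$. To pass from the localized statement to all of $\mathcal{F}$, I would peel: decompose $\mathcal{F}$ into the core $\{T(f)\le r^\star\}$ together with geometric shells $\{\alpha^{k}r^\star< T(f)\le\alpha^{k+1}r^\star\}$, apply the localized bound on each shell at radius $r=\alpha^{k+1}r^\star$, and take a union bound over $k$ with $x$ shifted by $\log$ of the shell index so the geometric series converges at constant cost.

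On each shell the bound reads schematically $\mathbb{E}[f]-\mathbb{E}_n[f]\le \tfrac{1}{\lambda}T(f)+ (\text{const})\,r^\star + (\text{const})\,\tfrac{(b-a)x}{n}$; substituting the self-bounding condition $T(f)\le B\,\mathbb{E}[f]$ turns the first term into $\tfrac{B}{\lambda}\mathbb{E}[f]$. Choosing $\lambda$ so that $B/\lambda=1/K$ and rearranging $\mathbb{E}[f]\big(1-\tfrac1K\big)\le\mathbb{E}_n[f]+\cdots$ produces the factor $K/(K-1)$ together with the terms $\tfrac{c_1K}{B}r^\star$ and $\tfrac{x(11(b-a)+c_2BK)}{n}$. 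The main obstacle, and the part demanding the most care, is the constant bookkeeping: one must track the absolute constants through Talagrand's inequality and the peeling union bound precisely enough to land on exactly $c_1=704$ and $c_2=26$, while ensuring the infinite union over shells inflates the deviation term only by a constant factor rather than a logarithmic one. The variance-to-mean condition $T(f)\le B\,\mathbb{E}[f]$ is what makes this localization self-improving and ultimately drives the bound down to the scale $r^\star$.
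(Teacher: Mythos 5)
The paper itself contains no proof of this statement: it is imported verbatim (the first part of Theorem~3.3) from \citet{bartlett2005} and invoked as a black box in the proof of Theorem~\ref{t1_finite_sample}, so the only meaningful comparison is with the original Bartlett--Bousquet--Mendelson argument. Your three ingredients --- Talagrand/Bousquet concentration for the supremum of the localized process, symmetrization down to a localized Rademacher complexity, and the sub-root calibration $\psi(r)\le\sqrt{r\,r^\star}$ for $r\ge r^\star$ followed by $\sqrt{r^\star r}\le\frac{1}{2}(\lambda r^\star+r/\lambda)$ and the self-bounding substitution $T(f)\le B\,\mathbb{E}[f]$ --- are exactly the right ones, and you correctly read the hypothesis on $\psi$ as operating for $r\ge r^\star$ (the paper's transcription ``for any $r\le r^\star$'' is a typo relative to the source, which requires the inequality above the fixed point, as your argument implicitly does).

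The one genuine divergence is the last step. Bartlett, Bousquet and Mendelson do not peel shells and union-bound: they apply Talagrand's inequality \emph{once}, to the rescaled class $\mathcal{G}_r=\{\,r f/\max(r,T(f)) : f\in\mathcal{F}\,\}$, every member of which has variance at most $r$, and then translate back through $\mathbb{E}[f]-\mathbb{E}_n[f]\le \frac{\max(r,T(f))}{r}\,V_r$ where $V_r$ is the supremum over $\mathcal{G}_r$. This one-shot normalization is precisely what yields the failure probability exactly $e^{-x}$ and, after optimizing $r$ and $\lambda$, the specific constants $c_1=704$ and $c_2=26$. Your shell-wise union bound with $x$ shifted by the log of the shell index needs summable weights, so the total failure probability comes out as $C e^{-x}$ with $C>1$; reabsorbing $C$ into $x$ perturbs the $x/n$ term and the route provably cannot land on the advertised constants --- it proves a constant-factor-worse variant. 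Since the statement pins the constants, the peeling step should be replaced by the normalization trick (which is also where the star-shaped/sub-root structure of $\psi$ does its real work, in bounding $\mathbb{E}[R_n\mathcal{G}_r]$ by $\psi(r)/B$ without any shell decomposition); otherwise your plan is a sound proof of a qualitatively identical theorem.
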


Now we prove the following theorem which provides a finite sample bound on the value-aware model error.
\begin{theorem}
\label{t1_finite_sample}
Under the four assumptions \ref{a0_deterministic},\ref{a1_boundedness}, \ref{a2_realizability}, \ref{a4_complexity}, with the probability model learned based on Equation \ref{eq:model}, there exists a constant $\kappa(\alpha)$ which depends solely on $\alpha\in (0,1)$, such that with probability $1-\delta$,
\begin{align}
L(\hat{\mathcal{P}})&\leq \epsilon+\frac{\kappa(\alpha)D^2R^\frac{2\alpha}{1+\alpha}\ln (\frac{1}{\delta})}{N^{\frac{1}{1+\alpha}}},
\end{align}
where $N$ is the number of samples from the data-collection distribution $\rho$, $D$ is the size of the state-space defined in assumption \ref{a1_boundedness}, and $R$ is defined in the metric entropy condition of the model class in assumption \ref{a4_complexity}. 
\end{theorem}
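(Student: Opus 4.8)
The plan is to treat Theorem~\ref{t1_finite_sample} as a standard empirical-risk-minimization generalization bound, where the two nonstandard features are that the loss is an expected transport distance under a \emph{stochastic} model kernel, and that we want the optimal comparator error $\epsilon$ to appear with coefficient one. Throughout, Assumption~\ref{a0_deterministic} is what makes a finite-sample guarantee possible: because the environment is deterministic, the realized next state satisfies $s_i'=\mathcal{P}(s_i,a_i)$, so the per-sample loss
\[
g_{\hat{\mathcal{P}}}(s,a,s') \coloneqq \int \hat{\mathcal{P}}(\hat{s}'|s,a)\,\|\hat{s}'-s'\|\,d\hat{s}'
\]
has population mean exactly $\mathcal{L}_2(\hat{\mathcal{P}})=L(\hat{\mathcal{P}})$, and Equation~(\ref{eq:model}) is precisely empirical risk minimization of $g_{\hat{\mathcal{P}}}$ over $\mathcal{M}$. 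By Assumption~\ref{a1_boundedness}, $\|\hat{s}'-s'\|\le 2D$, so each $g_{\hat{\mathcal{P}}}$ is valued in $[0,2D]$; this bounded range supplies the variance-to-mean condition $\mathrm{Var}[g]\le\mathbb{E}[g^2]\le 2D\,\mathbb{E}[g]$ needed to invoke Theorem~\ref{thm:local_rad} with $T(g)=\mathbb{E}[g^2]$ and $B=2D$.

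First I would reduce the complexity of the loss class $\mathcal{G}=\{g_{\hat{\mathcal{P}}}:\hat{\mathcal{P}}\in\mathcal{M}\}$ to that of $\mathcal{M}$. Since $\hat{s}'\mapsto\|\hat{s}'-s'\|$ is $1$-Lipschitz, the map $\hat{\mathcal{P}}\mapsto g_{\hat{\mathcal{P}}}$ is non-expansive from the model metric $L_2(z_{1:n})$ into the empirical loss metric, so any $\varepsilon$-cover of $\mathcal{M}$ induces an $\varepsilon$-cover of $\mathcal{G}$; the polynomial metric-entropy bound of Assumption~\ref{a4_complexity}, $\log\mathcal{N}(\varepsilon,\mathcal{M},L_2(z_{1:n}))\le c(R/\varepsilon)^{2\alpha}$, then transfers to $\mathcal{G}$. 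Next I would compute the fixed point $r^\star$ of the sub-root function that upper-bounds the localized Rademacher complexity. Feeding the entropy into Dudley's integral gives $\mathbb{E}[R_n\{g\in\mathcal{G}:T(g)\le r\}]\le C_1\,n^{-1/2}R^{\alpha}r^{(1-\alpha)/2}$, so one may take $\psi(r)=C_1 B\,n^{-1/2}R^{\alpha}r^{(1-\alpha)/2}$, which is sub-root because $\alpha>0$. Solving $r^\star=\psi(r^\star)$ gives $r^\star\asymp (B^2R^{2\alpha}/n)^{1/(1+\alpha)}$, i.e. the $n^{-1/(1+\alpha)}$ rate; collecting the $D$-factors from $B$ and from the range $b-a=2D$, and over-bounding the resulting sub-quadratic power of $D$ by $D^2$ under $D\ge 1$, produces the claimed $\kappa(\alpha)D^2R^{2\alpha/(1+\alpha)}/N^{1/(1+\alpha)}$ scaling, with the confidence term $x=\ln(1/\delta)$ entering through the $x/n$ contribution of Theorem~\ref{thm:local_rad}.

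Finally I would close the argument using optimality of the ERM together with Assumption~\ref{a2_realizability}. Choosing a near-minimizer $\hat{\mathcal{P}}^\dagger\in\mathcal{M}$ with $L(\hat{\mathcal{P}}^\dagger)\le\epsilon$ and writing $L(\hat{\mathcal{P}})=\big(\mathbb{E}-\mathbb{E}_n\big)[g_{\hat{\mathcal{P}}}]+\mathbb{E}_n[g_{\hat{\mathcal{P}}}]$, I would bound $\mathbb{E}_n[g_{\hat{\mathcal{P}}}]\le\mathbb{E}_n[g_{\hat{\mathcal{P}}^\dagger}]$ by ERM optimality, control the uniform one-sided deviation over $\mathcal{G}$ by the localized bound of the previous step, and control the single-function deviation $(\mathbb{E}_n-\mathbb{E})[g_{\hat{\mathcal{P}}^\dagger}]$ by Bernstein's inequality, yielding $L(\hat{\mathcal{P}})\le\epsilon+\kappa(\alpha)D^2R^{2\alpha/(1+\alpha)}\ln(1/\delta)/N^{1/(1+\alpha)}$ with probability $1-\delta$.

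The main obstacle is retaining coefficient one on $\epsilon$ while keeping the sharp exponent. A naive application of the multiplicative $K/(K-1)$ form of Theorem~\ref{thm:local_rad} on both sides of the ERM comparison inflates $\epsilon$ by a constant; the fix is to phrase the localization as an \emph{additive} deviation inequality and absorb the residual $\tfrac{1}{K-1}\mathbb{E}_n[g]$ into the estimation term using the $[0,2D]$ range, then balance $K$ against the $K r^\star$ growth. Solving this bookkeeping in tandem with the fixed-point computation is the crux; by contrast, the reduction of the loss-class entropy to the model-class entropy (modulo the precise meaning of the kernel norm in Assumption~\ref{a4_complexity}) is essentially mechanical.
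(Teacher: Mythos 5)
Your overall architecture is the same as the paper's: both reduce the covering numbers of the loss class to those of $\mathcal{M}$ via Assumption~\ref{a4_complexity}, feed the entropy into the localized Rademacher machinery of Theorem~\ref{thm:local_rad}, solve the sub-root fixed point to get $r^\star \asymp \big(R^{2\alpha}/N\big)^{1/(1+\alpha)}$ up to polynomial $D$-factors (the paper delegates this computation to Proposition~10 of \citet{farahmand17itervaml}, you redo it via Dudley), and finish with ERM optimality plus Assumption~\ref{a2_realizability}. The genuine gap sits exactly where you flag the ``crux'' and then leave it unresolved: you localize the \emph{raw} loss class $\mathcal{G}=\{g_{\hat{\mathcal{P}}}\}$ and propose to remove the $K/(K-1)$ inflation afterwards, but neither of your repairs can deliver the stated bound. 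Absorbing the residual $\frac{1}{K-1}\mathbb{E}_n[g]$ through the range $[0,2D]$ and then balancing against the $c_1 K r^\star/B$ growth yields an excess of order $\sqrt{D\,r^\star}\asymp N^{-1/(2(1+\alpha))}$, which destroys the exponent; balancing $\epsilon/(K-1)$ against $K r^\star$ instead yields an excess of order $\sqrt{\epsilon\, r^\star}$, which by AM--GM can only be traded into $(1+c)\epsilon+\kappa\, r^\star$ with some fixed $c>0$, never coefficient one (and for $\epsilon$ of constant order, $\sqrt{\epsilon\, r^\star}$ again dominates $N^{-1/(1+\alpha)}$). Your separate Bernstein step for the comparator has the same defect: $\mathrm{Var}[g_{\hat{\mathcal{P}}^\dagger}]\le 2D\epsilon$ produces a $\sqrt{D\epsilon\ln(1/\delta)/N}$ term that once more inflates the coefficient of $\epsilon$.

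The missing idea --- and the paper's actual move --- is to center the class at the comparator \emph{before} localizing: apply Theorem~\ref{thm:local_rad} to the excess-loss class $\mathcal{F}=\{l(\cdot\,;\mathcal{P})-l(\cdot\,;\tilde{\mathcal{P}}):\mathcal{P}\in\mathcal{M}\}$, where $\tilde{\mathcal{P}}$ is the best model in $\mathcal{M}$. Then $\mathbb{E}_n[f_{\hat{\mathcal{P}}}]=L_n(\hat{\mathcal{P}})-L_n(\tilde{\mathcal{P}})\le 0$ holds \emph{deterministically} by ERM optimality, so the multiplicative factor $K/(K-1)$ (the paper takes $K=2$) multiplies a non-positive quantity and simply drops out; no Bernstein bound for the comparator and no $K$-balancing are needed, and the approximation error enters exactly once, additively, via $L(\tilde{\mathcal{P}})\le\epsilon$ at the very end --- which is precisely what produces coefficient one on $\epsilon$ together with the full $N^{-1/(1+\alpha)}$ rate. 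Your mechanical steps (the $1$-Lipschitz transfer of covers from $\mathcal{M}$ to the loss class, the Dudley computation, the fixed-point rate) survive unchanged for the centered class, since differencing against the fixed function $l(\cdot\,;\tilde{\mathcal{P}})$ preserves covering numbers; the only technical price of centering is verifying the condition $\mathrm{Var}[f]\le T(f)\le B\,\mathbb{E}[f]$ for excess losses, a point the paper itself treats loosely, whereas your clean choice $T(g)=\mathbb{E}[g^2]$, $B=2D$ is only available for the uncentered class that cannot close the argument.
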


\begin{proof}
Given a batch of state-action transition triples $\{(s_i, a_i, s'_i)\}_{i=1}^{N}$ with $(s_i, a_i)$ sampled i.i.d from data distribution $\rho$, we denote the empirical loss 
\begin{equation}
L_n(\mathcal{P})=\frac{1}{N}\sum_{i=1}^{N}\int_{\mathcal{S}}\mathcal{P}(\hat{s}_i'|s_i,a_i)\|\hat{s}_i'-s_i'\|d\hat{s_i}'
\end{equation}
We also denote the underlying loss over the data distribution $\rho$ as \begin{equation}
L(\mathcal{P})=\mathbb{E}_{(s, a)\sim \rho}\Big[\int_{\mathcal{S}}\mathcal{P}(\hat{s}'|s,a)\|\hat{s}^{'}-s'\|d\hat{s}'\Big]\end{equation}
In addition, let $\tilde{\mathcal{P}}\in \mathcal{M}$ be the best model in the transition kernel class $\mathcal{M}$ defined in \ref{a4_complexity}.

Now we would like to apply Theorem \ref{t1_finite_sample} to bound the difference between the empirical and the true underlying loss. First, let $\mathcal{F}=\{(s\times a,s')\mapsto l(s\times a, s'; \mathcal{P})-l(s\times a, s'; \tilde{\mathcal{P}}); \mathcal{P}\in \mathcal{M}\}$ be the class of functions in Theorem~\ref{thm:local_rad}, where $l(s\times a,s'; \mathcal{P})$ is the single datapoint version of the empirical loss $L_n(\mathcal{P})$. So $\mathbb{E}_{s\times a\sim \rho}[l(s\times a, s'; \mathcal{P})]=L(\mathcal{P})$.
Now by assumption \ref{a1_boundedness}, $0 \leq l(s\times a, a; \mathcal{P})\leq 2D$ for every $s\times a\in \mathcal{S}\times \mathcal{A}, s'\in \mathcal{S}$, and $\mathcal{P}\in \mathcal{M}$. Therefore, the value of $f$ is bounded between $-2D$ and $2D$ for every $f\in \mathcal{F}$,

As a consequence, for every $f\in \mathcal{F}$, $Var(f)\leq \mathbb{E}[f^2]\leq 4D^2$. So we can set
\begin{align*}
T(f)&=4D^2\mathbb{E}\Big[\int_{\mathcal{S}}\mathcal{P}(\hat{s}'|s,a)\|\hat{s}'-s'\|d\hat{s}'\Big] \\
B &= 4D^2
\end{align*}
Now, we can apply Theorem \ref{t1_finite_sample} to conclude that with probability $1-\delta$ (let $K=2$), 
\begin{equation}
L(\mathcal{P})-L(\tilde{\mathcal{P}})\leq 2(L_n(\mathcal{P})-L_n(\tilde{\mathcal{P}}))+\frac{2\times 704}{4D^2}r^*(\mathcal{F})+\frac{(11\times 4D+2\times 26 \times 4D^2)\ln(\frac{1}{\delta})}{N}
\end{equation}
Since $\hat{\mathcal{P}}$ is the minimizer of the empirical loss $L_n(\mathcal{P})$, 
\begin{equation}
\label{eq:excess_error}
L(\hat{\mathcal{P}})-L(\tilde{\mathcal{P}})\leq \frac{352}{D^2}r^*(\mathcal{F})+\frac{(44D+208D^2)\ln(\frac{1}{\delta})}{N}
\end{equation}
We can provide an upper bound of the local Rademacher complexity $r^*(\mathcal{F})$: there exists a finite constant $\tau>0$ such that for a given $0\leq \alpha\leq 1$, we have 
\begin{equation}
r^*(\mathcal{F})\leq \frac{c_1(\alpha)D^4 R^\frac{2\alpha}{1+\alpha}}{N^{\frac{1}{1+\alpha}}}+\frac{\tau D^4 \ln N}{N},
\end{equation}
where $\displaystyle c(\alpha)=\frac{\tau}{(1-\alpha)^{\frac{2}{1+\alpha}}}$. The proof follows the exact same steps of Proposition 10 in \citet{farahmand17itervaml}.
Now back to Equation \ref{eq:excess_error}, by the realizability assumption 
\ref{a2_realizability}, the best model $\mathcal{\tilde{P}}$ in the model class satisfies that $L(\mathcal{\tilde{P}})\leq \epsilon$. Therefore, with probability $1-\delta$,
\begin{align}
L(\hat{\mathcal{P}})&\leq \epsilon+\frac{352c_1(\alpha)D^2R^\frac{2\alpha}{1+\alpha}}{N^{\frac{1}{1+\alpha}}}+\frac{352\tau D^2\ln N}{N}+\frac{(44D+208D^2)\ln (\frac{1}{\delta})}{N}
\end{align}
Finally, there should exist a constant $\kappa(\alpha)$ sufficiently large such that with probability $1-\delta$,
\begin{align}
L(\hat{\mathcal{P}})&\leq \epsilon+\frac{\kappa(\alpha)D^2R^\frac{2\alpha}{1+\alpha}\ln (\frac{1}{\delta})}{N^{\frac{1}{1+\alpha}}}
\end{align}
\end{proof}

\begin{corollary}
Under the five assumptions \ref{a0_deterministic},\ref{a1_boundedness}, \ref{a2_realizability}, and \ref{a4_complexity} with the probability model learned based on Equation \ref{eq:model}, there exists a constant $\kappa(\alpha)$ which depends solely on $\alpha\in (0,1)$, such that with probability $\displaystyle 1-\exp(-\frac{\epsilon N^{\frac{1}{1+\alpha}}}{\kappa(\alpha)D^2R^{\frac{2\alpha}{1+\alpha}}})$,
\begin{align}
L(\hat{\mathcal{P}})&\leq 2\epsilon,
\end{align}
\end{corollary}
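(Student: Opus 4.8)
The plan is to obtain this statement as an immediate specialization of Theorem~\ref{t1_finite_sample}, by making one specific choice of the failure probability $\delta$. Theorem~\ref{t1_finite_sample} already guarantees that, with probability at least $1-\delta$,
\begin{equation*}
L(\hat{\mathcal{P}})\leq \epsilon+\frac{\kappa(\alpha)D^2R^{\frac{2\alpha}{1+\alpha}}\ln(\frac{1}{\delta})}{N^{\frac{1}{1+\alpha}}},
\end{equation*}
for the same constant $\kappa(\alpha)$ that appears in the corollary. The idea is simply to tune $\delta$ so that the second (estimation) term on the right-hand side is at most the approximation level $\epsilon$, which then collapses the bound to $2\epsilon$.

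First I would impose that the estimation term be dominated by $\epsilon$, i.e.
\begin{equation*}
\frac{\kappa(\alpha)D^2R^{\frac{2\alpha}{1+\alpha}}\ln(\frac{1}{\delta})}{N^{\frac{1}{1+\alpha}}}\leq \epsilon .
\end{equation*}
Isolating $\ln(1/\delta)$ turns this into the threshold $\ln(\frac{1}{\delta})\leq \frac{\epsilon N^{\frac{1}{1+\alpha}}}{\kappa(\alpha)D^2R^{\frac{2\alpha}{1+\alpha}}}$, and choosing $\delta$ to saturate it gives exactly $\delta=\exp\!\big(-\frac{\epsilon N^{\frac{1}{1+\alpha}}}{\kappa(\alpha)D^2R^{\frac{2\alpha}{1+\alpha}}}\big)$, which is precisely the failure probability claimed in the corollary.

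Finally I would substitute this choice of $\delta$ back into the high-probability bound of Theorem~\ref{t1_finite_sample}: with probability at least $1-\delta=1-\exp\!\big(-\frac{\epsilon N^{\frac{1}{1+\alpha}}}{\kappa(\alpha)D^2R^{\frac{2\alpha}{1+\alpha}}}\big)$, the estimation term contributes at most $\epsilon$, so $L(\hat{\mathcal{P}})\leq \epsilon+\epsilon=2\epsilon$. Because the derivation is nothing more than a reparametrization of the confidence level, there is no genuine analytical obstacle; the only point requiring care is the bookkeeping of the constant $\kappa(\alpha)$. I would make sure to reuse the very same $\kappa(\alpha)$ produced at the end of the proof of Theorem~\ref{t1_finite_sample} (the one obtained after absorbing the residual $D^2\ln N/N$ and $D/N$ terms into the leading rate), so that the exponent in the stated probability coincides exactly with the coefficient of $\ln(1/\delta)$ in the estimation error.
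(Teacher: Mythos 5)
Your proposal is correct and follows essentially the same route as the paper: the paper's proof also just equates the estimation term $\frac{\kappa(\alpha)D^2R^{\frac{2\alpha}{1+\alpha}}\ln(\frac{1}{\delta})}{N^{\frac{1}{1+\alpha}}}$ with $\epsilon$, which is precisely your choice of $\delta=\exp\bigl(-\frac{\epsilon N^{\frac{1}{1+\alpha}}}{\kappa(\alpha)D^2R^{\frac{2\alpha}{1+\alpha}}}\bigr)$, and then reads off $L(\hat{\mathcal{P}})\leq 2\epsilon$ from Theorem~\ref{t1_finite_sample}. Your additional remark about reusing the same $\kappa(\alpha)$ from the theorem is a correct and worthwhile bookkeeping point that the paper leaves implicit.
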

\label{corollary_mse}
\begin{proof}
This is a straightforward application of Theorem \ref{t1_finite_sample}, where we could just let $\displaystyle \epsilon=\frac{\kappa(\alpha)D^2R^\frac{2\alpha}{1+\alpha}\ln (\frac{1}{\delta})}{N^{\frac{1}{1+\alpha}}}$.
\end{proof}

Next, we consider the local Lipschitz condition of the value function and provide a finite sample bound of the value-aware model error.
\begin{theorem}
\label{t2_vaml}
Under the five assumptions \ref{a0_deterministic},\ref{a1_boundedness}, \ref{a2_realizability}, \ref{a3_lip}, and \ref{a4_complexity}, with the probability model learned based on Equation \ref{eq:model}, there exists a constant $\kappa(\alpha)$ which depends solely on $\alpha\in (0,1)$, such that for any $m>1$,
\begin{align}
\int \big|\mathcal{T}^*Q(s,a)-\widehat{\mathcal{T}}^*Q(s,a)\big|^2 d\rho(s,a)\leq \gamma^2\big[4\epsilon^2 L^2 \xi
+\frac{R_{\max}^2}{(1-\gamma)^2}(1-\xi)\big],
\end{align}
where $\displaystyle \xi=1-\exp(-\frac{\epsilon N^{\frac{1}{1+\alpha}}}{\kappa(\alpha)D^2R^{\frac{2\alpha}{1+\alpha}}})$
\end{theorem}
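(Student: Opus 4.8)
The plan is to start from the chain of inequalities already recorded for the value-aware model error and specialize it to the deterministic setting of Assumption~\ref{a0_deterministic}. Since the reward is assumed known, the two Bellman operators differ only through the transition term, so that
\begin{equation*}
\mathcal{T}^*Q(s,a)-\widehat{\mathcal{T}}^*Q(s,a)=\gamma\Big(V(\mathcal{P}(s,a))-\int\hat{\mathcal{P}}(\hat{s}'|s,a)V(\hat{s}')d\hat{s}'\Big),
\end{equation*}
where $V(s)=\max_a Q(s,a)$ and determinism has collapsed the true-transition integral to the point value $V(\mathcal{P}(s,a))$. Squaring, applying Jensen's inequality to pull the square inside the probability measure $\hat{\mathcal{P}}(\cdot|s,a)$, and integrating against $\rho$ reduces the target to controlling
\begin{equation*}
\gamma^2\,\mathbb{E}_{(s,a)\sim\rho,\;\hat{s}'\sim\hat{\mathcal{P}}(\cdot|s,a)}\big[(V(\hat{s}')-V(\mathcal{P}(s,a)))^2\big].
\end{equation*}

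Next I would exploit the local Lipschitz structure. Since $V=\max_a Q_a$ is a maximum of functions each $(\mathcal{X},2\epsilon)$-locally Lipschitz with constant at most $L$ (Assumption~\ref{a3_lip}), $V$ is itself $(\mathcal{X},2\epsilon)$-locally Lipschitz with constant at most $L$. I would then split the inner expectation according to whether the model draw lands inside the $2\epsilon$-ball around the true next state. On the event $\{\|\hat{s}'-\mathcal{P}(s,a)\|\le 2\epsilon\}$ the local Lipschitz bound gives $(V(\hat{s}')-V(\mathcal{P}(s,a)))^2\le L^2\|\hat{s}'-\mathcal{P}(s,a)\|^2\le 4\epsilon^2 L^2$; on the complementary event I would use boundedness of the value function, $|V|\le R_{\max}/(1-\gamma)$ (inherited from the initialization $\hat{Q}_0\le R_{\max}/(1-\gamma)$ propagated through the contraction), to bound the squared difference by $R_{\max}^2/(1-\gamma)^2$.

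The remaining step is to identify the weights on the two regions with $\xi$ and $1-\xi$. Here I would feed in the finite-sample model-accuracy guarantee: the corollary following Theorem~\ref{t1_finite_sample} states that with probability at least $\xi=1-\exp(-\epsilon N^{1/(1+\alpha)}/(\kappa(\alpha)D^2R^{2\alpha/(1+\alpha)}))$ over the draw of the dataset, the learned kernel satisfies $L(\hat{\mathcal{P}})\le 2\epsilon$. Interpreting $\xi$ as the probability of the accurate-model regime and taking the overall expectation, the accurate regime is weighted by $\xi$ and bounded by $\gamma^2\cdot 4\epsilon^2 L^2$, while the failure regime is weighted by $1-\xi$ and bounded by $\gamma^2 R_{\max}^2/(1-\gamma)^2$; summing yields the claimed inequality.

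The main obstacle I anticipate is the passage from the $L_1$-type model error $L(\hat{\mathcal{P}})=\mathbb{E}[\|\hat{s}'-\mathcal{P}(s,a)\|]$ controlled by Theorem~\ref{t1_finite_sample} to a bound on the \emph{squared} value deviation: a naive Markov argument on the tail $\{\|\hat{s}'-\mathcal{P}(s,a)\|>2\epsilon\}$ only yields a factor $L(\hat{\mathcal{P}})/(2\epsilon)$, which is not small on its own, so the clean $\xi$ versus $1-\xi$ split must be organized around $\xi$ as the high-probability confidence level of the model-learning event rather than a per-sample tail probability. A secondary technical point is ensuring that both $\mathcal{P}(s,a)$ and the perturbed point $\hat{s}'$ lie in the neighborhood over which Assumption~\ref{a3_lip} grants Lipschitzness; I would handle this by noting that the $2\epsilon$ radius in the assumption is matched to the model-accuracy radius, which is precisely why the Lipschitz assumption is stated with $2\epsilon$ rather than $\epsilon$.
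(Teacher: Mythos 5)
Your proposal is correct and follows essentially the same route as the paper's proof: after using determinism and Jensen's inequality, the paper performs exactly your indicator split at radius $2\epsilon$, bounds the near event by $4\epsilon^2 L^2$ via the local Lipschitz assumption and the far event by $R_{\max}^2/(1-\gamma)^2$ via value boundedness, and invokes the corollary to Theorem~\ref{t1_finite_sample} to supply the $\xi$ versus $1-\xi$ weights. Your closing caveat is well taken: the paper's own chain makes precisely the identification you describe, treating $\xi$ as the dataset-level confidence of the event $L(\hat{\mathcal{P}})\leq 2\epsilon$ rather than deriving a per-sample tail probability (which Markov alone would not make small), so your attempt matches the paper's argument, delicate step included.
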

\begin{proof}
\begin{align*}
\|\mathcal{T}^*Q-\widehat{\mathcal{T}}^*Q\|^2_\rho &= \int \Big|r(s,a)+\gamma V(s')-r(s,a)-\gamma\int \hat{\mathcal{P}}(d\hat{s}'|s,a) V(\hat{s}')\Big|^2 d\rho(s,a) \\
&=\gamma^2 \int \Big|\-\int \hat{\mathcal{P}}(d\hat{s}'|s,a) \big(V(\hat{s}')-V(s')\big)\Big|^2 d\rho(s,a) \\
&\leq \gamma^2 \int \int \mathcal{P}(ds'|s,a)\big(V(\hat{s}')-V(s')\big)^2 d\rho(s,a) \\
&\leq \gamma^2 \int \int \ind\{\|s'-\hat{s}'\|\leq 2\epsilon\}\big(V(\hat{s}')-V(s')\big)^2 \hat{\mathcal{P}}(d\hat{s}'|s,a)d\rho(s,a)
\\
&+\gamma^2 \int \int \ind\{\|s'-\hat{s}'\|> 2\epsilon\}\big(V(\hat{s}')-V(s')\big)^2 \hat{\mathcal{P}}(d\hat{s}'|s,a) d\rho(s,a)\\
&\leq \gamma^2 {2}^2\epsilon^2 L^2 \probP\{\|s'-\hat{s}'\|\leq 2\epsilon\}
+ \gamma^2\frac{R_{\max}^2}{(1-\gamma)^2} \probP\{\|s'-\hat{s}'\|> 2\epsilon\}
\end{align*}
Now apply Corollary \ref{corollary_mse},  
\begin{align*}
\|\mathcal{T}^*Q-\widehat{\mathcal{T}}^*Q\|^2_\rho
&\leq \gamma^2 4\epsilon^2 L^2 \big(1-\exp(-\frac{\epsilon N^{\frac{1}{1+\alpha}}}{\kappa(\alpha)D^2R^{\frac{2\alpha}{1+\alpha}}})\big)\\
&+\gamma^2\frac{R_{\max}^2}{(1-\gamma)^2}\exp(-\frac{\epsilon N^{\frac{1}{1+\alpha}}}{\kappa(\alpha)D^2R^{\frac{2\alpha}{1+\alpha}}})
\end{align*}
\end{proof}

Finally, with the value-aware model error bounded, we could apply the error propagation results from \citep{munos2005error, farahmand17itervaml} and prove our main theorem, which relates the local Lipschitz constant to the sub-optimality of the approximate model-based value iteration algorithm.
\begin{theorem}
\label{thm:ap_main}
Suppose $\hat{Q}_0$ is initialized such that  $\displaystyle \hat{Q}_0(s,a)\leq \frac{R_{\max}}{1-\gamma}$ for $\forall\: (s,a)$. Under the assumptions of \ref{a0_deterministic}, \ref{a1_boundedness}, \ref{a2_realizability}, \ref{a3_lip}, and \ref{a4_complexity}, after $K$ iterations of the model-based approximate value iteration algorithm, there exists a constant $\kappa(\alpha)$ which depends solely on $\alpha\in (0,1)$ such that, 
\begin{align}
\mathbb{E}_{(s,a)\sim \mathcal{P}_0}\Big[\Big|Q^*(s,a)-\hat{Q}_K(s,a)\Big|\Big] \leq \frac{2\gamma}{(1-\gamma)^2} \Big[&C(\rho, \mathcal{P}_0)\Big(\max_{0\leq k\leq K}\delta_k+\gamma^2\big(4\epsilon^2 L^2 \xi \nonumber \\
&+\frac{(1-\xi)R_{\max}^2}{(1-\gamma)^2}\big)\Big)+2\gamma^K R_{\max}\Big]
\end{align}
where  $\delta_k^2=\mathcal{L}_{\textit{reg}}(\hat{Q}_k; \hat{Q}_{k-1}, \hat{\mathcal{P}}_k)$ is the regression error defined in Equation~(\ref{eq:reg_error}), 
$\xi=\displaystyle 1-\exp(-\frac{\epsilon N^{\frac{1}{1+\alpha}}}{\kappa(\alpha)D^2R^{\frac{2\alpha}{1+\alpha}}})$, and $C(\rho, \mathcal{P}_0)$ is the concentrability constant defined in Definition~\ref{df:concentrability}.
\end{theorem}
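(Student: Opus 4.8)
The plan is to treat Theorem~\ref{thm:ap_main} as a standard approximate value iteration (AVI) error analysis, in which the two already-established ingredients---the finite-sample value-aware model error of Theorem~\ref{t2_vaml} and the per-iteration regression error $\delta_k$---are fed into the error-propagation machinery of \citet{munos2005error} and \citet{farahmand17itervaml}. At each iteration the algorithm produces a $\hat{Q}_k$ that is only an approximate fixed-point update of the true optimal Bellman operator $\mathcal{T}^*$, and the goal is to control how these one-step errors accumulate into the final gap $\mathbb{E}_{(s,a)\sim\mathcal{P}_0}[|Q^* - \hat{Q}_K|]$.

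First I would isolate the one-step Bellman residual $\varepsilon_k \coloneqq \hat{Q}_k - \mathcal{T}^*\hat{Q}_{k-1}$ and split it by inserting the model-induced operator:
$$\varepsilon_k = \underbrace{\big(\hat{Q}_k - \widehat{\mathcal{T}}^*\hat{Q}_{k-1}\big)}_{\text{regression error}} + \underbrace{\big(\widehat{\mathcal{T}}^*\hat{Q}_{k-1} - \mathcal{T}^*\hat{Q}_{k-1}\big)}_{\text{value-aware model error}}.$$
Taking the $L_2(\rho)$ norm and applying the triangle inequality gives $\|\varepsilon_k\|_{2,\rho} \le \delta_k + \|(\mathcal{T}^* - \widehat{\mathcal{T}}^*)\hat{Q}_{k-1}\|_{2,\rho}$, where the first summand is exactly the square root of the regression objective in Equation~(\ref{eq:reg_error}). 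Since every $\hat{Q}_{k-1}\in\mathcal{F}$ is $(\mathcal{X}, 2\epsilon)$-locally Lipschitz by Assumption~\ref{a3_lip} and the model $\hat{\mathcal{P}}_k$ was fit via Equation~(\ref{eq:model}), the second summand is bounded directly by Theorem~\ref{t2_vaml} (whose proof already invokes Corollary~\ref{corollary_mse} to split the integral over the events $\{\|s'-\hat{s}'\|\le 2\epsilon\}$ and its complement).

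Next I would invoke the AVI error-propagation lemma, which converts a uniform control $\max_{0\le k\le K}\|\varepsilon_k\|_{2,\rho}$ into the weighted-$L_1$ suboptimality on the initial distribution $\mathcal{P}_0$. This step produces the prefactor $\tfrac{2\gamma}{(1-\gamma)^2}$, the change-of-measure concentrability constant $C(\rho,\mathcal{P}_0)$ of Definition~\ref{df:concentrability} (arising from the discounted sum of Radon--Nikodym derivatives relating $\rho$ to the multi-step state distributions), and the geometric initialization term $2\gamma^K R_{\max}$ coming from the bound $\hat{Q}_0 \le R_{\max}/(1-\gamma)$. Substituting the per-iteration bound from the previous step and maximizing over $k$ then yields the stated inequality.

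The main obstacle is the norm bookkeeping at the interface between the two results: the propagation bound is phrased in $L_2(\rho)$ norms of the residuals, so the squared value-aware estimate of Theorem~\ref{t2_vaml} must be combined consistently with $\delta_k$, and the high-probability guarantee of Corollary~\ref{corollary_mse} must hold simultaneously across all $K$ iterations---which is handled cleanly here because the failure probability is folded into the constant $\xi$ rather than left as an event to be union-bounded. A secondary subtlety, as noted in the remarks following the main theorem, is that Assumption~\ref{a0_deterministic} (deterministic transitions) is precisely what lets Theorem~\ref{t2_vaml} reduce the value gap to the indicator split over $\|s'-\hat{s}'\|$; relaxing it would replace the finite-sample term by the realizability error $\mathcal{L}_2(\hat{\mathcal{P}})$ while leaving the propagation structure intact.
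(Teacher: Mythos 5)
Your proposal is correct and takes essentially the same route as the paper: the paper's own proof is a one-line appeal to Theorem~\ref{t2_vaml} combined with the error-propagation result (Theorem~4 of \citealp{farahmand17itervaml}), which is exactly the decomposition of the one-step residual into the regression error $\delta_k$ plus the value-aware model error, followed by AVI propagation, that you spell out. Your remarks on the norm bookkeeping and on $\xi$ absorbing the failure probability (so no union bound over the $K$ iterations is needed) correctly identify the details the paper delegates to the cited result.
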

\begin{proof}
This follows directly from the Theorem \ref{t2_vaml} and also Theorem 4 from \citep{farahmand17itervaml}, where the value-aware model error $\displaystyle e_{\textit{model}}(N)\leq \gamma^2\Big(4\epsilon^2 L^2 \xi
+\frac{R_{\max}^2}{(1-\gamma)^2}(1-\xi)\Big)$

\end{proof}

\section{Implementation Details}
\label{app:implementation}
In this section, we are going to introduce the implementation details of our two proposed methods.
We implement our methods based on a PyTorch implementation of MBPO~\citep{xingyu2022}. 
The dynamics model architecture is MLP with four hidden layers of size 200. In Ant and Humanoid, the hidden size is 400 because these two environments are more complex than others. 
For the probabilistic dynamics model ensemble, we set the ensemble size to 7 which is the setting used in the original paper of MBPO~\citep{janner2019trust}.
The policy is optimized with Soft Actor-Critic (SAC)~\citep{haarnoja2018soft}.
The actor network architecture and the critic network architecture are MLP with two hidden layers of size 256. 

For robust regularization, we fix $\epsilon$ to 0.1 as discussed in Section~\ref{s4s2_robust_reg}, and we do a grid search of $\lambda$ over $[0.01,0.1,1.0]$, with the result presented in Table~\ref{tab:grid_search_rr}
For spectral normalization, we add the normalization on each layer of the critic network. In particular,  at every forward pass, we approximate the spectral radius of the weight matrix with one step of power iteration. The algorithm is sketched below with $\bm{u}$ and $\bm{v}$ being the right and left singular vector of the weight matrix $W$. 
\begin{align}
\bm{v} \leftarrow W\bm{u}^{(t-1)}; \;\; \alpha\leftarrow \|\bm{v}\|; \;\;\;\bm{v}^{(t)}\leftarrow \alpha^{-1}\bm{v}\\
\bm{u} \leftarrow W^T\bm{v}^{(t)}; \;\; \rho\leftarrow \|\bm{v}\|; \;\;\;\bm{u}^{(t)}\leftarrow \rho^{-1}\bm{u}
\end{align}
Then we perform a projection of the parameters: $\displaystyle W\vcentcolon=\max(1, \frac{\max(\alpha, \rho)}{\beta})^{-1}W$. So the spectral norm will be clipped to $\beta$ if it's bigger than $\lambda$, unchanged otherwise.
We do a grid search of $\beta$ over $[15,20,25,30,35]$, with results shown in Table~\ref{tab:grid_search_spec_norm}.
% Settings of $\lambda$ and $\beta$ across all five environments are provided in Table~\ref{tab:hyperparameter}.
For a fair comparison, we set the rollout horizon during model rollouts to 1 in all environments.
%Besides, we use the modified Ant and Humanoid environment introduced by \citet{janner2019trust}.
\begin{table*}[!t]
\vspace{-1em}
\centering
\renewcommand{\arraystretch}{1.1}
\caption{Robust Regularization with different regularization weights $\lambda$}
\resizebox{0.65\textwidth}{!}{%
\setlength{\tabcolsep}{3pt}
\begin{tabular}{|c|c|c|c|c|c|  }
\toprule
%\multirow{2}{*} 
Environment & \thead{Probabilistic \\ Model Ensemble} & \thead{Deterministic \\ Ensemble} & $\lambda=0.01$ &  $\lambda=0.01$ & $\lambda=1.0$ \\
\cline{2-3}
\hline
Walker & 3236 & 4385 & 4138 & \textbf{4515} & 4112     \\
\hline
HalfCheetah & 10519 & 10648 & 10871 & \textbf{10982} & 10763 \\
\hline
Ant   & 1684 & 4290 & 4190 & \textbf{4592} & 4011     \\
\hline
Humanoid & 468  & 4235 & 4210   & 4531  & \textbf{5123} \\
\hline
Hopper& 2432 & 3278 & \textbf{3213} & 3110 & 3010 \\
\hline
\bottomrule
\end{tabular}}
% }
%\vspace{-1.0em}
\label{tab:grid_search_rr}
\end{table*}

\begin{table*}[!t]
\centering
\renewcommand{\arraystretch}{1.1}
\caption{Spectral Normalization with different spectral radius $\beta$}
\resizebox{0.52\textwidth}{!}{%
\setlength{\tabcolsep}{3pt}
\begin{tabular}{|c|c|c|c|c|c|  }
\toprule
%\multirow{2}{*} 
Environment & $\beta=15$ & $\beta=20$ & $\beta=25$ & $\beta=30$ & $\beta=35$ \\
\cline{2-3}
\hline
Walker & 3870 & \textbf{4447} & 4124 & 3982 & 3645    \\
\hline
HalfCheetah & 9931 & 10452 & \textbf{10763} & 10313 & 10423  \\
\hline
Ant  & 3870 & \textbf{4447} & 4124 & 3982 & 3645  \\
\hline
Humanoid & 427 & 662 & 704 & \textbf{712} & 684  \\
\hline
Hopper & 2876 & 3125 & \textbf{3364} &  3243  & 2972 \\
\hline
\bottomrule
\end{tabular}}
% }
\vspace{-0.5em}
\label{tab:grid_search_spec_norm}
\end{table*}
\section{Additional Experiment Results}
\label{app:add_exp}
\subsection{Additional Experimental Results of Value-aware Model Error}
\label{a3:add_vaml}
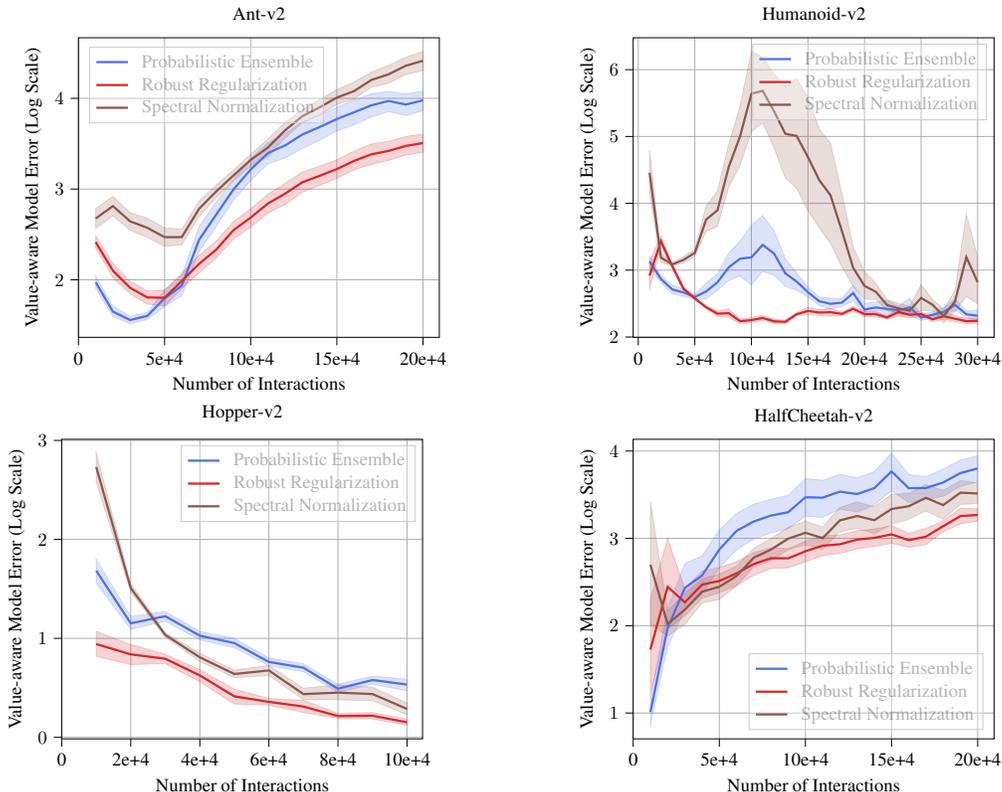
\begin{figure}[!htbp]
\vspace{0em}
%\centering
\centering
\begin{subfigure}[t]{0.45\textwidth}
\centering
  % This file was created with tikzplotlib v0.9.12.
\begin{tikzpicture}[scale=0.7]

\definecolor{color0}{rgb}{0.297,0.445,0.9}
\definecolor{color1}{rgb}{0.83921568627451,0.152941176470588,0.156862745098039}
\definecolor{color2}{rgb}{0.549019607843137,0.337254901960784,0.294117647058824}

\begin{axis}[
legend cell align={left},
legend style={
  fill opacity=0.3,
  draw opacity=1,
  text opacity=1,
  at={(0.03,0.97)},
  anchor=north west,
  draw=white!80!black
},
tick align=outside,
tick pos=left,
title={Ant-v2},
x grid style={white!69.0196078431373!black},
xlabel={Number of Interactions},
xmajorgrids,
xmin=-1, xmax=19.95,
xtick style={color=black},
xtick={-1,4,9,14,19},
xticklabels={0,5e+4,10e+4,15e+4,20e+4},
y grid style={white!69.0196078431373!black},
ylabel={Value-aware Model Error (Log Scale)},
ymajorgrids,
ymin=1.36628168105073, ymax=4.66762234306533,
ytick style={color=black}
]
\path [draw=color0, fill=color0, opacity=0.2]
(axis cs:0,2.03838120004468)
--(axis cs:0,1.90623755515842)
--(axis cs:1,1.593064958193)
--(axis cs:2,1.51634262023321)
--(axis cs:3,1.5622538295288)
--(axis cs:4,1.70471338523099)
--(axis cs:5,1.82542698150625)
--(axis cs:6,2.30718073675421)
--(axis cs:7,2.58983496481708)
--(axis cs:8,2.87819752756695)
--(axis cs:9,3.09302505925097)
--(axis cs:10,3.27617692814279)
--(axis cs:11,3.34275019449838)
--(axis cs:12,3.44735748709997)
--(axis cs:13,3.53709697694799)
--(axis cs:14,3.64087889831236)
--(axis cs:15,3.70770971991553)
--(axis cs:16,3.79343259829778)
--(axis cs:17,3.85297875037391)
--(axis cs:18,3.81098721621295)
--(axis cs:19,3.86335017489763)
--(axis cs:19,4.08230510898971)
--(axis cs:19,4.08230510898971)
--(axis cs:18,4.04748713663627)
--(axis cs:17,4.07772796071982)
--(axis cs:16,4.05016062920835)
--(axis cs:15,3.98151079262885)
--(axis cs:14,3.90308332699848)
--(axis cs:13,3.81541666515765)
--(axis cs:12,3.7399913900018)
--(axis cs:11,3.6114672808773)
--(axis cs:10,3.50882332115242)
--(axis cs:9,3.34094857223294)
--(axis cs:8,3.10513404648726)
--(axis cs:7,2.86160233624732)
--(axis cs:6,2.58705349971248)
--(axis cs:5,2.0484900543839)
--(axis cs:4,1.88855960699433)
--(axis cs:3,1.64145128146386)
--(axis cs:2,1.59686388170086)
--(axis cs:1,1.71350990507237)
--(axis cs:0,2.03838120004468)
--cycle;

\path [draw=color1, fill=color1, opacity=0.2]
(axis cs:0,2.48717153319502)
--(axis cs:0,2.35079857467846)
--(axis cs:1,2.02249475464731)
--(axis cs:2,1.84363843800587)
--(axis cs:3,1.73321883616233)
--(axis cs:4,1.71062410652288)
--(axis cs:5,1.90234688018273)
--(axis cs:6,2.07491164361734)
--(axis cs:7,2.23718269901404)
--(axis cs:8,2.44790640543298)
--(axis cs:9,2.58056559339763)
--(axis cs:10,2.73479328092325)
--(axis cs:11,2.82851406198773)
--(axis cs:12,2.96212068863021)
--(axis cs:13,3.04072074510288)
--(axis cs:14,3.11786373725038)
--(axis cs:15,3.21262626587503)
--(axis cs:16,3.27138810488239)
--(axis cs:17,3.31570396915723)
--(axis cs:18,3.3738308637783)
--(axis cs:19,3.40563986105956)
--(axis cs:19,3.60672757247726)
--(axis cs:19,3.60672757247726)
--(axis cs:18,3.57854972226307)
--(axis cs:17,3.52552731493491)
--(axis cs:16,3.49781781713644)
--(axis cs:15,3.41932039304401)
--(axis cs:14,3.32296058571401)
--(axis cs:13,3.2526470985163)
--(axis cs:12,3.18732256701901)
--(axis cs:11,3.06341867945232)
--(axis cs:10,2.94411647832869)
--(axis cs:9,2.78454682549606)
--(axis cs:8,2.64443943100454)
--(axis cs:7,2.42781914262451)
--(axis cs:6,2.2702173853116)
--(axis cs:5,2.07283927873395)
--(axis cs:4,1.88621600531736)
--(axis cs:3,1.87887836029354)
--(axis cs:2,1.98448290965848)
--(axis cs:1,2.18183591414127)
--(axis cs:0,2.48717153319502)
--cycle;

\path [draw=color2, fill=color2, opacity=0.2]
(axis cs:0,2.78485222674881)
--(axis cs:0,2.56658704217063)
--(axis cs:1,2.70433707188055)
--(axis cs:2,2.54398325505894)
--(axis cs:3,2.46803696093086)
--(axis cs:4,2.36791564385928)
--(axis cs:5,2.38592862352157)
--(axis cs:6,2.69462757207421)
--(axis cs:7,2.90431455837744)
--(axis cs:8,3.09060585523355)
--(axis cs:9,3.25112769511654)
--(axis cs:10,3.3552868808112)
--(axis cs:11,3.56367286535018)
--(axis cs:12,3.73088850508557)
--(axis cs:13,3.82161411131225)
--(axis cs:14,3.92936783396254)
--(axis cs:15,3.99165608032286)
--(axis cs:16,4.12158751540122)
--(axis cs:17,4.16554993772375)
--(axis cs:18,4.26384346136856)
--(axis cs:19,4.30576574827057)
--(axis cs:19,4.51756140388285)
--(axis cs:19,4.51756140388285)
--(axis cs:18,4.45326707424418)
--(axis cs:17,4.36301662404693)
--(axis cs:16,4.28905580108106)
--(axis cs:15,4.17992753892285)
--(axis cs:14,4.0970914083544)
--(axis cs:13,3.98578581526549)
--(axis cs:12,3.87401234616502)
--(axis cs:11,3.73718798342895)
--(axis cs:10,3.55152290197211)
--(axis cs:9,3.38629557846747)
--(axis cs:8,3.20999429078348)
--(axis cs:7,3.04520419437888)
--(axis cs:6,2.86827771563474)
--(axis cs:5,2.5578664211966)
--(axis cs:4,2.57262336125202)
--(axis cs:3,2.68012410718755)
--(axis cs:2,2.73972604362803)
--(axis cs:1,2.92020314867759)
--(axis cs:0,2.78485222674881)
--cycle;

\addplot [very thick, color0]
table {%
0 1.97387114297969
1 1.64931782622824
2 1.55682567481625
3 1.60272049236109
4 1.79598124137684
5 1.93508107181998
6 2.43998161642334
7 2.72022544707291
8 2.99617369949893
9 3.21538918129977
10 3.39675291065129
11 3.48115879678733
12 3.60162379667224
13 3.68327389128719
14 3.77227816305445
15 3.84611838387074
16 3.9224047488101
17 3.9716011317335
18 3.93161404311333
19 3.97751401998562
};
\addlegendentry{Probabilistic Ensemble}
\addplot [very thick, color1]
table {%
0 2.41551837567952
1 2.10041876239553
2 1.91180830374207
3 1.8058847208471
4 1.80180044661685
5 1.98810964296792
6 2.17506655186131
7 2.33235041172927
8 2.54699453036108
9 2.68592814871033
10 2.84031822535535
11 2.94989525207884
12 3.07471406583812
13 3.14649976382836
14 3.22031649302251
15 3.31048934395584
16 3.38274224401974
17 3.42068949966608
18 3.4750724642386
19 3.50735385940108
};
\addlegendentry{Robust Regularization}
\addplot [very thick, color2]
table {%
0 2.67551307717566
1 2.81250230861908
2 2.64415682728786
3 2.5730221667482
4 2.47004350612691
5 2.47094685579435
6 2.78230518341597
7 2.97449888453948
8 3.15021090547661
9 3.32181092769063
10 3.45771960667696
11 3.64858842082256
12 3.80354192003575
13 3.90106852739289
14 4.00824944961248
15 4.07997191155602
16 4.20201796236547
17 4.26471869828296
18 4.35878703400785
19 4.41448084620189
};
\addlegendentry{Spectral Normalization}
\end{axis}

\end{tikzpicture}
 \end{subfigure}
 \hfill
 \begin{subfigure}[t]{0.45\textwidth}
  % This file was created with tikzplotlib v0.9.12.
\begin{tikzpicture}[scale=0.7]

\definecolor{color0}{rgb}{0.297,0.445,0.9}
\definecolor{color1}{rgb}{0.83921568627451,0.152941176470588,0.156862745098039}
\definecolor{color2}{rgb}{0.549019607843137,0.337254901960784,0.294117647058824}

\begin{axis}[
legend cell align={left},
legend style={fill opacity=0.3, draw opacity=1, text opacity=1, draw=white!80!black},
tick align=outside,
tick pos=left,
title={Humanoid-v2},
x grid style={white!69.0196078431373!black},
xlabel={Number of Interactions},
xmajorgrids,
xmin=-1.45, xmax=30.45,
xtick style={color=black},
xtick={-1,4,9,14,19,24,29},
xticklabels={0,5e+4,10e+4,15e+4,20e+4,25e+4,30e+4},
y grid style={white!69.0196078431373!black},
ylabel={Value-aware Model Error (Log Scale)},
ymajorgrids,
ymin=1.99507754985083, ymax=6.47721009881366,
ytick style={color=black}
]
\path [draw=color0, fill=color0, opacity=0.2]
(axis cs:0,3.21933167996323)
--(axis cs:0,3.04066308086508)
--(axis cs:1,2.81267286280633)
--(axis cs:2,2.64316604082825)
--(axis cs:3,2.60882469843139)
--(axis cs:4,2.56154386478864)
--(axis cs:5,2.56859579792229)
--(axis cs:6,2.69735733508232)
--(axis cs:7,2.84249846799915)
--(axis cs:8,2.91237328846558)
--(axis cs:9,2.77570789491714)
--(axis cs:10,2.97902207468269)
--(axis cs:11,2.92447020031277)
--(axis cs:12,2.75682346260126)
--(axis cs:13,2.6993226630075)
--(axis cs:14,2.60088290957678)
--(axis cs:15,2.47111508520298)
--(axis cs:16,2.41610561013549)
--(axis cs:17,2.45595218941969)
--(axis cs:18,2.53923895047458)
--(axis cs:19,2.32696182067988)
--(axis cs:20,2.36550998820768)
--(axis cs:21,2.29667921632921)
--(axis cs:22,2.33071744587533)
--(axis cs:23,2.31658863723778)
--(axis cs:24,2.24764345571054)
--(axis cs:25,2.28432512287377)
--(axis cs:26,2.31618433673497)
--(axis cs:27,2.40149153423666)
--(axis cs:28,2.29114612049974)
--(axis cs:29,2.24238352292767)
--(axis cs:29,2.40229685370848)
--(axis cs:29,2.40229685370848)
--(axis cs:28,2.38827303475824)
--(axis cs:27,2.56478922227287)
--(axis cs:26,2.44013507781908)
--(axis cs:25,2.3766284222841)
--(axis cs:24,2.33383553325251)
--(axis cs:23,2.58246742858468)
--(axis cs:22,2.46378917837069)
--(axis cs:21,2.54462666561202)
--(axis cs:20,2.5223291160935)
--(axis cs:19,2.50025201669928)
--(axis cs:18,2.7729054162798)
--(axis cs:17,2.57687484943452)
--(axis cs:16,2.5791601885449)
--(axis cs:15,2.60023986236974)
--(axis cs:14,2.74811665843364)
--(axis cs:13,2.95524914145748)
--(axis cs:12,3.17392212147808)
--(axis cs:11,3.58945304094803)
--(axis cs:10,3.81516385599353)
--(axis cs:9,3.67596020986973)
--(axis cs:8,3.43498291242962)
--(axis cs:7,3.25987424999397)
--(axis cs:6,2.94892150813523)
--(axis cs:5,2.80553451822262)
--(axis cs:4,2.64079181325843)
--(axis cs:3,2.71932903912772)
--(axis cs:2,2.77775962669012)
--(axis cs:1,2.92062120238125)
--(axis cs:0,3.21933167996323)
--cycle;

\path [draw=color1, fill=color1, opacity=0.2]
(axis cs:0,3.13927765797731)
--(axis cs:0,2.71444120980915)
--(axis cs:1,3.35878934449173)
--(axis cs:2,3.02707664541718)
--(axis cs:3,2.68425179820594)
--(axis cs:4,2.55838625195089)
--(axis cs:5,2.42791640964738)
--(axis cs:6,2.30725824717148)
--(axis cs:7,2.28982331963164)
--(axis cs:8,2.20002222860797)
--(axis cs:9,2.2187136478986)
--(axis cs:10,2.24460960458223)
--(axis cs:11,2.20146833351234)
--(axis cs:12,2.19881084753096)
--(axis cs:13,2.32029484696115)
--(axis cs:14,2.33352017678431)
--(axis cs:15,2.3232863837448)
--(axis cs:16,2.31176472201484)
--(axis cs:17,2.31878887424621)
--(axis cs:18,2.3883890970914)
--(axis cs:19,2.30336002119879)
--(axis cs:20,2.30120613637571)
--(axis cs:21,2.26861725531182)
--(axis cs:22,2.30468771959848)
--(axis cs:23,2.286704995961)
--(axis cs:24,2.30725409582186)
--(axis cs:25,2.22712459290107)
--(axis cs:26,2.26015246037364)
--(axis cs:27,2.23083228209417)
--(axis cs:28,2.20039478915658)
--(axis cs:29,2.20444267851505)
--(axis cs:29,2.27592319298349)
--(axis cs:29,2.27592319298349)
--(axis cs:28,2.27944530947598)
--(axis cs:27,2.31814170210102)
--(axis cs:26,2.36632580239539)
--(axis cs:25,2.30161928738914)
--(axis cs:24,2.37538707930482)
--(axis cs:23,2.37751068442341)
--(axis cs:22,2.4446742860504)
--(axis cs:21,2.3152481572978)
--(axis cs:20,2.37998224987709)
--(axis cs:19,2.38394211672584)
--(axis cs:18,2.456129824635)
--(axis cs:17,2.37287957714457)
--(axis cs:16,2.42674008682189)
--(axis cs:15,2.41012020514561)
--(axis cs:14,2.44760235674873)
--(axis cs:13,2.36106632616608)
--(axis cs:12,2.25326898234586)
--(axis cs:11,2.26520303449669)
--(axis cs:10,2.32286580826292)
--(axis cs:9,2.28987640923752)
--(axis cs:8,2.27159913065526)
--(axis cs:7,2.41820027541273)
--(axis cs:6,2.39030693105509)
--(axis cs:5,2.46148747767003)
--(axis cs:4,2.59733249023382)
--(axis cs:3,2.75448359053847)
--(axis cs:2,3.09819002352723)
--(axis cs:1,3.51343308402299)
--(axis cs:0,3.13927765797731)
--cycle;

\path [draw=color2, fill=color2, opacity=0.2]
(axis cs:0,4.7961515046359)
--(axis cs:0,4.15706777124714)
--(axis cs:1,3.11480882208192)
--(axis cs:2,3.06333639024756)
--(axis cs:3,3.10353650167801)
--(axis cs:4,3.20800940585031)
--(axis cs:5,3.58070733879931)
--(axis cs:6,3.65096039796606)
--(axis cs:7,4.2409617463187)
--(axis cs:8,4.56443098276454)
--(axis cs:9,5.07371868265338)
--(axis cs:10,5.20410433793362)
--(axis cs:11,4.83258281304799)
--(axis cs:12,4.37484425209767)
--(axis cs:13,4.2213573116156)
--(axis cs:14,3.88915677654452)
--(axis cs:15,3.63320785751676)
--(axis cs:16,3.39749868061294)
--(axis cs:17,3.04895762018438)
--(axis cs:18,2.75945056949479)
--(axis cs:19,2.56443639278039)
--(axis cs:20,2.56227496309151)
--(axis cs:21,2.39465212984344)
--(axis cs:22,2.35507147456177)
--(axis cs:23,2.32959595999248)
--(axis cs:24,2.3931007859586)
--(axis cs:25,2.39843859623369)
--(axis cs:26,2.2133195568475)
--(axis cs:27,2.3815378715542)
--(axis cs:28,2.60775648588337)
--(axis cs:29,2.49920181848077)
--(axis cs:29,3.19494476460148)
--(axis cs:29,3.19494476460148)
--(axis cs:28,3.84466793843934)
--(axis cs:27,2.69756965274165)
--(axis cs:26,2.41541490807653)
--(axis cs:25,2.56619281768215)
--(axis cs:24,2.78359315422482)
--(axis cs:23,2.45960949474871)
--(axis cs:22,2.50263627818376)
--(axis cs:21,2.56124999504339)
--(axis cs:20,2.80183676735081)
--(axis cs:19,2.98201093539288)
--(axis cs:18,3.34807669498956)
--(axis cs:17,4.15116151394873)
--(axis cs:16,4.90014215456041)
--(axis cs:15,5.11249726079268)
--(axis cs:14,5.58747860017694)
--(axis cs:13,5.86626296725903)
--(axis cs:12,5.74552268118024)
--(axis cs:11,5.93669753957249)
--(axis cs:10,6.18462537672623)
--(axis cs:9,6.27347680113353)
--(axis cs:8,5.45575587628316)
--(axis cs:7,4.89291385043115)
--(axis cs:6,4.18421315220594)
--(axis cs:5,3.9628496854489)
--(axis cs:4,3.3107268336201)
--(axis cs:3,3.20539723273076)
--(axis cs:2,3.10773296457592)
--(axis cs:1,3.26079898018806)
--(axis cs:0,4.7961515046359)
--cycle;

\addplot [very thick, color0]
table {%
0 3.12809646542603
1 2.86883667822653
2 2.71200412150126
3 2.66553330550798
4 2.59926845176974
5 2.68051826351572
6 2.81708732188825
7 3.04356875436609
8 3.17089662345071
9 3.19071652105227
10 3.37833870817943
11 3.24866355406833
12 2.95140944684119
13 2.82866457067056
14 2.67123379912031
15 2.5343867154829
16 2.49644718514929
17 2.51390382546295
18 2.65817397497839
19 2.41015728423578
20 2.44188428448239
21 2.4187637712715
22 2.40019095513268
23 2.44269473869775
24 2.28965568883491
25 2.33145761958767
26 2.37638579505957
27 2.48074282271293
28 2.33924305701946
29 2.31798715214987
};
\addlegendentry{Probabilistic Ensemble}
\addplot [very thick, color1]
table {%
0 2.91840678765431
1 3.43374744055041
2 3.06235266274043
3 2.72062596487657
4 2.57813173785291
5 2.44497938925809
6 2.34859036984299
7 2.35640330105543
8 2.23687130578786
9 2.25290011003145
10 2.28459514073598
11 2.23440602586307
12 2.22662785413869
13 2.34073285807958
14 2.3897602442279
15 2.36575234029468
16 2.37085590990826
17 2.3459968942322
18 2.42173409643843
19 2.34136138184643
20 2.34010640640774
21 2.29181872350073
22 2.37421218276739
23 2.33171063109901
24 2.34195837801315
25 2.26413555888282
26 2.31112770869072
27 2.2734009802049
28 2.2368332649141
29 2.24002299120245
};
\addlegendentry{Robust Regularization}
\addplot [very thick, color2]
table {%
0 4.45755156371786
1 3.18896414522552
2 3.086569460011
3 3.15403768077181
4 3.25922267392506
5 3.75729772410085
6 3.89530211030968
7 4.546551900638
8 5.00223988285934
9 5.63756892557021
10 5.68287802646575
11 5.37823297451491
12 5.04105920557461
13 5.00752003381236
14 4.69156828824364
15 4.34675577115833
16 4.1188575530441
17 3.58617135877769
18 3.03408927910289
19 2.76395233919584
20 2.67608944457311
21 2.47724962002195
22 2.42727590153602
23 2.39359913380675
24 2.58483231943606
25 2.48397273906953
26 2.31282633383124
27 2.53274482875088
28 3.19508673801079
29 2.82016863189097
};
\addlegendentry{Spectral Normalization}
\end{axis}

\end{tikzpicture}
 \end{subfigure}
 \vspace{0em}

 \begin{subfigure}[t]{0.45\textwidth}
% This file was created with tikzplotlib v0.9.12.
\begin{tikzpicture}[scale=0.7]

\definecolor{color0}{rgb}{0.297,0.445,0.9}
\definecolor{color1}{rgb}{0.83921568627451,0.152941176470588,0.156862745098039}
\definecolor{color2}{rgb}{0.549019607843137,0.337254901960784,0.294117647058824}

\begin{axis}[
legend cell align={left},
legend style={fill opacity=0.3, draw opacity=1, text opacity=1, draw=white!80!black},
tick align=outside,
tick pos=left,
title={Hopper-v2},
x grid style={white!69.0196078431373!black},
xlabel={Number of Interactions},
xmajorgrids,
xmin=-1, xmax=9.45,
xtick style={color=black},
xtick={-1,1,3,5,7,9},
xticklabels={0,2e+4,4e+4,6e+4,8e+4,10e+4},
y grid style={white!69.0196078431373!black},
ylabel={Value-aware Model Error (Log Scale)},
ymajorgrids,
ymin=-0.0157985823348863, ymax=3.01151759916199,
ytick style={color=black}
]
\path [draw=color0, fill=color0, opacity=0.2]
(axis cs:0,1.80126977307777)
--(axis cs:0,1.55741280162024)
--(axis cs:1,1.09178926613298)
--(axis cs:2,1.17194421604592)
--(axis cs:3,0.983974361163842)
--(axis cs:4,0.906965930154522)
--(axis cs:5,0.722574474902808)
--(axis cs:6,0.662673139473782)
--(axis cs:7,0.45379940731482)
--(axis cs:8,0.536180354471509)
--(axis cs:9,0.469645007127665)
--(axis cs:9,0.587323080984556)
--(axis cs:9,0.587323080984556)
--(axis cs:8,0.620167376201653)
--(axis cs:7,0.535191717270529)
--(axis cs:6,0.744163453241122)
--(axis cs:5,0.802884695789206)
--(axis cs:4,1.00039721285784)
--(axis cs:3,1.069523514057)
--(axis cs:2,1.27142619407408)
--(axis cs:1,1.22226498338747)
--(axis cs:0,1.80126977307777)
--cycle;

\path [draw=color1, fill=color1, opacity=0.2]
(axis cs:0,1.07088169038472)
--(axis cs:0,0.817274066139946)
--(axis cs:1,0.735302364423854)
--(axis cs:2,0.744974421589906)
--(axis cs:3,0.560583244924619)
--(axis cs:4,0.336092792718301)
--(axis cs:5,0.32573581621571)
--(axis cs:6,0.250218901299865)
--(axis cs:7,0.189677351436226)
--(axis cs:8,0.184767638540691)
--(axis cs:9,0.121806698642245)
--(axis cs:9,0.182904053281285)
--(axis cs:9,0.182904053281285)
--(axis cs:8,0.25015017211514)
--(axis cs:7,0.241753150390556)
--(axis cs:6,0.374388738200404)
--(axis cs:5,0.392266249001259)
--(axis cs:4,0.487532053311383)
--(axis cs:3,0.68335828259705)
--(axis cs:2,0.839609103661529)
--(axis cs:1,0.93763489350973)
--(axis cs:0,1.07088169038472)
--cycle;

\path [draw=color2, fill=color2, opacity=0.2]
(axis cs:0,2.87391231818486)
--(axis cs:0,2.58034383637374)
--(axis cs:1,1.46795283503574)
--(axis cs:2,1.01111151871404)
--(axis cs:3,0.76636448418979)
--(axis cs:4,0.602797987820143)
--(axis cs:5,0.6239711991591)
--(axis cs:6,0.380816658208164)
--(axis cs:7,0.378440214498634)
--(axis cs:8,0.369953377215167)
--(axis cs:9,0.227854853538097)
--(axis cs:9,0.346663014157944)
--(axis cs:9,0.346663014157944)
--(axis cs:8,0.50587517942349)
--(axis cs:7,0.530099135667342)
--(axis cs:6,0.496225222322882)
--(axis cs:5,0.726674098746567)
--(axis cs:4,0.678266212572837)
--(axis cs:3,0.84922890097459)
--(axis cs:2,1.05754771107491)
--(axis cs:1,1.55073111718137)
--(axis cs:0,2.87391231818486)
--cycle;

\addplot [very thick, color0]
table {%
0 1.68329855660104
1 1.15175016072287
2 1.22372925706382
3 1.02682798014595
4 0.952575681409834
5 0.761820802069886
6 0.703559683233842
7 0.491091288651815
8 0.578906830792988
9 0.533165838406124
};
\addlegendentry{Probabilistic Ensemble}
\addplot [very thick, color1]
table {%
0 0.942424483494981
1 0.838351903121697
2 0.79354907099801
3 0.625760929201986
4 0.413760887620008
5 0.358275233038072
6 0.310528187933256
7 0.214739836510044
8 0.218065785899164
9 0.151672409490635
};
\addlegendentry{Robust Regularization}
\addplot [very thick, color2]
table {%
0 2.73118252935153
1 1.50842555138058
2 1.03496453883604
3 0.808365718840827
4 0.63979537864192
5 0.675531886556632
6 0.436728274457925
7 0.451324829773652
8 0.437125665175669
9 0.286743316002161
};
\addlegendentry{Spectral Normalization}
\end{axis}

\end{tikzpicture} 
 \end{subfigure}
 \hfill
 \begin{subfigure}[t]{0.45\textwidth}
% This file was created with tikzplotlib v0.9.12.
\begin{tikzpicture}[scale=0.7]

\definecolor{color0}{rgb}{0.297,0.445,0.9}
\definecolor{color1}{rgb}{0.83921568627451,0.152941176470588,0.156862745098039}
\definecolor{color2}{rgb}{0.549019607843137,0.337254901960784,0.294117647058824}

\begin{axis}[
legend cell align={left},
legend style={
  fill opacity=0.3,
  draw opacity=1,
  text opacity=1,
  at={(0.97,0.03)},
  anchor=south east,
  draw=white!80!black
},
tick align=outside,
tick pos=left,
title={HalfCheetah-v2},
x grid style={white!69.0196078431373!black},
xlabel={Number of Interactions},
xmajorgrids,
xmin=-1, xmax=19.95,
xtick style={color=black},
xtick={-1,4,9,14,19},
xticklabels={0,5e+4,10e+4,15e+4,20e+4},
y grid style={white!69.0196078431373!black},
ylabel={Value-aware Model Error (Log Scale)},
ymajorgrids,
ymin=0.70371645025238, ymax=4.13471284707665,
ytick style={color=black}
]
\path [draw=color0, fill=color0, opacity=0.2]
(axis cs:0,1.17256491378547)
--(axis cs:0,0.85967083192621)
--(axis cs:1,1.75376372043168)
--(axis cs:2,2.18591026836314)
--(axis cs:3,2.35945380719646)
--(axis cs:4,2.66397922275124)
--(axis cs:5,2.86596130038531)
--(axis cs:6,2.9917322805985)
--(axis cs:7,3.06241631781946)
--(axis cs:8,3.11042995947213)
--(axis cs:9,3.25515172066261)
--(axis cs:10,3.23994335320745)
--(axis cs:11,3.33573179902066)
--(axis cs:12,3.30203006813357)
--(axis cs:13,3.38080000763258)
--(axis cs:14,3.54384241646796)
--(axis cs:15,3.40970557231623)
--(axis cs:16,3.44522293126204)
--(axis cs:17,3.48120876474359)
--(axis cs:18,3.59389435175131)
--(axis cs:19,3.64788895062078)
--(axis cs:19,3.94718353611874)
--(axis cs:19,3.94718353611874)
--(axis cs:18,3.89781380975991)
--(axis cs:17,3.79405416691039)
--(axis cs:16,3.70874084017404)
--(axis cs:15,3.7316816814344)
--(axis cs:14,3.97875846540282)
--(axis cs:13,3.76374054746095)
--(axis cs:12,3.69447510704204)
--(axis cs:11,3.73298702257757)
--(axis cs:10,3.66596780385382)
--(axis cs:9,3.68382448313705)
--(axis cs:8,3.49366084225702)
--(axis cs:7,3.44911752678846)
--(axis cs:6,3.39124239337738)
--(axis cs:5,3.28671848414345)
--(axis cs:4,3.09605992709637)
--(axis cs:3,2.80043212580402)
--(axis cs:2,2.71393707758974)
--(axis cs:1,2.20342564297428)
--(axis cs:0,1.17256491378547)
--cycle;

\path [draw=color1, fill=color1, opacity=0.2]
(axis cs:0,2.30431641493766)
--(axis cs:0,1.22492902272671)
--(axis cs:1,1.95714516261141)
--(axis cs:2,2.1133179355834)
--(axis cs:3,2.31238120629603)
--(axis cs:4,2.37791482794274)
--(axis cs:5,2.45611396166251)
--(axis cs:6,2.57723348228945)
--(axis cs:7,2.65423903128461)
--(axis cs:8,2.66247924539682)
--(axis cs:9,2.73138350135783)
--(axis cs:10,2.80102731290643)
--(axis cs:11,2.82728452415126)
--(axis cs:12,2.87897864938515)
--(axis cs:13,2.90789005000587)
--(axis cs:14,2.94312451025689)
--(axis cs:15,2.89812346080325)
--(axis cs:16,2.92858031765679)
--(axis cs:17,3.07789412348794)
--(axis cs:18,3.16622010144298)
--(axis cs:19,3.1964935989431)
--(axis cs:19,3.33833786615737)
--(axis cs:19,3.33833786615737)
--(axis cs:18,3.34787647177963)
--(axis cs:17,3.20161581634541)
--(axis cs:16,3.11432627652723)
--(axis cs:15,3.05669195016962)
--(axis cs:14,3.14738884851147)
--(axis cs:13,3.10550742766101)
--(axis cs:12,3.08933332483645)
--(axis cs:11,3.03855773199219)
--(axis cs:10,3.02850174270846)
--(axis cs:9,2.96918240563575)
--(axis cs:8,2.88186587360794)
--(axis cs:7,2.89165142135407)
--(axis cs:6,2.8367703882889)
--(axis cs:5,2.73309554600488)
--(axis cs:4,2.66692376999262)
--(axis cs:3,2.62505731502859)
--(axis cs:2,2.42636092430535)
--(axis cs:1,2.99918971092282)
--(axis cs:0,2.30431641493766)
--cycle;

\path [draw=color2, fill=color2, opacity=0.2]
(axis cs:0,3.42062437428383)
--(axis cs:0,2.01394315321788)
--(axis cs:1,1.88183578176269)
--(axis cs:2,2.01431715483207)
--(axis cs:3,2.25934098401421)
--(axis cs:4,2.29911495642041)
--(axis cs:5,2.45483377636959)
--(axis cs:6,2.67285203486322)
--(axis cs:7,2.74317077965167)
--(axis cs:8,2.86900823570949)
--(axis cs:9,2.92856646094148)
--(axis cs:10,2.87397571682447)
--(axis cs:11,3.07319286962176)
--(axis cs:12,3.11480250839612)
--(axis cs:13,3.05417031043098)
--(axis cs:14,3.16092956191273)
--(axis cs:15,3.21936944316242)
--(axis cs:16,3.31442011708015)
--(axis cs:17,3.21822827703981)
--(axis cs:18,3.38792745677037)
--(axis cs:19,3.39904188148967)
--(axis cs:19,3.63448580245963)
--(axis cs:19,3.63448580245963)
--(axis cs:18,3.66232230617087)
--(axis cs:17,3.54965536938094)
--(axis cs:16,3.61656517526219)
--(axis cs:15,3.51706156874543)
--(axis cs:14,3.50239897622058)
--(axis cs:13,3.36221376853606)
--(axis cs:12,3.41707411731908)
--(axis cs:11,3.35372986207153)
--(axis cs:10,3.1310561924921)
--(axis cs:9,3.19703815273124)
--(axis cs:8,3.1253911333569)
--(axis cs:7,2.99927245136449)
--(axis cs:6,2.88641678416278)
--(axis cs:5,2.68026945498858)
--(axis cs:4,2.59144281243542)
--(axis cs:3,2.53481404789975)
--(axis cs:2,2.37079055364597)
--(axis cs:1,2.16873813753131)
--(axis cs:0,3.42062437428383)
--cycle;

\addplot [very thick, color0]
table {%
0 1.01063027598244
1 1.97378811658345
2 2.43306922532118
3 2.57526560023742
4 2.87216497564212
5 3.08515698888699
6 3.19342153223363
7 3.26083096588818
8 3.29954972117086
9 3.47092323291291
10 3.46640588270214
11 3.53557143285438
12 3.50679932001999
13 3.5752549865375
14 3.76661113906637
15 3.57457365393192
16 3.57678469970613
17 3.64065276410572
18 3.7481737322014
19 3.80072828394505
};
\addlegendentry{Probabilistic Ensemble}
\addplot [very thick, color1]
table {%
0 1.72643415828232
1 2.44577073806393
2 2.26749739594223
3 2.47027375913475
4 2.51210098653638
5 2.59866909703892
6 2.70635470023758
7 2.77228054786247
8 2.7713925496962
9 2.85230305722214
10 2.91624355901587
11 2.93214437652774
12 2.98536202631365
13 3.00623944911636
14 3.04612871270632
15 2.97898253229062
16 3.01955979986288
17 3.1402334288263
18 3.25502297009328
19 3.26829910960201
};
\addlegendentry{Robust Regularization}
\addplot [very thick, color2]
table {%
0 2.69896778734775
1 2.01846231691619
2 2.18379093724918
3 2.38907468284845
4 2.44603029706405
5 2.56853025964061
6 2.77846901258054
7 2.87166669866127
8 2.99676553473737
9 3.06409824552084
10 3.00214939515718
11 3.205845120296
12 3.25622297380949
13 3.2078724706883
14 3.33645300971716
15 3.36846573812135
16 3.46445939813922
17 3.3811729731661
18 3.52371822781241
19 3.513856116141
};
\addlegendentry{Spectral Normalization}
\end{axis}

\end{tikzpicture} 
 \end{subfigure}
 %\quad
 \vspace{0em}
 \caption{
 Comparison between the proposed mechanisms and the probabilistic ensemble baseline in terms of Value-aware Model Error (Log Scale) 
 }
\label{fig:app_vaml}
\vspace{-0.5em}
\end{figure}
In Section \ref{s5s2:vaml} and \ref{s5s3:rr_sn}, we demonstrate the effectiveness of robust regularization in controlling the value-aware model error on Walker and also show the limitation of constraining the global Lipschitz constant by spectral normalization. Here, we provide the results of value-aware model error in the rest of the four environments. We used the same best hyperparameters reported in Table~\ref{tab:grid_search_rr} and Table~\ref{tab:grid_search_spec_norm}, which have the best performance under grid search. Once again, we observe that robust regularization effectively reduces the value-aware model error by constraining the local Lipschitz condition with computing adversarial perturbation. In addition, we also see that global Lipschitz constraints are too strong for spectral normalization. In two more complicated environments, Ant and Humanoid, it has to sacrifice value-aware model error for the expressive power of the value function. Therefore, spectral normalization does not achieve a good performance in these two environments. However, in two easier environments, Hopper and HalfCheetah, spectral normalization could still effectively reduce the value-aware model error and has good empirical performance.
\subsection{Proposed Mechanisms with Single Probabilistic Model}
\label{a3:robust_probabilistic}
\begin{figure}[!htbp]
\vspace{-0.5em}
\centering
 \begin{subfigure}[t]{0.45\textwidth}
  \centering
  % This file was created with tikzplotlib v0.9.12.
\begin{tikzpicture}[scale=0.7]

\definecolor{color0}{rgb}{0.549019607843137,0.337254901960784,0.294117647058824}
\definecolor{color1}{rgb}{0.83921568627451,0.152941176470588,0.156862745098039}
\definecolor{color2}{rgb}{0.95,0.5,0.3}
\definecolor{color3}{rgb}{0.25,0.41,0.99}
\definecolor{color4}{rgb}{0.172549019607843,0.627450980392157,0.172549019607843}

\begin{axis}[
legend cell align={left},
legend style={
  fill opacity=0.3,
  draw opacity=1,
  text opacity=1,
  at={(0.03,0.97)},
  anchor=north west,
  draw=white!80!black
},
tick align=outside,
tick pos=left,
x grid style={white!69.0196078431373!black},
xlabel={Number of Interactions},
xmajorgrids,
xmin=-1, xmax=19.95,
xtick style={color=black},
xtick={-1,4,9,14,19},
xticklabels={0,5e+4,10e+4,15e+4,20e+4},
y grid style={white!69.0196078431373!black},
ylabel={Performance},
ymajorgrids,
ymin=351.601733333333, ymax=5721.3456,
ytick style={color=black}
]
\path [draw=color0, fill=color0, opacity=0.2]
(axis cs:0,720.5585)
--(axis cs:0,697.260833333333)
--(axis cs:1,852.406)
--(axis cs:2,855.339833333333)
--(axis cs:3,1045.217)
--(axis cs:4,1234.09)
--(axis cs:5,1504.6965)
--(axis cs:6,1803.24883333333)
--(axis cs:7,2122.5195)
--(axis cs:8,2356.77816666667)
--(axis cs:9,2557.882)
--(axis cs:10,2844.083)
--(axis cs:11,3220.017)
--(axis cs:12,3322.88083333333)
--(axis cs:13,3455.19133333333)
--(axis cs:14,3479.41616666667)
--(axis cs:15,3654.134)
--(axis cs:16,3713.23716666667)
--(axis cs:17,3958.63083333333)
--(axis cs:18,4094.92683333333)
--(axis cs:19,4007.66616666667)
--(axis cs:19,4554.19433333333)
--(axis cs:19,4554.19433333333)
--(axis cs:18,4594.81683333333)
--(axis cs:17,4481.73033333333)
--(axis cs:16,4234.70483333333)
--(axis cs:15,4092.74816666667)
--(axis cs:14,3909.90033333333)
--(axis cs:13,3866.92683333333)
--(axis cs:12,3730.73883333333)
--(axis cs:11,3559.51916666667)
--(axis cs:10,3171.361)
--(axis cs:9,2901.44316666667)
--(axis cs:8,2700.0875)
--(axis cs:7,2332.0855)
--(axis cs:6,1961.36016666667)
--(axis cs:5,1674.56933333333)
--(axis cs:4,1379.24366666667)
--(axis cs:3,1152.53883333333)
--(axis cs:2,872.994333333333)
--(axis cs:1,866.737833333333)
--(axis cs:0,720.5585)
--cycle;

\path [draw=color1, fill=color1, opacity=0.2]
(axis cs:0,779.5835)
--(axis cs:0,769.859333333333)
--(axis cs:1,723.173333333334)
--(axis cs:2,843.5305)
--(axis cs:3,878.323833333333)
--(axis cs:4,933.2)
--(axis cs:5,1216.55266666667)
--(axis cs:6,1523.04516666667)
--(axis cs:7,1868.56333333333)
--(axis cs:8,2468.72)
--(axis cs:9,2702.33633333333)
--(axis cs:10,3282.48)
--(axis cs:11,3513.45466666667)
--(axis cs:12,3709.36783333333)
--(axis cs:13,4154.1715)
--(axis cs:14,4440.41)
--(axis cs:15,4653.05166666667)
--(axis cs:16,4676.22)
--(axis cs:17,5006.58333333333)
--(axis cs:18,5143.932)
--(axis cs:19,5344.54)
--(axis cs:19,5477.26633333333)
--(axis cs:19,5477.26633333333)
--(axis cs:18,5309.2535)
--(axis cs:17,5257.92)
--(axis cs:16,4953.17333333333)
--(axis cs:15,5006.3325)
--(axis cs:14,4813.9615)
--(axis cs:13,4471.40566666667)
--(axis cs:12,4163.247)
--(axis cs:11,3875.0155)
--(axis cs:10,3576.85866666667)
--(axis cs:9,3106.8)
--(axis cs:8,2569.0295)
--(axis cs:7,2137.75383333333)
--(axis cs:6,1789.23316666667)
--(axis cs:5,1466.30666666667)
--(axis cs:4,1169.94183333333)
--(axis cs:3,988.286833333333)
--(axis cs:2,860.553333333333)
--(axis cs:1,750.573333333333)
--(axis cs:0,779.5835)
--cycle;

\path [draw=color2, fill=color2, opacity=0.2]
(axis cs:0,746.539333333333)
--(axis cs:0,703.043)
--(axis cs:1,746.297666666666)
--(axis cs:2,908.325166666667)
--(axis cs:3,1002.82666666667)
--(axis cs:4,1162.65283333333)
--(axis cs:5,1554.04616666667)
--(axis cs:6,1575.01566666667)
--(axis cs:7,1987.0145)
--(axis cs:8,2434.76333333333)
--(axis cs:9,2820.42)
--(axis cs:10,3114.00833333333)
--(axis cs:11,3219.54166666667)
--(axis cs:12,3561.02083333333)
--(axis cs:13,3666.329)
--(axis cs:14,3743.0825)
--(axis cs:15,4076.418)
--(axis cs:16,3976.55183333333)
--(axis cs:17,4148.067)
--(axis cs:18,4300.27933333333)
--(axis cs:19,4417.17916666667)
--(axis cs:19,4773.81816666667)
--(axis cs:19,4773.81816666667)
--(axis cs:18,4886.34666666667)
--(axis cs:17,4639.3085)
--(axis cs:16,4455.25333333333)
--(axis cs:15,4631.79)
--(axis cs:14,4268.06516666667)
--(axis cs:13,4329.939)
--(axis cs:12,3946.90916666667)
--(axis cs:11,3777.55166666667)
--(axis cs:10,3682.28483333333)
--(axis cs:9,3356.15333333333)
--(axis cs:8,2871.92016666667)
--(axis cs:7,2495.29)
--(axis cs:6,1910.96783333333)
--(axis cs:5,1787.6705)
--(axis cs:4,1302.67166666667)
--(axis cs:3,1094.59833333333)
--(axis cs:2,976.220833333333)
--(axis cs:1,802.802333333333)
--(axis cs:0,746.539333333333)
--cycle;

\path [draw=color3, fill=color3, opacity=0.2]
(axis cs:0,657.544666666667)
--(axis cs:0,624.419333333333)
--(axis cs:1,797.6425)
--(axis cs:2,824.929333333333)
--(axis cs:3,934.812833333333)
--(axis cs:4,1184.15916666667)
--(axis cs:5,1569.51583333333)
--(axis cs:6,1944.00983333333)
--(axis cs:7,2222.51733333333)
--(axis cs:8,2805.927)
--(axis cs:9,2991.4825)
--(axis cs:10,3035.13)
--(axis cs:11,3641.31516666667)
--(axis cs:12,3651.83583333333)
--(axis cs:13,3679.6715)
--(axis cs:14,3837.87066666667)
--(axis cs:15,4256.79183333333)
--(axis cs:16,4211.67466666667)
--(axis cs:17,4421.60216666667)
--(axis cs:18,4674.54683333333)
--(axis cs:19,4665.48766666667)
--(axis cs:19,5120.42866666667)
--(axis cs:19,5120.42866666667)
--(axis cs:18,5204.35666666667)
--(axis cs:17,4955.84383333333)
--(axis cs:16,4739.78633333333)
--(axis cs:15,4691.77183333333)
--(axis cs:14,4347.01516666667)
--(axis cs:13,4136.66333333333)
--(axis cs:12,4060.8465)
--(axis cs:11,3988.228)
--(axis cs:10,3416.547)
--(axis cs:9,3479.0475)
--(axis cs:8,3218.534)
--(axis cs:7,2427.1505)
--(axis cs:6,2161.68166666667)
--(axis cs:5,1772.73066666667)
--(axis cs:4,1372.012)
--(axis cs:3,1006.08066666667)
--(axis cs:2,845.597166666667)
--(axis cs:1,823.631666666667)
--(axis cs:0,657.544666666667)
--cycle;

\path [draw=color4, fill=color4, opacity=0.2]
(axis cs:0,622.158666666667)
--(axis cs:0,595.681)
--(axis cs:1,731.609)
--(axis cs:2,675.834333333333)
--(axis cs:3,859.083333333333)
--(axis cs:4,663.4615)
--(axis cs:5,745.411)
--(axis cs:6,698.8155)
--(axis cs:7,784.143333333333)
--(axis cs:8,901.004)
--(axis cs:9,1102.8895)
--(axis cs:10,942.542)
--(axis cs:11,1043.21016666667)
--(axis cs:12,1203.89316666667)
--(axis cs:13,1423.51666666667)
--(axis cs:14,1426.5415)
--(axis cs:15,1567.63666666667)
--(axis cs:16,1561.23366666667)
--(axis cs:17,1668.62333333333)
--(axis cs:18,1821.41866666667)
--(axis cs:19,1909.2695)
--(axis cs:19,2185.553)
--(axis cs:19,2185.553)
--(axis cs:18,2266.568)
--(axis cs:17,2047.06666666667)
--(axis cs:16,2019.10283333333)
--(axis cs:15,1801.62433333333)
--(axis cs:14,1557.91466666667)
--(axis cs:13,1483.10266666667)
--(axis cs:12,1385.09433333333)
--(axis cs:11,1266.91083333333)
--(axis cs:10,1147.84466666667)
--(axis cs:9,1270.4905)
--(axis cs:8,998.156)
--(axis cs:7,886.309166666667)
--(axis cs:6,798.1975)
--(axis cs:5,818.171666666667)
--(axis cs:4,747.324)
--(axis cs:3,903.9565)
--(axis cs:2,735.946666666667)
--(axis cs:1,790.170666666667)
--(axis cs:0,622.158666666667)
--cycle;

\addplot [very thick, color0]
table {%
0 708.83179
1 859.576976666667
2 864.30829
3 1095.33326333333
4 1307.35946
5 1590.75043333333
6 1879.29275
7 2225.5867
8 2524.26302333333
9 2731.96645
10 3012.89232333333
11 3384.01491
12 3527.06149666667
13 3653.07816
14 3695.22315666667
15 3864.76092666667
16 3974.70018
17 4213.17128666667
18 4335.13481333333
19 4281.96414
};
\addlegendentry{Probabilistic Ensemble}
\addplot [very thick, color1]
table {%
0 774.527136666667
1 736.445733333333
2 852.236303333333
3 932.09789
4 1045.0669
5 1339.90174666667
6 1657.69921333333
7 1998.60910333333
8 2521.89809666667
9 2904.98042
10 3428.33179333333
11 3695.39565666667
12 3934.22533333333
13 4318.63698
14 4626.28426333333
15 4842.79526666667
16 4818.14630666667
17 5135.0706
18 5227.59845333333
19 5409.74289333333
};
\addlegendentry{RR (Probabilistic)}
\addplot [very thick, color2]
table {%
0 724.387293333333
1 774.178866666666
2 941.740526666666
3 1049.01886
4 1232.99361333333
5 1672.48011666667
6 1742.57169
7 2241.18381666667
8 2657.13046
9 3098.99550666667
10 3406.06659333333
11 3485.35640333333
12 3742.83055333333
13 4001.65156
14 3997.46729666667
15 4352.00017
16 4225.57134666667
17 4390.25054333333
18 4585.58409333333
19 4597.43649333333
};
\addlegendentry{RR (Deterministic)}
\addplot [very thick, color3]
table {%
0 641.677866666667
1 811.54136
2 835.063003333333
3 969.339673333333
4 1277.37287
5 1669.67583333333
6 2048.77485333333
7 2325.35166
8 3001.09462
9 3230.27418
10 3218.51289
11 3808.50608
12 3852.14285
13 3913.02562333333
14 4110.86612
15 4474.04845666667
16 4492.1824
17 4705.81733333333
18 4943.58583666667
19 4896.83578
};
\addlegendentry{SN (Probabilistic)}
\addplot [very thick, color4]
table {%
0 609.15485
1 760.10717
2 705.05116
3 881.55407
4 704.595193333333
5 783.063346666667
6 750.019786666667
7 836.16486
8 947.938873333334
9 1183.27515
10 1041.57480666667
11 1158.80982
12 1295.53258
13 1454.54363666667
14 1492.61379333333
15 1678.71472333333
16 1785.8084
17 1850.84719
18 2051.06960333333
19 2046.26205
};
\addlegendentry{SN (Deterministic)}
\end{axis}

\end{tikzpicture}
  \centering
  \vspace{-1.6em}
    \caption{Performance}
    \label{sfig:robust_prob_perf}
 \end{subfigure}
 \hfill
 \begin{subfigure}[t]{0.45\textwidth}
  \centering
\input{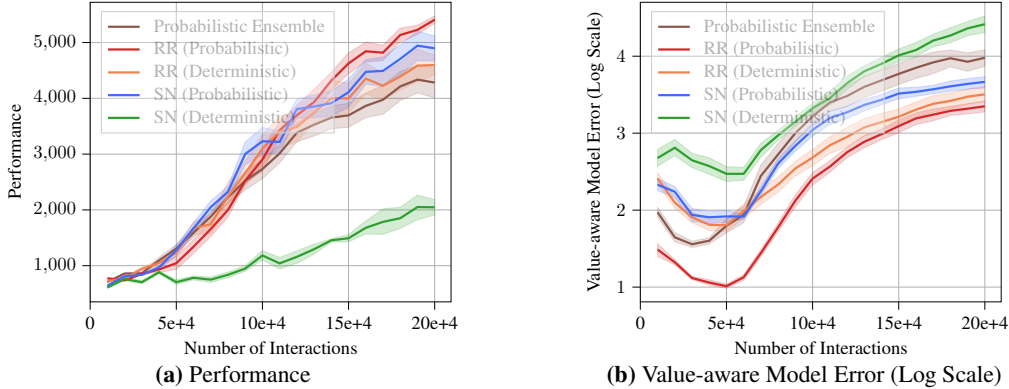}
  \centering
  \vspace{-0.5em}
    \caption{Value-aware Model Error (Log Scale) }
    \label{sfig:robust_prob_vaml}
 \end{subfigure}
 \vspace{-0.8em}
  \caption{Spectral Normalization and Robust Regularization with a single probabilistic model on Ant. RR is short for Robust Regularization, and SN is short for Spectral Normalization
  }
\label{fig:robust_probabilsitic}
\vspace{-0.5em}
\end{figure}
In the experiment section, we combine our proposed mechanisms with a single deterministic model and compare it against MBPO using an ensemble of probabilistic models. The purpose is to verify that regularization of the local Lipschitz constant is critical in MBRL algorithms and propose a computationally efficient MBRL algorithm without a model ensemble. In practice, complementary to our proposed Lipschitz regularization mechanisms, we can also use a single probabilistic model to further regularize the local Lipschitz condition of the value function. 
In addition, training a probabilistic environment model would be better suited for environments with stochastic transitions. 

Here in Figure~\ref{fig:robust_probabilsitic}, we combine our two proposed mechanisms with both a probabilistic and deterministic model on Ant, comparing them with the probabilistic ensemble baseline. 
From Figure~\ref{sfig:robust_prob_vaml}, we see that although spectral normalization with a single deterministic model has a large value-aware model error, it is significantly reduced when combining it with a probabilistic dynamics model. Therefore, we find that spectral normalization with a probabilistic model achieves much better performance and even outperforms MBPO with an ensemble of probabilistic models. For robust regularization, using a probabilistic model also helps improve the algorithm's value-aware model error and performance. This observation suggests that the two Lipschitz regularization approaches, explicit regularization by spectral normalization or robust regularization and implicit regularization by probabilistic models, are complementary. In practice, we can combine the two approaches to get the best performance of the MBRL algorithm.

\subsection{Robust Regularization with FGSM vs. PGD}
\label{a3:pgd_vs_fgsm}
% \begin{wrapfigure}{r}{0.5\textwidth}
% % \vspace{-2em}
%     \centering \input{iclr2023/plot/appendix_plot/fgsm_pgd.tex}
%     \vspace{-1em}
%     \caption{Robust Regularization with PGD-20 against FGSM solver on Walker}
%     \label{fig:fgsm_pgd}
%     \vspace{-1em}
% \end{wrapfigure}
\begin{figure}[!htbp]
\vspace{-1.0em}
  \centering
  % This file was created with tikzplotlib v0.10.1.
\begin{tikzpicture}[scale=0.7]

\definecolor{crimson2143839}{RGB}{214,38,39}
\definecolor{darkgray176}{RGB}{176,176,176}
\definecolor{forestgreen4416044}{RGB}{44,160,44}
\definecolor{lightgray204}{RGB}{204,204,204}

\begin{axis}[
legend cell align={left},
legend style={
  fill opacity=0.2,
  draw opacity=1,
  text opacity=1,
  at={(0.97,0.03)},
  anchor=south east,
  draw=lightgray204
},
tick align=outside,
tick pos=left,
x grid style={darkgray176},
xlabel={Number of Interactions},
xmajorgrids,
xmin=-1, xmax=19.95,
xtick style={color=black},
xtick={-1,4,9,14,19},
xticklabels={0,5e+4,10e+4,15e+4,20e+4},
y grid style={darkgray176},
ylabel={Performance},
ymajorgrids,
ymin=-17.817, ymax=5000,
ytick style={color=black}
]
\path [draw=forestgreen4416044, fill=forestgreen4416044, opacity=0.2]
(axis cs:0,223.920666666667)
--(axis cs:0,203.863333333333)
--(axis cs:1,356.26)
--(axis cs:2,444.656666666667)
--(axis cs:3,524.598333333333)
--(axis cs:4,951.538333333333)
--(axis cs:5,1875.92333333333)
--(axis cs:6,2783.04)
--(axis cs:7,2981.18)
--(axis cs:8,2840.86483333333)
--(axis cs:9,3282.757)
--(axis cs:10,3256.19666666667)
--(axis cs:11,3378.26666666667)
--(axis cs:12,3611.9855)
--(axis cs:13,3939.5805)
--(axis cs:14,4179.1)
--(axis cs:15,3836.66316666667)
--(axis cs:16,3969.49)
--(axis cs:17,4099.9)
--(axis cs:18,3956.5745)
--(axis cs:19,4291.23166666667)
--(axis cs:19,4417.6005)
--(axis cs:19,4417.6005)
--(axis cs:18,4249.65633333333)
--(axis cs:17,4227.60666666667)
--(axis cs:16,4200.95)
--(axis cs:15,4020.53283333333)
--(axis cs:14,4219.54066666667)
--(axis cs:13,4083.183)
--(axis cs:12,3771.26783333333)
--(axis cs:11,3628.15783333333)
--(axis cs:10,3554.22283333333)
--(axis cs:9,3443.36333333333)
--(axis cs:8,3052.01666666667)
--(axis cs:7,3163.5385)
--(axis cs:6,3029.08333333333)
--(axis cs:5,2331.39)
--(axis cs:4,1259.54633333333)
--(axis cs:3,675.301666666667)
--(axis cs:2,494.6)
--(axis cs:1,374.514)
--(axis cs:0,223.920666666667)
--cycle;

\path [draw=crimson2143839, fill=crimson2143839, opacity=0.2]
(axis cs:0,401.259166666667)
--(axis cs:0,360.916666666667)
--(axis cs:1,369.345833333333)
--(axis cs:2,525.26)
--(axis cs:3,583.606666666667)
--(axis cs:4,1056.07333333333)
--(axis cs:5,2068.81933333333)
--(axis cs:6,2649.84)
--(axis cs:7,3000.25116666667)
--(axis cs:8,3817.497)
--(axis cs:9,3929.28333333333)
--(axis cs:10,4042.00566666667)
--(axis cs:11,3828.89666666667)
--(axis cs:12,3896.27)
--(axis cs:13,4230.82666666667)
--(axis cs:14,4322.77666666667)
--(axis cs:15,4485.16816666667)
--(axis cs:16,4306.40666666667)
--(axis cs:17,4478.46333333333)
--(axis cs:18,4583.66666666667)
--(axis cs:19,4324.94666666667)
--(axis cs:19,4564.51)
--(axis cs:19,4564.51)
--(axis cs:18,4634.42666666667)
--(axis cs:17,4637.47)
--(axis cs:16,4522.12666666667)
--(axis cs:15,4592.67)
--(axis cs:14,4497.56666666667)
--(axis cs:13,4320.16)
--(axis cs:12,4182.19666666667)
--(axis cs:11,4182.37333333333)
--(axis cs:10,4093.60766666667)
--(axis cs:9,4155.06333333333)
--(axis cs:8,3950.33333333333)
--(axis cs:7,3252.81333333333)
--(axis cs:6,2813.19516666667)
--(axis cs:5,2409.37333333333)
--(axis cs:4,1322.71266666667)
--(axis cs:3,664.593333333333)
--(axis cs:2,642.516666666667)
--(axis cs:1,405.233333333333)
--(axis cs:0,401.259166666667)
--cycle;

\addplot [very thick, forestgreen4416044]
table {%
0 213.963226666667
1 365.67004
2 470.0784
3 597.647413333333
4 1096.89942
5 2093.77898333333
6 2905.07641333333
7 3069.00657
8 2947.97782333333
9 3364.55596666667
10 3409.31920333333
11 3500.90030666667
12 3691.49001666667
13 4015.7406
14 4200.60360666667
15 3931.11402666667
16 4091.73603666667
17 4165.92043333333
18 4113.79957666667
19 4356.25968666667
};
\addlegendentry{PGD-20}
\addplot [very thick, crimson2143839]
table {%
0 381.364016666667
1 387.773096666667
2 580.70792
3 623.208006666667
4 1193.98430666667
5 2233.55416
6 2731.34439
7 3129.97217
8 3882.61796
9 4042.64086
10 4067.92198666667
11 4007.11041333333
12 4044.55608
13 4277.13172333333
14 4409.17768333333
15 4540.08992333333
16 4416.81493333333
17 4558.56401666667
18 4609.26492666667
19 4444.05740333333
};
\addlegendentry{FGSM}
\end{axis}

\end{tikzpicture}
 \vspace{-1.em}
  \caption{Robust Regularization with 20 steps of Project Gradient Descent (PGD-20) against Fast Gradient Sign Method (FGSM). 
  }
\label{fig:fgsm_pgd}
\end{figure}
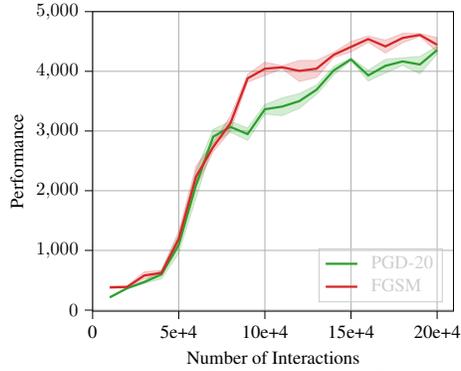
In Figure ~\ref{fig:fgsm_pgd}, we compare the performance of robust regularization with 20 steps of Project Gradient Descent (PGD-20) against the Fast Gradient Sign Method (FGSM) on Walker. In particular, although PGD-20 is much more computationally expensive, we do not observe the improvement with this more powerful constrained optimization solver.

\subsection{Robust Regularization with different regularization weights}
\label{a3:robust_coeff}

\begin{figure}[!htbp]
%\vspace{-2.5em}
  \centering
  % This file was created with tikzplotlib v0.10.1.
\begin{tikzpicture}[scale=0.7]

\definecolor{darkgray176}{RGB}{176,176,176}
\definecolor{steelblue76114176}{RGB}{76,114,176}

\begin{axis}[
tick align=outside,
tick pos=left,
x grid style={darkgray176},
xlabel={Robust Regularization Coefficient \(\displaystyle \lambda\)},
xmajorgrids,
xmin=-0.025, xmax=0.525,
xtick style={color=black},
y grid style={darkgray176},
ylabel={Performance},
ymajorgrids,
ymin=2700, ymax=4593.68434029471,
ytick style={color=black}
]
\path [draw=steelblue76114176, very thick]
(axis cs:0,2953.80082112798)
--(axis cs:0,3519.07417887202);

\path [draw=steelblue76114176, very thick]
(axis cs:0.01,3657.28405521577)
--(axis cs:0.01,4205.09094478423);

\path [draw=steelblue76114176, very thick]
(axis cs:0.03,3811.31657776038)
--(axis cs:0.03,4189.45842223962);

\path [draw=steelblue76114176, very thick]
(axis cs:0.05,3762.05535109418)
--(axis cs:0.05,4515.59464890582);

\path [draw=steelblue76114176, very thick]
(axis cs:0.1,3992.94188669853)
--(axis cs:0.1,4378.60811330147);

\path [draw=steelblue76114176, very thick]
(axis cs:0.2,3791.76141161258)
--(axis cs:0.2,4251.06358838742);

\path [draw=steelblue76114176, very thick]
(axis cs:0.3,3688.143162233)
--(axis cs:0.3,4227.856837767);

\path [draw=steelblue76114176, very thick]
(axis cs:0.4,3630.46686433378)
--(axis cs:0.4,4184.43313566622);

\path [draw=steelblue76114176, very thick]
(axis cs:0.5,3573.8634694389)
--(axis cs:0.5,4216.5115305611);

\addplot [very thick, steelblue76114176, mark=-, mark size=4, mark options={solid}, only marks]
table {%
0 2953.80082112798
0.01 3657.28405521577
0.03 3811.31657776038
0.05 3762.05535109418
0.1 3992.94188669853
0.2 3791.76141161258
0.3 3688.143162233
0.4 3630.46686433378
0.5 3573.8634694389
};
\addplot [very thick, steelblue76114176, mark=-, mark size=4, mark options={solid}, only marks]
table {%
0 3519.07417887202
0.01 4205.09094478423
0.03 4189.45842223962
0.05 4515.59464890582
0.1 4378.60811330147
0.2 4251.06358838742
0.3 4227.856837767
0.4 4184.43313566622
0.5 4216.5115305611
};
\addplot [very thick, steelblue76114176, mark=*, mark size=4, mark options={solid}]
table {%
0 3236.4375
0.01 3931.1875
0.03 4000.3875
0.05 4138.825
0.1 4185.775
0.2 4021.4125
0.3 3958
0.4 3907.45
0.5 3895.1875
};
\end{axis}

\end{tikzpicture}
 \vspace{-1.em}
  \caption{Robust Regularization with different regularization weights $\lambda$'s on Walker. Experiments are all with 8 random seeds.
  }
\label{fig:robust_reg_hyper}
\end{figure}
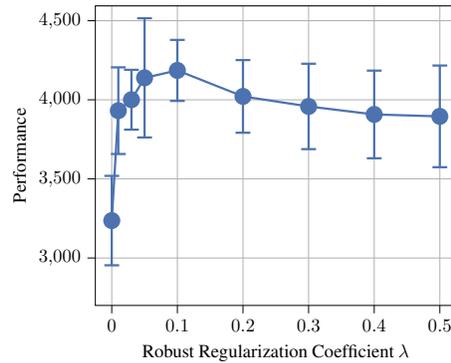
Figure~\ref{fig:robust_reg_hyper} further visualizes how the regularization weight $\lambda$ of robust regularization influences the algorithm performance. Similar to the findings of the experiments on spectral normalization, we see that the algorithm's performance first increases and drops as the regularization weight gets larger. This verifies our theoretical insights from Theorem~\ref{thm:ap_main} that with a small $\lambda$, with algorithm gets less regularization and thus has a big value-aware model error. But meanwhile, it also has a small regression error since the regularization has little effect on the expressive power of the value function. When $\lambda$ goes up, the regularization will have a stronger negative effect on the expressive power of the value function, but the value-aware model error will also get smaller. We observe that the algorithm performs the best with $\lambda=0.1$, achieving the balance between value-aware model error and the value function's expressive power.

\newpage
\section{Additional Details on The Inverted-Pendulum Experiment}
\label{a3:test_exp}
In Section~\ref{s3_theory}, we provide an experiment of model-based value iteration on the Inverted Pendulum to further verify the validity of our theorem. Below we provide the pseudocode for it.

\begin{algorithm}[!htbp]
\caption{Model-based Approximate Value Iteration on Inverted Pendulum}
\label{alg:train}
\begin{algorithmic}
   \STATE $K:$ Total number of iterations for the algorithm
   \STATE $H:$ Number of gradient steps to solve the inner optimization
   \STATE $\mathcal{M}, \mathcal{G}:$ Space of transition probability kernels and reward functions
   \STATE $\mathcal{F}$: Space of value function 
   \STATE $\mu\in \Delta(\mathcal{S}\times \mathcal{A})$: a data-collecting state-action distribution
   \STATE Sample i.i.d from $\mu$ to generate the training dataset $\mathcal{D}=\{(s_i, a_i, s'_i, r_i)\}_{i=1}^{N}$
   \STATE $\displaystyle \hat{\mathcal{P}} \leftarrow \argmin_{\hat{\mathcal{P}}\in \mathcal{M}}\sum_{i=1}^{N}\|s'_i-\int \hat{\mathcal{P}}(ds'|s,a)s'\|^2$
   \STATE $ \displaystyle \hat{r} \leftarrow \argmin_{\hat{r}\in \mathcal{G}} \sum_{i=1}^{N}(r_i-\hat{r}(s_i, a_i))^2$
   \STATE Initialize the value function $\hat{Q}_0$.
   \REPEAT
       \FOR{$k=0$ {\bfseries to} $K-1$}
          \STATE Sample i.i.d $N$ state-action pairs from $\rho: \{(s_i,a_i)\}_{i=1}^{N}$
          \STATE Compute $\hat{s}'_i\sim \mathcal{P}(\cdot|s_i, a_i)$, $\hat{r}_i=\hat{r}(s_i, a_i)$
          \FOR{$t=0$ {\bfseries to} $H-1$}
          \STATE Update policy using gradient ascent with 
          $\displaystyle \frac{1}{N} \sum_{i=1}^{N}\nabla_\theta \hat{Q}_\phi(s_i, \pi_{\theta}(s_i))$
          \ENDFOR
          
          \FOR{$t=0$ {\bfseries to} $H-1$}
          \STATE Update value function using gradient descent with 
          \STATE $\displaystyle \frac{1}{N}\sum_{i=1}^{N}\nabla_\phi \Big(\hat{r}_i+\gamma \hat{Q}_\phi(\hat{s}'_{i}, \pi_\theta(\hat{s}'_i)-\hat{Q}_\phi(s_i, a_i)\Big)^2$
          \ENDFOR
       \ENDFOR
   \UNTIL{end of training}
   \STATE {\bfseries Output:} $\hat{Q}_\phi$
\end{algorithmic}
\end{algorithm}
\newpage
\section{Lipschitz Regularization of Model-free RL algorithms}
\label{app:mf}
\begin{figure}[!htbp]
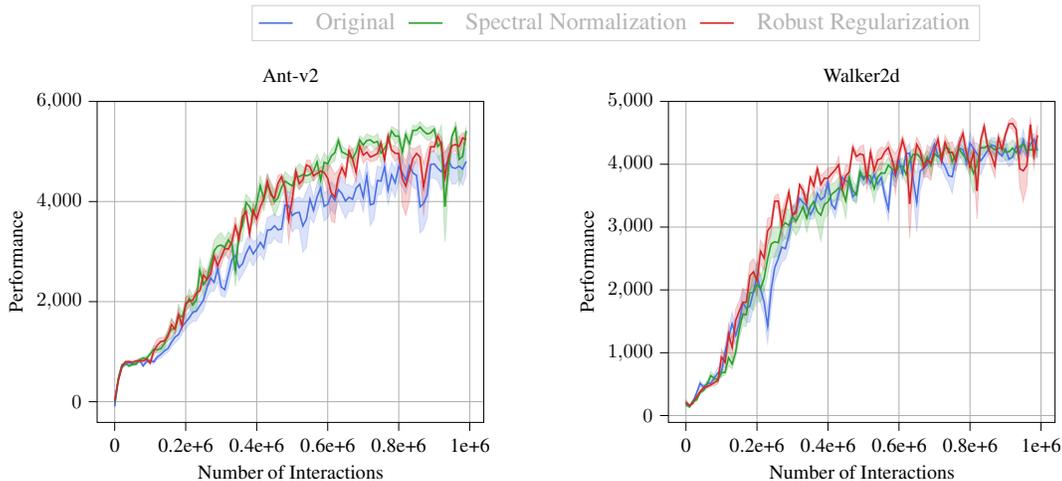

\vspace{-0.5em}
\centering
 \begin{subfigure}[t]{0.45\textwidth}
  \centering
  \input{ant_mf.tex}
  \centering
  \vspace{-1.6em}
    \label{sfig:ant_mf}
 \end{subfigure}
 \hfill
 \begin{subfigure}[t]{0.45\textwidth}
  \centering
\input{walker2d_mf.tex}
  \centering
    \label{sfig:walker_mf}
 \end{subfigure}
 \vspace{-0.8em}
  \caption{
  Combination of robust regularization with model-free SAC 
  }
\label{fig:robust_mf}
\vspace{-0.5em}
\end{figure}

In this paper, we provide both theoretical and empirical insights into why Lipschitz regularization is crucial in model-based RL algorithms through the lens of value-aware model error. On the contrary, the model error vanishes in the model-free setting, so our theoretical insights no longer hold. However, fitting a value function with a smaller Lipschitz constant may still be beneficial for policy optimization and the value prediction of out-of-distribution state-action pairs.

So does the improvement shown in Section~\ref{s5_experiment} indeed come from the controlled value-aware model error, which is unique in the model-based setting? We conduct an experiment on the model-free setting, adding our proposed spectral normallization and robust regularization mechanisms to the model-free SAC respectively with the same hyperparameter settings used in the paper. As shown in Figure~\ref{fig:robust_mf}, combining the Lipschitz regularization mechanisms slightly improves the performance of model-free SAC algorithm. Still, the improvement is far more limited compared with the improvement of the model-based algorithm shown in Figure~\ref{fig:main_experimental_results}. The results suggest that although some additional aspects of value learning could be affected by Lipschitz regularization, our insights into the value-aware model error for model-based scenarios should still hold. The exploration of how Lipschitz regularization impacts other aspects of value learning in the model-free setting is beyond the scope of our work, but it would be an interesting direction for future work.

\end{document}